\documentclass[10pt]{article}

\usepackage{arxiv_styles}

\title{A General $\widetilde{\Omega}(\sqrt{T \gamma_T})$ Lower Bound for Kernel Bandits}

\author{
Chenkai Ma \\
National University of Singapore \\
\texttt{chenkai.ma@u.nus.edu}
\and
Jonathan Scarlett \\
National University of Singapore \\
\texttt{scarlett@comp.nus.edu.sg}
}

\date{}

\begin{document}

\maketitle

\begin{abstract}
    \noindent
    The kernel bandit problem consists of sequentially optimizing an unknown function with noisy feedback, where the function has bounded norm in a given Reproducing Kernel Hilbert Space (RKHS). 
    A central quantity in the regret analysis of kernel bandits is the maximum information gain $\gamma_T$.  In particular, the best existing upper bounds scale as $\sqrt{T\gamma_T}$ up to log factors, and nearly-matching lower bounds have been derived for specific kernels such as squared exponential and Mat\'ern.  However, lower bounds for general kernels are lacking, thus making it unclear in what generality the upper bounds are near-optimal.  In this paper, we establish a general $\Omega(\sqrt{T\gamma_T/\log T})$ minimax regret lower bound for non-constant continuous kernels on compact domains, establishing near-optimality (within log factors) in a very general sense.  We show that the log factor appearing in this bound is unavoidable in general, but that it can be removed under certain conditions.  Among other things, our findings imply that the minimax-optimal scaling is exactly $\Theta(\sqrt{T\gamma_T})$ (i.e., within constant factors) for the Mat\'ern-$\nu$ kernel with $\nu \in (0,2)$, $\gamma$-exponential kernel with $\gamma \in (0,2)$, and certain piecewise-polynomial kernels. 
\end{abstract}

\section{Introduction}
\label{sec:introduction}

The kernel bandit problem consists of sequentially optimizing an unknown function with noisy observations \citep{Srinivas10}. 
We consider the frequentist setting, in which the function has bounded norm in a Reproducing Kernel Hilbert Space (RKHS).
At each round, a learner queries the function at some point, and observes its noisy value. The objective is to minimize cumulative regret compared to the best action.

A central quantity in the regret analysis of kernel bandits is the maximum information gain $\gamma_T$, which measures the largest information gain from $T$ observations under the surrogate Gaussian process (GP) model \citep{Srinivas10,Chowdhury17}. 
The best existing upper bounds for general kernels scale as $\sqrt{T\gamma_T}$ up to logarithmic factors \citep{Valko13,Camilleri21,Salgia21}, and nearly-matching lower bounds have been established for the squared exponential and Mat\'ern kernels \citep{Scarlett17,Cai21}, and extended to other isotropic kernels \citep{Lee25}.  In this paper, we are interested in the following question: \emph{Does $\sqrt{T\gamma_T}$ characterize the minimax regret for general kernels beyond the specific ones that have near-matching lower bounds?}

To address this question, it turns out to be helpful to connect the maximum information gain to the effective dimension, which measures the effective number of feature directions above a certain threshold \citep{Alaoui15,Derezinski20}.  This quantity has frequently appeared in the study of \emph{upper bounds}.  In particular, \citep{Camilleri21} presents the RIPS algorithm based on regularized experimental design, whose regret is related to both the maximum information gain and the effective dimension. More recently, \citep{Janz26} provides sharp estimates for both quantities across broad kernel classes.

Our main contributions are outlined as follows:
\begin{itemize}
    \item We prove an $\Omega(\sqrt{Td_T})$ lower bound on finite domains, where $d_T$ is the effective dimension of a regularized $D$-optimal design, and we extend this to compact domains by a finite approximation. We show that this implies $\Omega(\sqrt{T\gamma_T/\log T})$ regret for every non-constant continuous positive semidefinite kernel on a compact subset of $\mathbb{R}^d$.
    \item We show that the logarithmic factor is unavoidable in general, since on small finite domains, $\gamma_T = \Theta(\log T)$ and the minimax regret is $\Theta(\sqrt{T})$.  Moreover, on $[0,1]^d$, we derive a sufficient condition that leads to $\Omega(\sqrt{T\gamma_T})$ regret, which is satisfied if $\gamma_T = \Theta(T^{\alpha})$ for some $\alpha > 0$, and holds for the Mat\'ern kernel among others.
    \item For comparison, we adapt an upper bound from \citep{Camilleri21} to attain scaling $O(\sqrt{T\gamma_T\log T})$ on $[0,1]^d$ under a suitable H\"{o}lder condition, and sharpen it to $O(\sqrt{T\gamma_T})$ in some kernel settings. The resulting bounds match at $\Theta(\sqrt{T\gamma_T})$ for the Mat\'ern-$\nu$ kernel with $\nu \in (0,2)$, $\gamma$-exponential kernel with $\gamma \in (0,2)$, and $q$-piecewise polynomial kernel with $q = 0, d\geq 3$ or $q = 1$.  For other kernels or general settings that we consider, a gap of $\sqrt{\log T}$ or $\log T$ remains.
\end{itemize}

\subsection{Setup}
\label{sec:setup_finite}

Let $\mathcal{X} \subseteq \mathbb{R}^d$ be nonempty and finite, and let $k: \mathcal{X}\times \mathcal{X} \to \mathbb{R}$ be a non-constant continuous positive-semidefinite kernel satisfying $\sup_{x\in \mathcal{X}}  k(x,x) \leq 1$.
Let $\mathcal{H}_k$ be the RKHS induced by $k$, and write $\phi(x) := k(x,\cdot) \in \mathcal{H}_k$.
Define $\Delta_k := \max_{x,x'\in \mathcal{X}} \norm{\phi(x) - \phi(x')}_{k} $, which is positive since $k$ is non-constant.
For $B > 0$, define $\mathcal{F}_k(B) := \{g \in \mathcal{H}_k: \norm{g}_{k} \leq B \} $. We are interested in optimizing a function $f \in \mathcal{F}_k(B)$.

Let $T\in \mathbb{Z}_{+}$ be the time horizon. At time $t \in [T]$, a possibly randomized adaptive (non-anticipatory) policy $\pi$ chooses $x_t \in \mathcal{X}$ and observes $y_t = f(x_t) + \xi_t$, where $(\xi_t)_{t\geq 1}$ is sampled i.i.d. (independent of the randomness of the policy) from $N(0,\sigma^2)$ for some $\sigma > 0$.
For this policy $\pi$, define the \emph{cumulative regret}
\begin{align}
    R_T(\pi, f) := \sum_{t=1}^T \bigl(\max_{x\in \mathcal{X}} f(x) - f(x_t)\bigr). \label{eq:regret_pi_f}
\end{align}
The \emph{minimax regret} is then defined as
\begin{align}
    R_T^{*}
    := \inf_{\pi} \sup_{f\in\mathcal{F}_k(B)} \mathbb{E}[R_T(\pi,f)]. \label{eq:minimax_regret}
\end{align}

\paragraph{Effective Dimension.}

Our initial lower bound will be stated in terms of the effective dimension \citep{Camilleri21}, which is related to regularized optimal experimental design \citep{Derezinski20, Allen21}.
Let $E := \mathrm{span} \{\phi(x): x \in \mathcal{X}  \} $, which is a finite-dimensional subspace of $\mathcal{H}_k$ since $\mathcal{X}$ is finite. Let $n = \dim (E)$, and $e_1,\ldots, e_n$ be an orthonormal basis of $E$. We write $I_E$, $\det_E$, and $\tr_E$ for the identity map, determinant, and trace on $E$, respectively. 
For example, 
$\det_E (A) = \det([\langle e_i, Ae_j \rangle_k ]_{i,j=1}^n)$, where the right-hand side is an ordinary determinant of an $n\times n$ matrix.  (If $A$ acts entirely within $E$, then $\det_E(A) = \det(A)$ and $\tr_E(A) = \tr(A)$.)

Let $\Delta(\mathcal{X})$ denote the set of probability distributions on $\mathcal{X}$.
For $\nu \in \Delta(\mathcal{X)}$, define the uncentered second moment operator on $E$ by
\begin{align}
    M_{\nu}
    := \mathbb{E}_{X\sim \nu}[\phi(X)\otimes\phi(X)]
    = \sum_{x\in\mathcal{X}} \nu(x) \phi(x)\otimes \phi(x), \label{eq:moment_operator}
\end{align}
where $u\otimes v$ is the operator on $E$ given by $(u\otimes v)w = u \langle v,w \rangle_{k} $. Note that for the linear kernel, $M_{\nu} = \sum_{x\in \mathcal{X}}\nu(x) xx^{T} $, the uncentered second moment matrix.
Define $\rho:= \sigma^2/(B^2T) > 0$ to be a regularization level, and
consider a regularized $D$-optimal design
\begin{align}
    \nu_{T}
    \in \argmax_{\nu \in \Delta(\mathcal{X})} \log \det_E \bigl(M_{\nu} + \rho I_{E}\bigr), \label{eq:d_optimal_design}
\end{align}
where a maximizer exists because $\Delta(\mathcal{X})$ is compact and the objective is continuous.
We will show that every $\nu_T$ satisfying \cref{eq:d_optimal_design} gives the same operator in \cref{eq:moment_operator} (see \Cref{lem:unique_operator}), so we define $M_{T} := M_{\nu_{T}}$.
Then, the effective dimension (at regularization level $\rho$) is defined by\footnote{\citep[Lemma 3]{Camilleri21} does not use the subscript $E$ for the definition of the effective dimension because it adopts finite-dimensional notation.}
\begin{align}
    d_T
    := \tr_E\bigl(M_{T}(M_T + \rho I_E)^{-1}\bigr). \label{eq:effective_dimension}
\end{align}
Let $\lambda_{1},\ldots, \lambda_n $ be the eigenvalues of $M_{T}$, all of which are nonnegative (see \Cref{lem:positive_definite}). Then $d_T = \sum_{j=1}^n \frac{\lambda_j}{\lambda_j + \rho} $, and thus an eigendirection contributes roughly one when its eigenvalue is much larger than $\rho$, and contributes little when its eigenvalue is much smaller than $\rho$.

\paragraph{Maximum Information Gain.}
Define $\lambda := \sigma^2 / B^2$, which is the noise variance when the function and noise are normalized by $B$.
To compare with standard kernel bandits literature, we will also provide lower bounds stated in terms of the maximum information gain \citep{Srinivas10,Chowdhury17}, defined by
\begin{align}
    \gamma_T
    & :=  \frac{1}{2} \max_{x_{1:T} \in \mathcal{X}^T} \log\det (I_T + \lambda^{-1} K_T(x_{1:T}) ), \label{eq:gamma_T}
\end{align}
where $K_T(x_{1:T}) = [k(x_s,x_{s'})]_{s,s'\in[T]}$.

\paragraph{Other Notations.}
For self-adjoint operators $A,B$ on the same inner product space $V$, we write $A \preceq B$ to denote that $B-A$ is positive semidefinite, i.e., $ \langle Av, v \rangle \leq \langle Bv, v \rangle $ for every $v \in V$. We write $B \succeq A$ to denote $A \preceq B$. Similarly, we use $\succ, \prec$ to indicate positive definiteness. 
Assuming $k, B, \sigma$ are fixed, we use $O(\cdot)$ and $\Omega(\cdot)$ for asymptotic upper and lower bounds as $T \to \infty$, $\widetilde{O}(\cdot), \widetilde{\Omega}(\cdot)$ to additionally hide polylogarithmic factors, and $\Theta(\cdot)$ to mean both $O(\cdot)$ and $\Omega(\cdot)$.

\subsection{Related Work}

\paragraph{Upper Bounds for Kernel Bandits.}
\citep{Srinivas10} presents the GP-UCB algorithm, which connects the cumulative regret to $\gamma_T$. In the frequentist setting, its analysis gives $\widetilde{O}(\gamma_T\sqrt{T})$ regret. Later, \citep{Chowdhury17} sharpens the analysis but ultimately attains the same $\widetilde{O}(\gamma_T\sqrt{T})$ behavior. \citep{Valko13} is the first to attain $\widetilde{O}(\sqrt{T\gamma_T})$ regret, using the SupKernelUCB algorithm. Subsequent approaches yielding $\widetilde{O}(\sqrt{T\gamma_T})$ include experimental design \citep{Camilleri21}, domain shrinking \citep{Salgia21}, and batched elimination \citep{Li22}.  The latter shows that $O(\log\log T)$ batches are sufficient, and \citep{Ma26} refines this by improving the batch count and hidden $\widetilde{O}(\cdot)$ terms.

Most of these results are stated on finite domains, and can be extended to $[0,1]^d$ through suitable finite approximations.
Although our paper mainly focuses on the lower bounds, for comparison we establish an $O(\sqrt{T\gamma_T\log T})$ upper bound based on \citep{Camilleri21}, and sharpen it to $O(\sqrt{T\gamma_T})$ in several kernel settings by further combining results from \citep{Lee25,Janz26}. We also combine domain discretization with MOSS (\cite{Audibert09,Lattimore20}, Theorem 9.1) to obtain $O(\sqrt{T\gamma_T})$ in some other kernel settings.
We are not aware of any sharper upper bounds from earlier works.

\paragraph{Lower Bounds for Kernel Bandits.}

\citep{Scarlett17} establishes an $\Omega(\sqrt{T (\log T)^{d/2}})$ lower bound for the squared exponential (SE) kernel, and $\Omega(T^{(\nu + d)/ (2\nu + d)})$ for the Mat\'ern kernel by constructing a set of hard-to-distinguish bump functions. \citep{Cai21} uses an alternative change-of-measure proof to improve these bounds in terms of their dependence on the error probability. These lower bounds are kernel-specific, and derived directly without using information gain. 

More recently, \citep{Iwazaki26} obtains a stronger $\Omega(\sqrt{T (\log T/ \log\log T)^{d}})$ bound for the SE kernel on the $d$-dimensional sphere $\mathbb{S}^d$. 
We will obtain the same scaling for the hypercube $[0,1]^d$ by combining our general $\Omega(\sqrt{T\gamma_T/\log T})$ bound with the rate of $\gamma_T$ given by \citep{Janz26}.  Thus, we will improve the SE kernel lower bound on $[0,1]^d$ compared to \cite{Scarlett17}, whereas our Mat\'ern lower bound will turn out to be the same as theirs up to constant factors (see \Cref{tab:gamma_T_rates} below).

To our knowledge, sharp lower bounds have not been established for the rational quadratic kernel and one-dimensional Dirichlet kernel (among many others).

\paragraph{Maximum Information Gain.}

The maximum information gain $\gamma_T$ has been studied extensively.
\citep{Srinivas10} gives $\gamma_T =  \widetilde{O}(T^{(d(d+1))/(2\nu+d(d+1))})$ for the Mat\'ern-$\nu$ kernel and $\gamma_T = O((\log T)^{d+1})$ for the squared exponential kernel. \citep{Vakili21b} improves the former to $\widetilde{O}(T^{d/(2\nu + d)})$ based on the Mercer eigenvalue decay under the assumption of uniformly bounded Mercer eigenfunctions. 
\citep{Iwazaki26} develops a spherical-harmonic approach to obtain $O((\log T)^{d+1})/(\log\log T)^d$ for the squared exponential kernel on compact subsets of $\mathbb{R}^d$. 
The results from \citep{Scarlett17,Valko13} can also be combined to infer that $\gamma_T$ has scaling at least $T^{d/(2\nu +d)}$ (Mat\'ern) or $(\log T)^d$ (SE) up to dimension-independent log factors. 
Recently, \citep{Janz26} shows that $\gamma_T = \Theta(T^{d/(2\nu+d)})$ for the Mat\'ern-$\nu$ kernel, and $\gamma_T = \Theta((\log T)^{d+1}/(\log\log T)^d)$ for the squared exponential kernel.  The same work also connects $\gamma_T$ to an empirical effective dimension, which is different from our definition in \cref{eq:effective_dimension} allowing fractional design. 
The bounds on $\gamma_T$ alone do not imply minimax regret lower bounds, but are helpful when we compare our lower bounds with kernel-specific upper bounds. 

\paragraph{Smoothness and Function-Space Approaches.}

Kernel bandits are closely related to continuum-armed bandits over smooth function classes. 
\citep{Singh21} derives minimax lower bounds for Besov classes, while \citep{Liu21} develops local polynomial algorithms attaining $\widetilde{O}(T^{(\alpha + d)/(2\alpha + d)})$ regret for H\"{o}lder smoothness ($\alpha > 1$). 
\citep{Lee25} studies six kernel families through spectral decay and links to H\"{o}lder and Besov smoothness. Its findings include kernel-specific regret lower bounds, such as $\widetilde{\Omega}(T^{(\gamma+2d)/(2\gamma + 2d)})$ for the $\gamma$-exponential kernels and $\tilde{\Omega}(T^{(2q+1+2d)/(4q+2+2d)})$ for the piecewise-polynomial kernels (for admissible kernel parameters). 
Our lower bounds directly use the effective dimension without requiring the identification of an RKHS with H\"{o}lder or Besov spaces.

\paragraph{Experimental Design and Effective Dimension.}

\citep{Camilleri21} analyzes kernel bandits using the RIPS algorithm and regularized experimental design.  Theorem 3 therein bounds regret through the largest $G$-optimal design value over surviving finite action sets, and its Lemmas 3 and 15 relate this value to the effective dimension and regularized $D$-optimal design.
We similarly use $D$-optimal design to define the effective dimension, and then use it to construct hard functions needed in the analysis of the lower bound. We also extend the upper bound in \citep[Theorem 3]{Camilleri21} from finite domains to $[0,1]^d$, establishing an $O(\sqrt{T\gamma_T\log T})$ regret bound that holds under a H\"{o}lder condition.

\section{The Finite Domain Setting}
\label{sec:finite}

\subsection{Statement of Results}

We state the main results for finite domains as follows, and defer infinite domains to Section \ref{sec:compact_X}.

\begin{theorem}
    \label{thm:lower_bound_finite_X_d_T}
    Consider the setup in \Cref{sec:setup_finite} where $\mathcal{X}$ is finite.
    Then,
    \begin{align*}
        R^*_T
        \geq \min \bigl\{ 1, \Delta_k \bigr\} \cdot \frac{\sigma}{16} \sqrt{T d_T},
    \end{align*}
    where $\Delta_k := \max_{x,x'\in \mathcal{X}} \norm{\phi(x) - \phi(x')}_{k} $.
\end{theorem}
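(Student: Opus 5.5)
The plan is a standard hypercube (Assouad-type) argument, but with the hard instances built from the eigendirections of the regularized $D$-optimal moment operator $M_T$. I will treat it as a linear bandit in the finite-dimensional space $E$, with feature map $\phi$ and unknown parameter $\theta\in E$, $\norm{\theta}_k\le B$, so that $f(x)=\langle\theta,\phi(x)\rangle_k$.

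\textbf{Step 1: optimality conditions for $\nu_T$.} I first extract the KKT conditions of the concave program \cref{eq:d_optimal_design}. Write $A:=M_T+\rho I_E$. Then
\[
\langle\phi(x),A^{-1}\phi(x)\rangle_k\le d_T \quad\text{for all }x,
\]
with equality on $\mathrm{supp}(\nu_T)$. This is the regularized Kiefer--Wolfowitz identity: averaging over $\nu_T$ gives exactly $\tr_E(M_TA^{-1})=d_T$. It says that no action carries more than $d_T$ units of normalized information along $A^{-1}$. Equivalently, for any policy, the expected information $\mathbb{E}\sum_t\phi(x_t)\otimes\phi(x_t)$ satisfies
\[
\tr_E\Bigl(A^{-1}\,\mathbb{E}\sum_t\phi(x_t)\otimes\phi(x_t)\Bigr)\le Td_T .
\]

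\textbf{Step 2: the hard family.} Take the eigenbasis $u_1,\dots,u_n$ of $M_T$ and define $\theta_\varepsilon=\sum_j\varepsilon_j\,c_j\,u_j$ with $\varepsilon\in\{\pm1\}^n$. The scales $c_j$ are chosen proportional to $\sqrt{\sigma^2/(T(\lambda_j+\rho))}$, times a small constant. Then
\[
\norm{\theta_\varepsilon}_k^2=\frac{\sigma^2}{T}\sum_j\frac{1}{\lambda_j+\rho},
\]
up to constants. This is not obviously $\le B^2$, so I need to reweight and use only the directions with $\lambda_j\gtrsim\rho$. With $\rho=\sigma^2/(B^2T)$ we get $c_j^2\le\sigma^2/(T\rho)=B^2$ for each single direction, but the sum over $j$ is a problem. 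I expect this to be the main obstacle.

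\textbf{Step 3: fixing the norm.} The natural fix is to randomize over a weighted prior rather than a uniform hypercube. Take $\theta\sim N(0,\Sigma)$ with $\Sigma=c\,B^2\rho\,A^{-1}$ (up to truncation), so that $\mathbb{E}\norm{\theta}^2=cB^2\rho\tr(A^{-1})$. This still seems unbounded. Instead I would use $\Sigma\propto A^{-1}M_T/d_T$, which gives $\tr\Sigma$ of order $B^2$ after normalization. The Bayesian regret lower bound is then $\sqrt{Td_T}\,\sigma$, by a per-direction two-point argument: in direction $j$, with $\varepsilon_j$ unknown, the regret incurred is roughly $T\,c_j\,|\langle u_j,\cdot\rangle|$ unless the learner has resolved the sign. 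Resolving the sign requires KL $\approx c_j^2\,\mathbb{E}N_j/\sigma^2\gtrsim1$, where $N_j=\sum_t\langle u_j,\phi(x_t)\rangle^2$. Step 1 bounds $\sum_j\mathbb{E}N_j/(\lambda_j+\rho)$ by $Td_T$.

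\textbf{Step 4: converting information into regret.} Regret must also be lower bounded by a quantity in terms of actions. To make this work, I would choose the hard functions so that the optimal arm differs and the gap is of order $\sigma\sqrt{d_T/T}$. Concretely, I would use $\theta_\varepsilon=\delta\sum_x\nu_T(x)\varepsilon_x A^{-1}\phi(x)$-type constructions, making each support point of $\nu_T$ (there are effectively $\approx d_T$ of them) potentially optimal. Combining Steps 1--4 via Assouad or Fano gives a total regret of order $T\cdot\delta$ with $\delta\approx\sigma\sqrt{d_T/T}$, i.e.\ regret of order $\sigma\sqrt{Td_T}$.

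\textbf{Where $\Delta_k$ enters.} The factor $\min\{1,\Delta_k\}$ handles the case $d_T<1$. There I would instead use a two-point lower bound between $x,x'$ attaining $\Delta_k$, with functions $\pm\,\delta(\phi(x)-\phi(x'))/\Delta_k$, which gives $\Omega(\sigma\Delta_k\sqrt T)$. The same factor also enters when capping gaps so that the norm constraint holds.
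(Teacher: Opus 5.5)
Your Step 1 is correct and is exactly the ingredient the paper relies on (\Cref{lem:camilleri_lem_15}). Steps 2--4, however, never produce a valid hard family, and the Assouad/Fano route would not close even if the norm were fixed.

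The norm problem you flag is never resolved. The eigen-hypercube has $\norm{\theta_\varepsilon}_k^2=\frac{c\sigma^2}{T}\sum_j(\lambda_j+\rho)^{-1}$, and each direction with $\lambda_j\ll\rho$ contributes about $cB^2$, so this can be of order $nB^2$. The Gaussian prior needs truncation and comes with no regret computation. The sign-vector family $\delta\sum_x\nu_T(x)\varepsilon_xA^{-1}\phi(x)$ has neither a norm bound nor an identified maximizer, so it does not make any particular support point optimal.

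More fundamentally, Assouad needs the per-round regret to split into a sum of per-coordinate sign-mismatch penalties. That holds for product-like action sets (hypercube, ball), but not for an arbitrary finite set $\phi(\mathcal{X})$. There, $\max_x\langle\theta_\varepsilon,\phi(x)\rangle_k$ does not decompose across eigendirections, so ``regret $\approx Tc_j\abs{\langle u_j,\cdot\rangle_k}$ on each unresolved direction'' has no justification.

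The missing idea is to use a mixture of single spikes and compare it with the zero function through one change of measure (\Cref{lem:change_of_measure}), with no per-coordinate decomposition at all. Put $A:=M_T+\rho I_E$, $u_x:=d_T^{-1/2}A^{-1/2}\phi(x)$, and $f_z:=\frac{B}{4}\sqrt{\rho/d_T}\,A^{-1}\phi(z)$ with $Z\sim\nu_T$. Your KKT conditions then give four facts:
\begin{itemize}
\item $\norm{f_z}_k^2\le\frac{B^2}{16d_T}\langle\phi(z),A^{-1}\phi(z)\rangle_k=\frac{B^2}{16}$, using $A^{-2}\preceq\rho^{-1}A^{-1}$. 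So the norm issue disappears with no reweighting and no $\Delta_k$.
\item $f_z(x)=\frac{B}{4}\sqrt{\rho d_T}\langle u_z,u_x\rangle_k$ with $\norm{u_x}_k\le 1$ everywhere and $\norm{u_z}_k=1$ on the support. By Cauchy--Schwarz, $z$ maximizes $f_z$, with value $\frac{B}{4}\sqrt{\rho d_T}$.
\item $\mathbb{E}_Z[u_Z\otimes u_Z]=d_T^{-1}(I_E-\rho A^{-1})\preceq d_T^{-1}I_E$ gives $\mathbb{E}_Z[f_Z(x)^2]\le B^2\rho/16$ uniformly in $x$. Hence $D_{\mathrm{KL}}(Q\Vert P)\le TB^2\rho/(32\sigma^2)=1/32$ for every policy.
\item $\norm{\mathbb{E}_Z[u_Z]}_k\le d_T^{-1/2}$ gives $\mathbb{E}_Z[f_Z(x)]\le B\sqrt{\rho}/4$. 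So under the zero-function model the regret is at least $\frac{TB}{4}(\sqrt{\rho d_T}-\sqrt{\rho})$.
\end{itemize}
The isotropy bound $\mathbb{E}_Z[u_Z\otimes u_Z]\preceq d_T^{-1}I_E$ is what replaces your heuristic support count. In fact the support of $\nu_T$ can have far more than $d_T$ points.

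The additive $\sqrt{\rho}$ loss forces a constant threshold such as $d_T\ge 4$. Your two-point fallback must therefore cover all $d_T<4$, not only $d_T<1$. That fallback gives $\frac{\Delta_k}{8}\min\{2BT,\sigma\sqrt{T}\}$ rather than $\Omega(\sigma\Delta_k\sqrt{T})$, because the amplitude is capped at $B$. You then need $d_T\le\rho^{-1}$, i.e.\ $\sigma\sqrt{Td_T}\le BT$, to convert the $BT$ branch. In particular, $\Delta_k$ plays no role in the large-$d_T$ case.
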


\begin{corollary}
    \label{cor:lower_bound_finite_X_gamma_T}
    Consider the setup in \Cref{sec:setup_finite} where $\mathcal{X}$ is finite.
    Then,
    \begin{align*}
        R_T^{*} 
        \geq \min \bigl\{ 1, \Delta_k \bigr\} \cdot \frac{\sqrt{2}\sigma}{16} \sqrt{\frac{T\gamma_T}{1+\log(1+B^2T/\sigma^2)}},
    \end{align*}
    where $\Delta_k := \max_{x,x'\in \mathcal{X}} \norm{\phi(x) - \phi(x')}_{k} $.
\end{corollary}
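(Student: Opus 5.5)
The plan is to derive the corollary directly from \Cref{thm:lower_bound_finite_X_d_T}. It suffices to show
\begin{align*}
    \gamma_T \leq \tfrac{1}{2}\, d_T \bigl(1+\log(1+B^2T/\sigma^2)\bigr),
\end{align*}
since then $\sqrt{T d_T} \geq \sqrt{2T\gamma_T/(1+\log(1+B^2T/\sigma^2))}$, and substituting into the theorem gives the stated constant $\sqrt{2}\sigma/16$.

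First, rewrite $\gamma_T$ in operator form on $E$. By Sylvester's identity, for any $x_{1:T}$,
\begin{align*}
    \log\det(I_T + \lambda^{-1}K_T) = \log\det_E\Bigl(I_E + \lambda^{-1}\sum_{t}\phi(x_t)\otimes\phi(x_t)\Bigr).
\end{align*}
Writing $\nu$ for the empirical distribution of $x_{1:T}$, we have $\lambda^{-1}\sum_t \phi(x_t)\otimes\phi(x_t) = (T/\lambda) M_\nu = \rho^{-1}M_\nu$, because $\rho = \lambda/T$. Hence
\begin{align*}
    2\gamma_T \leq \max_{\nu\in\Delta(\mathcal{X})} \log\det_E\bigl(I_E+\rho^{-1}M_\nu\bigr) = \log\det_E\bigl(I_E+\rho^{-1}M_T\bigr),
\end{align*}
where the equality holds because $\nu_T$ maximizes $\log\det_E(M_\nu+\rho I_E)$, which differs from this objective by the constant $n\log\rho$. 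The right-hand side equals $\sum_j \log(1+\lambda_j/\rho)$.

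The next step bounds this sum by $d_T$ times the log factor. For each $j$, split $\lambda_j$ against $\rho$.
\begin{itemize}
    \item If $\lambda_j \leq \rho$, use $\log(1+u)\leq u \leq 2u/(1+u)$ for $u\in[0,1]$. Then $\log(1+\lambda_j/\rho) \leq 2\lambda_j/(\lambda_j+\rho)$.
    \item If $\lambda_j>\rho$, then $\lambda_j/(\lambda_j+\rho) \geq 1/2$. Also $\lambda_j \leq \tr(M_T) = \sum_x \nu_T(x)k(x,x) \leq 1$, so $\log(1+\lambda_j/\rho)\leq \log(1+1/\rho) = \log(1+B^2T/\sigma^2)$. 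Combining these, $\log(1+\lambda_j/\rho) \leq 2\log(1+B^2T/\sigma^2)\cdot \lambda_j/(\lambda_j+\rho)$.
\end{itemize}
Summing over $j$ in the two cases yields a bound of the form $2\,d_T\,\max\{1,\log(1+B^2T/\sigma^2)\}$. This is weaker than the target by a factor of about $2$.

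The main obstacle is therefore getting the exact constant $1+\log(\cdot)$ rather than $2\max\{1,\log(\cdot)\}$. To fix it, I would use the sharper pointwise inequality
\begin{align*}
    \log(1+u)\leq \frac{u}{1+u}\bigl(1+\log(1+U)\bigr) \quad \text{for } 0\leq u\leq U,\quad U=1/\rho.
\end{align*}
Note that $\frac{1+u}{u}\log(1+u) \leq 1+\log(1+u)$ holds for all $u>0$. Indeed, setting $s=1+u$, it reduces to $s\log s \leq (s-1)(1+\log s)$, i.e.\ $\log s \leq s-1$. Monotonicity of $\log(1+u)$ and $u\leq U$ then give the displayed inequality. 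Summing it over the eigenvalues gives $2\gamma_T \leq d_T(1+\log(1+B^2T/\sigma^2))$, which is exactly the bound needed in the first paragraph, and the corollary follows. The facts that $M_T$ is PSD with trace at most $1$ and that $M_T$ is well defined come from \Cref{lem:positive_definite} and \Cref{lem:unique_operator}.
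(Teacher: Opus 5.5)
Your proposal is correct and takes the same route as the paper. The paper likewise combines Theorem~\ref{thm:lower_bound_finite_X_d_T} with $2\gamma_T \le \log\det_E(I_E+\rho^{-1}M_T)$, obtained from the same Sylvester-type identity and D-optimality. It also uses that the trace of $M_T$ is at most $1$, so that $\lambda_j/\rho \le \rho^{-1}$, and bounds each term via $q(u)=\frac{1+u}{u}\log(1+u)\le 1+\log(1+u)$.

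The one difference is cosmetic. The paper first shows $q$ is increasing and evaluates it at $u=\rho^{-1}$. You instead apply $q(u)\le 1+\log(1+u)$ pointwise and then use monotonicity of $\log$, which skips the derivative computation. Your initial case split is not needed.
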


\subsection{Proofs}
\label{sec:proof_finite}

The main idea is to construct hard-to-distinguish functions in $\mathcal{F}_k(B)$ using $d_T$, the policy-independent effective number of feature directions at scale $\rho$.
\Cref{lem:facts} shows that the regularized $D$-optimal design $\nu_T$ provides normalized feature vectors $u_x$ (defined in \cref{eq:ux} below) that have bounded norms and a bounded second moment operator (see \cref{eq:fact_1,eq:fact_2,eq:fact_3,eq:fact_4}). 
When $d_T \geq 4$ (\Cref{lem:regret_high_ds}), we use these vectors to construct functions in $\mathcal{F}_k(B)$, each maximized at its corresponding support point with value of order $\sigma\sqrt{d_T/T}$. 
We then consider a zero-function model where the learner's actions are independent of the randomly chosen hard function. 
The bounds in \Cref{lem:facts} are used to lower bound regret in the zero-function model, and upper bound the KL divergence between this model and the hard-function model. Then we apply a change-of-measure argument to transfer the regret lower bound to the hard-function model.
When $d_T < 4$ (\Cref{lem:regret_2}), a separate construction using two opposite functions based on maximally separated feature vectors gives the complementary lower bound. Combining the two cases proves \Cref{thm:lower_bound_finite_X_d_T}, and relating $d_T$ to $\gamma_T$ (\Cref{lem:d_T_bounds}) yields \Cref{cor:lower_bound_finite_X_gamma_T}.

We proceed to state and prove \Cref{lem:regret_high_ds}, which is the core part of the argument.  The other claims above, along with their proofs, are given in \Cref{app:proof_finite}.
We define the regularized operator $V_T := M_T  + \rho I_E $, and noting that $V_T \succ 0$ (see \Cref{lem:positive_definite}) and $d_T > 0$ (see \Cref{lem:d_T_is_positive}),
we define the following for every $x \in \mathcal{X}$:
\begin{align}
    u_x:= d_T^{-1/2} V_T^{-1/2} \phi(x) \in \mathcal{H}_k. \label{eq:ux}
\end{align}
These are normalized feature vectors, and their key properties are established in \eqref{eq:fact_1}--\eqref{eq:fact_4} (\Cref{lem:facts}).

\begin{lemma}
    \label{lem:regret_high_ds}
    Consider the setup in \Cref{sec:setup_finite}.
    If $d_T \geq 4$,
    then
    \begin{align}
        R_T^{*}
        \geq \frac{\sigma}{16} \sqrt{T d_T}.
    \end{align}
\end{lemma}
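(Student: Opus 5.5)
We follow the roadmap sketched above: build a family of hard functions from the normalized features $u_x$ in \eqref{eq:ux}, compare each against the zero function, and transfer regret by change of measure. We assume \Cref{lem:facts} supplies the following properties of $\nu_T$ and $V_T$:
\begin{itemize}
\item[(i)] $\norm{u_x}_k^2 = d_T^{-1}\langle \phi(x), V_T^{-1}\phi(x)\rangle_k \le 1$ for every $x$ (the KKT/Kiefer--Wolfowitz property of the regularized $D$-optimal design);
\item[(ii)] $\norm{u_x}_k^2 = 1$ on the support of $\nu_T$;
\item[(iii)] $\sum_x \nu_T(x)\, u_x\otimes u_x = d_T^{-1}V_T^{-1/2}M_TV_T^{-1/2} \preceq d_T^{-1}I_E$;
\item[(iv)] $\rho\, d_T^{-1}\tr_E(V_T^{-1}) \lesssim 1$, or an equivalent bound controlling $\norm{V_T^{-1/2}}$-type quantities.
\end{itemize}

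\emph{Hard functions.} For $z$ drawn from $\nu_T$, define
\[
f_z(\cdot) = c\,\sigma\sqrt{d_T/T}\,\bigl\langle \bar u_z, \phi(\cdot)\bigr\rangle_k, \qquad \bar u_z := d_T^{-1/2}V_T^{-1/2}u_z ,
\]
so that $f_z(x) = c\,\sigma\sqrt{d_T/T}\,\langle u_z,u_x\rangle_k$. By (i) and (ii) and Cauchy--Schwarz, $f_z(x)\le f_z(z) = c\sigma\sqrt{d_T/T}$, so $z$ is a maximizer. The RKHS norm is
\[
\norm{f_z}_k = c\,\sigma\sqrt{d_T/T}\;d_T^{-1/2}\norm{V_T^{-1/2}u_z}_k \le c\,\sigma T^{-1/2}\rho^{-1/2}\norm{u_z}_k \le cB,
\]
using $V_T\succeq \rho I_E$ and $\rho=\sigma^2/(B^2T)$. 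Hence $f_z\in\mathcal{F}_k(B)$ for $c\le 1$.

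\emph{Regret and KL.} Under the zero-function model $P_0$, write $N_x$ for the number of pulls of $x$ and $\bar\nu(x) = \mathbb{E}_0[N_x]/T$. Regret for $f_z$ at a pull of $x$ is $c\sigma\sqrt{d_T/T}\,(1-\langle u_z,u_x\rangle_k)$. Averaging over $z\sim\nu_T$ and using (iii),
\[
\mathbb{E}_z\bigl[\langle u_z,u_x\rangle_k^2\bigr] = \langle u_x, \textstyle\sum_z \nu_T(z)\,u_z\otimes u_z\, u_x\rangle_k \le d_T^{-1}\norm{u_x}_k^2 \le 1/d_T .
\]
Thus a typical pull is far from good: $\mathbb{E}_z\langle u_z,u_x\rangle \le d_T^{-1/2}\le 1/2$ when $d_T\ge 4$. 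This is where the assumption $d_T\ge4$ enters. So under $P_0$, the averaged regret with respect to $f_z$ is at least $\tfrac12 c\sigma\sqrt{Td_T}$. The KL divergence between the observation laws is
\[
\mathrm{KL}(P_0\,\|\,P_z)=\sum_x \mathbb{E}_0[N_x]\,\frac{f_z(x)^2}{2\sigma^2},
\]
and averaging over $z$ gives
\[
\mathbb{E}_z\,\mathrm{KL} = \frac{c^2 d_T}{2T}\sum_x \mathbb{E}_0[N_x]\,\mathbb{E}_z\langle u_z,u_x\rangle_k^2 \le \frac{c^2}{2}.
\]

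\emph{Change of measure.} Regret is bounded in $[0,\,2c\sigma\sqrt{d_T/T}]$ per round, so total regret is at most $2c\sigma\sqrt{Td_T}$. Applying Pinsker (or Bretagnolle--Huber) per $z$ to the normalized regret, then Jensen over $z$,
\[
\mathbb{E}_z\,\mathbb{E}_{P_z}[R_T] \ge \mathbb{E}_z\,\mathbb{E}_0[R_T(f_z)] - 2c\sigma\sqrt{Td_T}\,\sqrt{\mathbb{E}_z\mathrm{KL}/2} \ge c\sigma\sqrt{Td_T}\,\bigl(\tfrac12 - c\bigr).
\]
Choosing $c=1/4$ gives $\sigma\sqrt{Td_T}/16$, and the sup over $f$ dominates the $\nu_T$-average. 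The main obstacle is the bookkeeping in this step: the per-round regret under $P_0$ must be expressed in terms of $\langle u_z,u_x\rangle$ with the linear (not squared) average controlled via Jensen, and a possibly randomized policy must be handled by conditioning on its internal randomness. If constants do not close, we would instead use the $\max\{0,\cdot\}$ form of regret and a Bretagnolle--Huber argument on the event that the policy mostly pulls points with $\langle u_z,u_x\rangle\le 3/4$.
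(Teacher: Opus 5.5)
Your proposal is correct and is essentially the paper's proof: with $c=1/4$ your $f_z$ coincides with the paper's $\frac{B}{4}\sqrt{\rho/d_T}\,V_T^{-1}\phi(z)$, and your KL bound $1/32$, zero-model regret $\sigma\sqrt{Td_T}/8$ (via $\mathbb{E}_z\langle u_z,u_x\rangle_k\le d_T^{-1/2}\le 1/2$, which is exactly how the paper's fact on $\norm{\mathbb{E}[u_Z]}_k$ is derived) and range $M=\sigma\sqrt{Td_T}/2$ all match. The only cosmetic difference is that you apply Pinsker per $z$ and then Jensen over $z$, rather than once to the joint law of $(Z,H_T)$. Your item (iv) is never used and should be dropped, since it is false in general: $\rho\,\tr_E(V_T^{-1}) = n-d_T$, which can far exceed $d_T$. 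The only fact you actually need at that step is $V_T\succeq\rho I_E$.
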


\begin{proof}    
    \textbf{Step 1: Construct hard functions.}
    For every $z \in \mathrm{supp}(\nu_T)$, define
    \begin{align}
        f_z := \frac{B}{4}\sqrt{\frac{\rho}{d_T}} V_T^{-1} \phi(z) \in \mathcal{H}_k.
    \end{align}
    We first show that $f_z \in \mathcal{F}_k(B)$.
    \Cref{lem:positive_definite} shows that
    $V_T^{-2} \preceq \rho^{-1}V_T^{-1}  $ and $V_T^{-1}, V_T^{-1/2}$ are self-adjoint.
    Then
    \begin{align}
        \norm{f_z}_k^2
        & = \frac{B^2\rho}{16 d_T} \langle \phi(z), V_T^{-2}\phi(z) \rangle_k \\
        & \leq \frac{B^2}{16 d_T} \langle \phi(z), V_T^{-1}\phi(z) \rangle_k \\
        & = \frac{B^2}{16} \label{eq:use_fact_2} \\
        & \leq B^2,
    \end{align}
    where \cref{eq:use_fact_2} follows from \cref{eq:fact_2,eq:ux}. We have thus shown $f_z \in \mathcal{F}_k(B)$. 
    For any $x \in \mathcal{X}$,
    the reproducing property and definition of $u_x$ give that
    \begin{align}
        f_z(x) 
        = \frac{B}{4}\sqrt{\rho d_T} \langle u_z, u_x \rangle_k, \label{eq:f_z_x}
    \end{align}
    which combined with \cref{eq:fact_1,eq:fact_2} gives
    \begin{align}
        f_z(z) = B\sqrt{\rho d_T}/4,
        \quad
        \abs{f_z(x)} \leq B\sqrt{\rho d_T}/4.
        \label{eq:max_f_z}
    \end{align}

    \textbf{Step 2: Change of measure.} 
    We will apply a well-known change of measure result, stated in \Cref{lem:change_of_measure} in \Cref{app:technical_lemmas}.
    We first bound the relevant KL divergence.
    Fix any policy $\pi$, and let
    $H_{t} := (X_1,Y_1,\ldots, X_{t},Y_{t}) $ be the history until time $t$ (the uppercase $X_t$ and $Y_t$ are random variables, whereas lowercase $x_t$ and $y_t$ are realizations). Let $P$ be the joint law on $(Z,H_T)$ obtained by first drawing $Z \sim \nu_T$, then running $\pi$ against $f_Z$. Let $Q$ be the joint law on $(Z,H_T)$ when the policy is instead run on the zero function, and $Z \sim \nu_T$ is drawn independently from the interaction.
    By the chain rule for KL divergence, 
    \begin{align}
        & D_{\mathrm{KL}} (Q \Vert P) \notag \\
        &= D_{\mathrm{KL}}(Q_Z \Vert P_Z) + \sum_{t=1}^T \mathbb{E}_Q[D_{\mathrm{KL}}(Q_{X_t\vert Z, H_{t-1}}  \Vert P_{X_t\vert Z, H_{t-1}} )] \notag \\
        &\quad +  \sum_{t=1}^T \mathbb{E}_Q[D_{\mathrm{KL}}(Q_{Y_t\vert Z, H_{t-1}, X_t}  \Vert P_{Y_t\vert Z, H_{t-1}, X_t}) ] \label{eq:kl_decompose} \\
        &= \sum_{t=1}^T \mathbb{E}_Q[D_{\mathrm{KL}}(Q_{Y_t\vert Z, H_{t-1}, X_t} \Vert P_{Y_t\vert Z, H_{t-1}, X_t} ) ] \label{eq:same_policy} \\
        &= \frac{1}{2\sigma^2} \sum_{t=1}^T \mathbb{E}_Q [f_Z(X_t)^2], \label{eq:KL_gaussian}
    \end{align}
    where \cref{eq:same_policy} follows because $P_Z = Q_Z$ and the same policy is used for both $f_Z$ and the zero function, and \cref{eq:KL_gaussian} follows from the KL divergence formula between two Gaussians.
    Under $Q$, the random variables $Z$ and $X_t$ are independent,
    which implies  $\mathbb{E}_Q [ f_Z(X_t)^2] = \mathbb{E}_Q[\mathbb{E}_Q[f_Z(X_t)^2 \vert X_t] ] = \mathbb{E}_Q [\mathbb{E}_{Z}[f_Z(X_t)^2]]  $. 
    Observe that, for any fixed $x \in \mathcal{X}$,
    \begin{align}
        \mathbb{E}_Z[f_Z(x)^2]
        & = \frac{B^2\rho d_T}{16} \mathbb{E}_Z[\abs{\langle u_Z, u_x \rangle_k}^2] \label{eq:use_f_z_x} \\
        & = \frac{B^2\rho d_T}{16} \langle u_x, \mathbb{E}_Z[u_Z\otimes u_Z]u_x \rangle_k \label{eq:use_otimes} \\
        & \leq \frac{B^2\rho}{16} \langle u_x, u_x \rangle_k \label{eq:use_fact_3} \\
        & \leq \frac{B^2\rho}{16}, \label{eq:use_fact_1}
    \end{align}
    where \cref{eq:use_f_z_x} follows from \cref{eq:f_z_x}, \cref{eq:use_otimes} follows from the definition of $\otimes$ and linearity of expectation, \cref{eq:use_fact_3} follows from \cref{eq:fact_3}, and \cref{eq:use_fact_1} follows from \cref{eq:fact_1}.
    Combining with \cref{eq:KL_gaussian} gives
    \begin{align}
        D_{\mathrm{KL}} (Q \Vert P)
        \leq \frac{TB^2\rho}{32\sigma^2}
        = \frac{1}{32}. \label{eq:kl_bound}
    \end{align}
    We now proceed to bound $\mathbb{E}_Q[R_T(\pi, f_Z)]$.
    For any $x \in \mathcal{X}$, we have
    \begin{align}
        \mathbb{E}_Z[f_Z(x)] =
        & \frac{B\sqrt{\rho d_T}}{4} \langle \mathbb{E}_Z[u_Z], u_x \rangle_k \label{eq:use_f_z_x_2} \\
        & \leq \frac{B\sqrt{\rho}}{4} \norm{u_x}_k \label{eq:use_fact_4} \\
        & \leq \frac{B\sqrt{\rho}}{4}, \label{eq_f_z_xt}
    \end{align}
    where \cref{eq:use_f_z_x_2} follows from \cref{eq:f_z_x}, \cref{eq:use_fact_4} follows from \cref{eq:fact_4} and Cauchy-Schwarz, and \cref{eq_f_z_xt} follows from \cref{eq:fact_1}.
    Recalling $\mathbb{E}_Q [ f_Z(X_t)] = \mathbb{E}_Q [\mathbb{E}_{Z}[f_Z(X_t)]]$,  \cref{eq:max_f_z,eq_f_z_xt} then give
    \begin{align}
        \mathbb{E}_Q[R_T(\pi, f_Z)] 
        & = \sum_{t=1}^T \mathbb{E}_Q \Bigl[\max_{x\in\mathcal{X}}f_Z(x) - f_Z(X_t) \Bigr] \\
        & \geq BT \biggl( \frac{\sqrt{\rho d_T}}{4} - \frac{\sqrt{\rho}}{4} \biggr) \geq \frac{TB\sqrt{\rho d_T}}{8}, \label{eq:regret_under_q}
    \end{align}
    where the last step uses $d_T \geq 4$.
    Finally, \cref{eq:max_f_z} implies $0 \leq R_T(\pi, f_Z) \leq BT\sqrt{\rho d_T} / 2$.
    Applying the standard change of measure result from \Cref{lem:change_of_measure} in \Cref{app:technical_lemmas} (total variation + Pinsker's inequality) with $H = R_T(\pi, f_Z)$, $M = BT\sqrt{\rho d_T} / 2$, and \cref{eq:regret_under_q,eq:kl_bound} gives
    \begin{align}
        \mathbb{E}_P[R_T(\pi, f_Z)]
        & \geq \frac{TB\sqrt{\rho d_T}}{8} - \frac{TB\sqrt{\rho d_T}}{2}\cdot \frac{1}{8} \\
        & = \frac{\sigma}{16} \sqrt{T d_T},
    \end{align}
    which implies \Cref{lem:regret_high_ds}
    since $\sup_{f\in\mathcal{F}_k(B)}\mathbb{E}[R_T(\pi, f)]$ is at least $\mathbb{E}_{Z\sim \nu_T}\mathbb{E}[R_T(\pi, f_Z)]$ and $\pi$ is arbitrary.
\end{proof}

\section{Extension to General Compact Domains}
\label{sec:compact_X}

We extend \Cref{thm:lower_bound_finite_X_d_T} to general (possibly continuous) compact domains through a finite approximation of $\mathcal{X}$.  All proofs are given in the supplementary material.

\subsection{Setup}
\label{sec:setup_compact}

We assume $\mathcal{X} \subseteq \mathbb{R}^d$ is nonempty and compact, and let $k: \mathcal{X}\times\mathcal{X} \to \mathbb{R}$ be a non-constant, continuous, and positive semidefinite kernel satisfying $\sup_{x\in\mathcal{X}} k(x,x) \leq 1 $.
The rest of the setup is similar to \Cref{sec:setup_finite}, with some minor differences described as follows.
We reuse the definitions $\mathcal{H}_k, \phi(x), B, \mathcal{F}_k(B)$ and again assume that $f \in \mathcal{F}_k(B)$. We also reuse the definition 
$\Delta_k := \max_{x,x'\in \mathcal{X}} \norm{\phi(x) - \phi(x')}_{k} $, which is well-defined for compact $\mathcal{X}$ (see \Cref{lem:Delta_k_make_sense}). Since $k$ is non-constant, it follows that $\Delta_k > 0$.
The adaptive learning framework $T,\pi,x_t,y_t,\xi_t,\sigma$ and regret definitions $R_T(\pi,f), R^*_T$ remain the same; the max operation in $R_T(\pi, f) := \sum_{t=1}^T (\max_{x\in\mathcal{X}} f(x) - f(x_t) ) $ is justified because $\mathcal{X}$ is compact and $f$ is continuous (see \Cref{lem:f_continuous}). 
We reuse $\lambda := \sigma^2/B^2$,
and extend the definition of the maximum information gain in \cref{eq:gamma_T}: For $\mathcal{A} \subseteq \mathcal{X}$ that is nonempty and compact, we define the maximum information gain on $\mathcal{A}$ by
\begin{align}
    \gamma_T(\mathcal{A})
    & :=  \frac{1}{2} \max_{x_{1:T} \in \mathcal{A}^T} \log\det (I_T + \lambda^{-1} K_T(x_{1:T}) ), \label{eq:gamma_T_extended}
\end{align}
where $K_T(x_{1:T}) = [k(x_s,x_{s'})]_{s,s'\in[T]} $.  
The max operation in \cref{eq:gamma_T_extended} is justified because the mapping $x_{1:T} \mapsto \frac{1}{2} \log\det (I_T + \lambda^{-1} K_T ) $ is continuous and $\mathcal{A}^{T} $ is compact. It follows that $\gamma_T(\mathcal{X}) = \gamma_T$.

\paragraph{Finite Approximation}
Compared to \Cref{sec:setup_finite}, the main ingredient for handling compact $\mathcal{X}$ is a finite approximation of $\mathcal{X}$ shown in the following result.
\begin{proposition}
    \label{prop:finite_approximation}
    Consider the setup in \Cref{sec:setup_compact} where $\mathcal{X}$ is compact.
    Then, there exists a nonempty finite $\tilde{\mathcal{X}}\subseteq \mathcal{X}$ that satisfies
    \begin{align}
        \min_{x'\in\tilde{\mathcal{X}}} \norm{\phi(x) - \phi(x')}_k \leq \frac{\sigma}{4B\sqrt{T}}
        \qquad
        \forall x \in \mathcal{X}, \label{eq:finite_approximation}
    \end{align}
    and
        $\gamma_T(\tilde{\mathcal{X}})
        = \gamma_T(\mathcal{X})$.
\end{proposition}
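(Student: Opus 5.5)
The plan is to build $\tilde{\mathcal{X}}$ as the union of two finite sets: a covering set $\mathcal{C}$ that enforces \cref{eq:finite_approximation}, and the points $\{x_1^*,\ldots,x_T^*\}$ of a maximizer of the information gain over $\mathcal{X}$. Enlarging a set can only decrease the covering distance on the left of \cref{eq:finite_approximation}. It also cannot push $\gamma_T$ above $\gamma_T(\mathcal{X})$, because $\tilde{\mathcal{X}}\subseteq\mathcal{X}$. So the two properties can be handled separately, and taking the union preserves both.

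\textbf{Step 1 (covering).} I would first show that the feature map $x\mapsto\phi(x)$ is continuous from $\mathcal{X}$ into $\mathcal{H}_k$. By the reproducing property,
\begin{align*}
    \norm{\phi(x)-\phi(x')}_k^2 = k(x,x) - 2k(x,x') + k(x',x').
\end{align*}
Since $k$ is continuous on $\mathcal{X}\times\mathcal{X}$, the right-hand side tends to $0$ as $x'\to x$. The same continuity fact is already used in \Cref{lem:Delta_k_make_sense}. Hence $\phi(\mathcal{X})$ is compact in $\mathcal{H}_k$, being the continuous image of a compact set. Set $\epsilon := \sigma/(4B\sqrt{T})$ and cover $\phi(\mathcal{X})$ by the open balls $\{h:\norm{h-\phi(x)}_k<\epsilon\}$, $x\in\mathcal{X}$. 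Compactness gives a finite subcover with centers $x^{(1)},\ldots,x^{(m)}$, and we set $\mathcal{C} := \{x^{(1)},\ldots,x^{(m)}\}$. Then every $x\in\mathcal{X}$ satisfies $\min_{x'\in\mathcal{C}}\norm{\phi(x)-\phi(x')}_k<\epsilon$. If one prefers to avoid abstract compactness, uniform continuity of $k$ on the compact set $\mathcal{X}\times\mathcal{X}$ gives the same conclusion: choose $\delta>0$ so that Euclidean distance below $\delta$ implies feature distance below $\epsilon$, and then take a finite Euclidean $\delta$-net of $\mathcal{X}$.

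\textbf{Step 2 (information gain).} By the discussion after \cref{eq:gamma_T_extended}, the map $x_{1:T}\mapsto\frac12\log\det(I_T+\lambda^{-1}K_T(x_{1:T}))$ is continuous on the compact set $\mathcal{X}^T$, so it attains its maximum at some $(x_1^*,\ldots,x_T^*)\in\mathcal{X}^T$. Define
\begin{align*}
    \tilde{\mathcal{X}} := \mathcal{C}\cup\{x_1^*,\ldots,x_T^*\}.
\end{align*}
This set is finite and nonempty. Since $(x_1^*,\ldots,x_T^*)\in\tilde{\mathcal{X}}^T$, we get $\gamma_T(\tilde{\mathcal{X}})\ge\gamma_T(\mathcal{X})$. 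Conversely, $\tilde{\mathcal{X}}^T\subseteq\mathcal{X}^T$ gives $\gamma_T(\tilde{\mathcal{X}})\le\gamma_T(\mathcal{X})$, so the two are equal. The covering property from Step 1 still holds for $\tilde{\mathcal{X}}$ because $\mathcal{C}\subseteq\tilde{\mathcal{X}}$.

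There is no serious obstacle here. The step needing the most care is establishing that $\phi$ is continuous, which is what makes the covering finite. The key idea for the second property is to include the maximizing points explicitly rather than hoping a fine net attains the maximum. Approximating the maximum by a net would only give $\gamma_T(\tilde{\mathcal{X}})\approx\gamma_T(\mathcal{X})$, not the equality the proposition claims.
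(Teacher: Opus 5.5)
Your proposal is correct and follows the paper's proof essentially step for step. Continuity of $\phi$ makes $\phi(\mathcal{X})$ compact, and a finite subcover by balls of radius $\sigma/(4B\sqrt{T})$ gives the covering set. Adjoining a maximizing tuple of $\gamma_T(\mathcal{X})$ then yields $\gamma_T(\tilde{\mathcal{X}})=\gamma_T(\mathcal{X})$ through the two inequalities you state. Your alternative route via uniform continuity of $k$ and a finite Euclidean net is also valid, but it is not needed.
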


The $\tilde{\mathcal{X}}$ satisfying \Cref{prop:finite_approximation} is not unique, 
but for our purposes it suffices to fix any such $\tilde{\mathcal{X}}$.
With $\tilde{\mathcal{X}}$ defined, we reuse the notations $E$, $n$, $(e_i)_{i=1}^n$, $I_E$, $\det_E$, $\tr_E$, $\Delta(\mathcal{X})$, $\nu$, $M_\nu$, $\otimes$, $\rho$, $\nu_T$, $M_T$, $d_T$, $(\lambda_i)_{i=1}^n$ for the effective dimension from \Cref{sec:setup_finite}, with the understanding that every occurrence of $\mathcal{X}$ therein is replaced by $\tilde{\mathcal{X}}$. We sometimes write $d_T(\tilde{\mathcal{X}})$ to make the distinction from $d_T(\mathcal{X})$ explicit.

\subsection{Main Results}
\begin{theorem}
    \label{thm:lower_bound_compact_X_d_T}
    Under the setup of \Cref{sec:setup_compact} where $\mathcal{X}$ is compact,
    for every finite $\tilde{\mathcal{X}}$ satisfying \Cref{prop:finite_approximation},
    \begin{align*}
        R^*_T
        \geq \min \{1, 2\Delta_k \} \cdot \frac{\sigma}{32} \sqrt{Td_T (\tilde{\mathcal{X}})},
    \end{align*}
    where $\Delta_k := \max_{x,x'\in \mathcal{X}} \norm{\phi(x) - \phi(x')}_{k} $.
\end{theorem}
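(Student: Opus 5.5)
The plan is to reduce to the finite-domain argument on $\tilde{\mathcal{X}}$, and then account for two discrepancies: the learner may query points of $\mathcal{X}\setminus\tilde{\mathcal{X}}$, and the relevant maximum is now over all of $\mathcal{X}$. The case split mirrors \Cref{thm:lower_bound_finite_X_d_T}, namely $d_T(\tilde{\mathcal{X}})\geq 4$ versus $d_T(\tilde{\mathcal{X}})<4$.

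\textbf{Case $d_T(\tilde{\mathcal{X}})\geq 4$.} I reuse the hard functions $f_z = \frac{B}{4}\sqrt{\rho/d_T}\,V_T^{-1}\phi(z)$ for $z\in\mathrm{supp}(\nu_T)\subseteq\tilde{\mathcal{X}}$. Here $V_T$ acts on $E=\mathrm{span}\{\phi(x):x\in\tilde{\mathcal{X}}\}$, so $f_z\in E\subseteq\mathcal{H}_k$, and the norm bound $\norm{f_z}_k\leq B/4$ is unchanged. The learner, however, may play any $x\in\mathcal{X}$.

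For such $x$, I pick $x'\in\tilde{\mathcal{X}}$ with $\norm{\phi(x)-\phi(x')}_k\leq \sigma/(4B\sqrt{T}) = \sqrt{\rho}/4$, using \Cref{prop:finite_approximation}. By the reproducing property and Cauchy--Schwarz,
\[
\abs{f_z(x)-f_z(x')}\leq \norm{f_z}_k\,\sqrt{\rho}/4\leq B\sqrt{\rho}/16 .
\]
This lets me transfer the two finite-domain estimates from $x'$ to $x$. Squaring via $(a+b)^2\leq 2a^2+2b^2$, the second moment satisfies $\mathbb{E}_Z[f_Z(x)^2]\leq 2B^2\rho/16+2B^2\rho/256$, which is still $O(B^2\rho)$; this keeps the KL divergence between $Q$ and $P$ at a small absolute constant. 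The first moment satisfies $\mathbb{E}_Z[f_Z(x)]\leq B\sqrt{\rho}/4+B\sqrt{\rho}/16$. Meanwhile $\max_{x\in\mathcal{X}}f_z(x)\geq f_z(z)=B\sqrt{\rho d_T}/4$.

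I then bound the range of the regret, $\max f_z - \min f_z \leq 2\norm{f_z}_k$, via $\abs{f(x)}\leq \norm{f}_k\sqrt{k(x,x)}$. This ends up slightly larger than in the finite case, since the sup over $\mathcal{X}$ may exceed $f_z(z)$, but it is still of order $B\sqrt{\rho d_T}$ using the same nearest-point bound. Repeating the change-of-measure computation (\Cref{lem:change_of_measure}) with these perturbed constants should give $\frac{\sigma}{32}\sqrt{Td_T}$. The loss of a factor of $2$ relative to $\frac{\sigma}{16}$ is exactly the budget absorbed by the discretization error.

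\textbf{Case $d_T(\tilde{\mathcal{X}})<4$.} It suffices to show $R_T^*\geq c\,\min\{1,2\Delta_k\}\sigma\sqrt{T}$, because $\sqrt{d_T}<2$. I use the two-function construction behind the complementary finite-domain lemma, applied directly on $\mathcal{X}$ with maximally separated $x,x'$, which exists by \Cref{lem:Delta_k_make_sense}. Take $f_\pm=\pm\epsilon(\phi(x)-\phi(x'))/\Delta_k$ with $\epsilon$ of order $\sigma/\sqrt{T}$, capped so that $\norm{f_\pm}_k\leq B$.

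The key property is that any query $w$ incurs regret at least about $\epsilon\Delta_k$ under one of the two functions. Indeed, $f_+(w)+f_-(w)=0$ while $\max f_\pm\geq$ the value at $x$ or $x'$, which equals $\epsilon\Delta_k$. The KL divergence between the two hypotheses is at most $T\cdot 4\epsilon^2\max_w\abs{\langle\phi(x)-\phi(x'),\phi(w)\rangle}^2/(\Delta_k^2\sigma^2)$. Since $\abs{\langle\phi(x)-\phi(x'),\phi(w)\rangle}\leq\Delta_k$, this is $O(T\epsilon^2/\sigma^2)$, i.e., a constant. A Le Cam two-point argument then gives regret of order $\sigma\sqrt{T}\min\{1,\Delta_k\}$, and I would tune constants to obtain $\min\{1,2\Delta_k\}/32$. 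If the finite-domain complementary lemma is stated for arbitrary finite sets, I can instead apply it to $\tilde{\mathcal{X}}\cup\{x,x'\}$ and note that queries outside can be handled as above.

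\textbf{Main obstacle.} I expect the hardest step to be the constant bookkeeping in the first case. The max over $\mathcal{X}$ and the regret range now involve points off $\tilde{\mathcal{X}}$, and it is not obvious that the resolution $\sqrt{\rho}/4$ leaves enough slack to close the argument with the factor $1/32$. If it is too tight, my first fallback is to shrink the amplitude $B/4$ slightly and rebalance the resulting constants.
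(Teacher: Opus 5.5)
Your architecture is the paper's: split at $d_T(\tilde{\mathcal X})=4$, bumps $f_z$ built from the regularized $D$-optimal design on $\tilde{\mathcal X}$ with a covering transfer to off-net queries, and the two-function construction for $d_T<4$. The gap is quantitative but real. The off-net transfer you use in the first case cannot deliver $\frac{\sigma}{32}\sqrt{Td_T}$, and your fallback of shrinking the amplitude does not fix it.

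Write the amplitude as $cB$; you take $c=\frac14$. Your pointwise estimate $|f_z(x)-f_z(x')|\le\|f_z\|_k\sqrt{\rho}/4\le cB\sqrt{\rho}/4$ gives three bounds for $d_T\ge 4$ and any $x\in\mathcal X$:
\begin{itemize}
\item $\mathbb{E}_Z[f_Z(x)]\le\frac54 cB\sqrt{\rho}\le\frac58 cB\sqrt{\rho d_T}$;
\item a per-round regret range of $\frac94 cB\sqrt{\rho d_T}$;
\item $\mathbb{E}_Z[f_Z(x)^2]\le\frac{25}{16}c^2B^2\rho$, even using Minkowski instead of $(a+b)^2\le 2a^2+2b^2$, i.e.\ $\sqrt{D_{\mathrm{KL}}(Q\Vert P)/2}\le\frac58 c$.
\end{itemize}
Since $BT\sqrt{\rho d_T}=\sigma\sqrt{Td_T}$, the change-of-measure lemma yields $\bigl(\frac{3c}{8}-\frac{45c^2}{32}\bigr)\sigma\sqrt{Td_T}$. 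At $d_T=4$ this is at most $\frac{1}{40}\sigma\sqrt{Td_T}$ for every $c$, with the maximum at $c=\frac{2}{15}$. At your $c=\frac14$ it is only $\frac{3}{512}\sigma\sqrt{Td_T}$, and with your cruder squaring it is negative.

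Moving the split point does not rescue this, e.g.\ for $\Delta_k\in(\frac25,\frac12]$. There the target is $\frac{\Delta_k}{16}\sigma\sqrt{Td_T}>\frac{1}{40}\sigma\sqrt{Td_T}$, and the two-point bound $\frac{\Delta_k}{8}\sigma\sqrt{T}$ reaches it only for $d_T\le 4$. So the claim that the discretization error fits in the factor-$2$ budget is false for this transfer. The cause is that your pointwise error $cB\sqrt{\rho}/4$ does not shrink with $d_T$, whereas the on-net moments $cB\sqrt\rho$ and $c^2B^2\rho$ are small only because of averaging over $Z$. Your argument gives the right order, but with $\frac{1}{40}$ in place of $\frac{1}{32}$.

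The missing idea is to do the covering step on the whitened vectors, so that the displacement is also averaged over $Z$. The paper proceeds as follows:
\begin{itemize}
\item It extends $M_T$ by zero on $E^\perp$ to an operator $M$ on $\mathcal H_k$.
\item It defines $u_x:=d_T^{-1/2}(M+\rho I_k)^{-1/2}\phi(x)$ for every $x\in\mathcal X$; this agrees with the old $u_x$ on $\tilde{\mathcal X}$.
\item It uses $\|(M+\rho I_k)^{-1/2}\|_{\mathrm{op}}\le\rho^{-1/2}$ to get $\|u_x\|_k\le 1+\frac{1}{4\sqrt{d_T}}\le\frac98$.
\end{itemize}
Then $f_z(x)=cB\sqrt{\rho d_T}\langle u_z,u_x\rangle_k$ on all of $\mathcal X$. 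The facts $\mathbb{E}[u_Z\otimes u_Z]\preceq d_T^{-1}I_k$ and $\|\mathbb{E}[u_Z]\|_k\le d_T^{-1/2}$ apply directly at off-net points, so the relative error is $\frac{1}{4\sqrt{d_T}}$ rather than $\frac14$.

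With $c=\frac16$ this gives $\mathbb{E}_Q[R_T]\ge\frac{7}{96}\sigma\sqrt{Td_T}$, range $\frac38\sigma\sqrt{Td_T}$, and $D_{\mathrm{KL}}(Q\Vert P)\le\frac{9}{512}$. Hence $\mathbb{E}_P[R_T]\ge\frac{29}{768}\sigma\sqrt{Td_T}\ge\frac{\sigma}{32}\sqrt{Td_T}$. At $c=\frac14$ even this sharper transfer gives only $\frac{31}{1024}$, so the shrink is needed, but only together with the whitened bound.

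A smaller omission concerns your second case. When the cap $\epsilon=B$ binds, you cannot get order $\sigma\sqrt{T}$; the two-point argument gives $\frac{\Delta_k}{8}\min\{2BT,\sigma\sqrt T\}$. Converting $2BT$ into $\frac12\sigma\sqrt{Td_T}$ requires $d_T(\tilde{\mathcal X})\le\rho^{-1}$, which follows from $\mathrm{tr}_E(M_T)\le 1$. Also, what equals $\epsilon\Delta_k$ is the sum of the two instantaneous regrets at any query, namely $\max f_+-\min f_+$, not each maximum separately.
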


Although \Cref{thm:lower_bound_compact_X_d_T} depends on $\tilde{\mathcal{X}}$, 
we can remove such dependence by converting $d_T(\tilde{\mathcal{X}})$ to $\gamma_T(\mathcal{X})$, and obtain the same scaling as \Cref{cor:lower_bound_finite_X_gamma_T}.
\begin{corollary}
    \label{cor:lower_bound_copmact_X_gamma_T}
    Under the setup of \Cref{sec:setup_compact} where $\mathcal{X}$ is compact, we have
    \begin{align*}
        R_T^{*} 
        \geq \min \{1, 2\Delta_k \} \cdot \frac{\sqrt{2}\sigma}{32} \sqrt{\frac{T\gamma_T(\mathcal{X})}{1+\log(1+B^2T/\sigma^2)}},
    \end{align*}
    where $\Delta_k := \max_{x,x'\in \mathcal{X}} \norm{\phi(x) - \phi(x')}_{k} $.
\end{corollary}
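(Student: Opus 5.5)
The plan is to combine \Cref{thm:lower_bound_compact_X_d_T} with the comparison between effective dimension and maximum information gain on the finite set $\tilde{\mathcal{X}}$, and then to use the identity $\gamma_T(\tilde{\mathcal{X}}) = \gamma_T(\mathcal{X})$ from \Cref{prop:finite_approximation}. Fix any finite $\tilde{\mathcal{X}}$ as in \Cref{prop:finite_approximation}. By \Cref{thm:lower_bound_compact_X_d_T},
\begin{align*}
    R_T^* \geq \min\{1, 2\Delta_k\}\cdot \frac{\sigma}{32}\sqrt{T d_T(\tilde{\mathcal{X}})}.
\end{align*}
Note that $\Delta_k$ here is taken over the full domain $\mathcal{X}$. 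Hence nothing about the constant $\min\{1,2\Delta_k\}$ needs to be transferred to $\tilde{\mathcal{X}}$.

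It then remains to lower bound $d_T(\tilde{\mathcal{X}})$ in terms of $\gamma_T(\tilde{\mathcal{X}})$. This is exactly the finite-domain step used to derive \Cref{cor:lower_bound_finite_X_gamma_T} from \Cref{thm:lower_bound_finite_X_d_T}, namely \Cref{lem:d_T_bounds}. Comparing the constants in that corollary and theorem, $\sqrt{2}\sqrt{\gamma_T/(1+\log(1+B^2T/\sigma^2))}$ replaces $\sqrt{d_T}$. So \Cref{lem:d_T_bounds} should state
\begin{align*}
    d_T \geq \frac{2\gamma_T}{1+\log(1+B^2T/\sigma^2)}
\end{align*}
for finite domains. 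I would apply it verbatim with $\mathcal{X}$ replaced by $\tilde{\mathcal{X}}$. This is legitimate because $\tilde{\mathcal{X}}$ is a nonempty finite subset of $\mathbb{R}^d$, and the restricted kernel still satisfies $\sup k(x,x)\le 1$ and positive semidefiniteness.

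If the restriction of $k$ to $\tilde{\mathcal{X}}$ happens to be constant, the non-constancy hypothesis of the finite setup fails. Even so, the inequality between $d_T$ and $\gamma_T$ should not need non-constancy, since it is a purely linear-algebraic comparison of the log-determinant with the trace, using $\log\det(I+A)\le \tr(\cdots)\cdot(1+\log(1+\|\cdot\|/\rho))$-type bounds. The main point to check is therefore that \Cref{lem:d_T_bounds} does not use non-constancy. If it did, I would reprove the bound directly. Concavity/optimality of the $D$-optimal design gives $\log\det_E(I+\rho^{-1}M_\nu)\le \log\det_E(I+\rho^{-1}M_T)$ for the empirical design $\nu$ of any $x_{1:T}$, and this upper bounds $2\gamma_T$. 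One then bounds $\sum_j \log(1+\lambda_j/\rho)$ by $d_T(1+\log(1+1/\rho))$ via the elementary inequality $\log(1+a)\le \frac{a}{1+a}(1+\log(1+a))$ together with $\lambda_j\le 1$.

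Finally, substituting $\gamma_T(\tilde{\mathcal{X}})=\gamma_T(\mathcal{X})$ and recalling $1/\rho = B^2T/\sigma^2$ gives the stated bound, with constant $\frac{\sqrt{2}\sigma}{32}$.
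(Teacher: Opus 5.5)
Your proposal is correct and follows the paper's proof: the paper combines \Cref{thm:lower_bound_compact_X_d_T} with \Cref{lem:d_T_bound_compact}, which is exactly \Cref{lem:d_T_bounds} applied to $\tilde{\mathcal{X}}$ together with $\gamma_T(\tilde{\mathcal{X}}) = \gamma_T(\mathcal{X})$ from \Cref{prop:finite_approximation}. Your concern about non-constancy on $\tilde{\mathcal{X}}$ is already handled, because \Cref{lem:d_T_bounds} is stated explicitly to allow a possibly constant $k$; your fallback re-derivation is also valid but is not needed.
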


The proofs are given in Appendix \ref{app:proof_compact}, and are similar to those for finite $\mathcal{X}$, except for \Cref{lem:off_net_bound}, which extends \cref{eq:fact_1} in \Cref{lem:facts} to every $x \in \mathcal{X}$ with a slightly weaker bound.
We also present a high-probability bound (\Cref{cor:high_prob_lower_bound}) in Appendix \ref{app:high_prob_lower_bound}, which states that for every policy $\pi$, there exists $f \in \mathcal{F}_k(B)$ such that $R_T(\pi, f)$ satisfies the same lower bound as
\Cref{cor:lower_bound_copmact_X_gamma_T} with probability at least $1/4$.

\section{Refinements and Comparisons}
\label{sec:discussion}

To apply our lower bounds and compare them with existing results, we consider the setup of \Cref{sec:setup_compact} with $\mathcal{X} = [0,1]^d $.
We will mainly discuss the implications of the lower bound stated with the maximum information gain $\gamma_T$ (\Cref{cor:lower_bound_copmact_X_gamma_T}) instead of the effective dimension $d_T$ (\Cref{thm:lower_bound_compact_X_d_T}).
Following \citep{Lee25}, we consider six types of kernels, including the squared exponential kernel $k_{\mathrm{SE}}$, rational-quadratic kernel $k_{\mathrm{RQ},a}$ where $a > 0$, Mat\'ern kernel $k_{\nu}$ where $\nu > 0$, $\gamma$-exponential kernel $k_{\gamma-\mathrm{Exp}}$ where $\gamma \in (0,2)$, piecewise-polynomial kernel $k_{\mathrm{PP},q}$ where $q \in \mathbb{Z}_{\geq 0}$ and $q \geq 1$ for $d = 1,2$, and the one-dimensional Dirichlet kernel $k_{\mathrm{Dir}, n}$ where $n \in \mathbb{Z}_{+}$. The details of these kernels are given in \Cref{app:kernel}.

The main findings of this section are summarized in \Cref{tab:gamma_T_rates}. The second column presents sharp rates (in $\Theta(\cdot)$) for $\gamma_T$, which follow by combining results from \citep{Lee25,Janz26}, as discussed in Appendix \ref{app:derive_gamma_T_rates}. 
The third column presents upper bounds of $R^*_T$, which are obtained from the RIPS algorithm (\cite{Camilleri21}, Theorem 3) and the MOSS algorithm (\cite{Audibert09,Lattimore20}, Theorem 9.1), and discussed in \Cref{sec:compare_with_upper_bound}.
The fourth column presents lower bounds of $R^*_T$ based on \Cref{cor:lower_bound_copmact_X_gamma_T} (for which we are not aware of rigorously established stronger lower bounds), and is discussed in \Cref{sec:remove_log}.
\Cref{tab:gamma_T_rates} shows that, for all kernels, the gap between bounds is at most $\sqrt{\log T}$, and that the bounds are matching for the Mat\'ern kernel with $\nu \in (0,2)$, $\gamma$-exponential kernel with $\gamma \in (0,2)$, and piecewise-polynomial kernel with $q = 0, d\geq 3$ or $q = 1$.
As such, our \Cref{tab:gamma_T_rates} sharpens \citep[Table 1]{Lee25}, 
where the regret bounds were expressed in the $\widetilde{O}(\cdot), \widetilde{\Omega}(\cdot)$ sense, and the lower bounds for the rational quadratic kernel and Dirichlet kernel were missing.

\begin{remark}
    Another consequence of \Cref{cor:lower_bound_copmact_X_gamma_T} is that any regret upper bound implies an information gain upper bound.  Specifically, if $R^*_T \leq U_T$, then $\gamma_T = O(U_T^2 \log (T) / T)$. In Appendix \ref{app:upper_bound_on_gamma_T}, we also present a refined upper bound of this kind that can remove the log factor.  These results may be of independent interest for kernels where $\gamma_T$ is not yet established but $R^*_T$ can be bounded by other means.
\end{remark}

\begin{table*}[!ht]
    \caption{
    Rates of the maximum information gain $\gamma_T$ (in $\Theta(\cdot)$) and the 
    bounds on minimax regret $R^*_T$ for six kernels on $\mathcal{X} = [0,1]^d$. For all six kernels, the bounds either match or have a $\sqrt{\log T}$ gap. Lower bounds that are new or improved are highlighted in blue. The kernels are defined in \Cref{app:kernel}.
    }
    \label{tab:gamma_T_rates}
    \begin{center}
    \begin{tabular}{llll}
        \toprule
        Kernel & Rate of $\gamma_T$ & Upper Bound of $R^*_T$ & Lower Bound of $R^*_T$ \\
        \midrule
        Mat\'ern                & $T^{d/(2\nu+d)}$
                                & $\begin{cases}
                                    O(\sqrt{T\gamma_T}), & 0<\nu<2,\\
                                    O(\sqrt{T\gamma_T\log T}), & \nu\ge 2
                                  \end{cases}$
                                & $\Omega(\sqrt{T\gamma_T})$ \\
        Squared exponential     & $\frac{(\log T)^{d+1}}{(\log\log T)^d} $ & $O(\sqrt{T\gamma_T})$  &  \textcolor{blue}{$\Omega(\sqrt{T\gamma_T/\log T})$}  \\
        Rational-quadratic      & $(\log T)^{d+1} $  & $O(\sqrt{T\gamma_T})$ &  \textcolor{blue}{$\Omega(\sqrt{T\gamma_T/\log T})$} \\
        $\gamma$-exponential    & $T^{d / (\gamma + d)} $  & $O(\sqrt{T\gamma_T})$ & $\Omega(\sqrt{T\gamma_T})$ \\
        Piecewise-polynomial    & $T^{d/(2q+1+d)}$
                                & $\begin{cases}
                                    O(\sqrt{T\gamma_T}), & q=0, d\geq 3\ \mathrm{or}\ q = 1,\\
                                    O(\sqrt{T\gamma_T\log T}), & q\ge 2
                                  \end{cases}$
                                & $\Omega(\sqrt{T\gamma_T})$ \\
        Dirichlet ($d =1$)      & $\log T$   & $O(\sqrt{T\gamma_T})$ & \textcolor{blue}{$\Omega(\sqrt{T\gamma_T/\log T})$} \\
        \bottomrule
    \end{tabular}
    \end{center}
\end{table*}

\subsection{When can the Log Factor be Removed?}
\label{sec:remove_log}

When $\mathcal{X}$ is fixed and compact, \Cref{cor:lower_bound_copmact_X_gamma_T} gives $R^*_T = \Omega\bigl( \sqrt{T\gamma_T/\log T} \bigr) $.
We first show that the $\log T$ term therein cannot always be removed,
then discuss a condition on $\gamma_T$ that removes this term, and finally apply this condition to the kernels in \Cref{tab:gamma_T_rates}.

\paragraph{Lack of General $\Omega(\sqrt{T\gamma_T})$ Lower Bound.}

We consider the setting in \Cref{sec:setup_finite} with $\mathcal{X}$ being fixed and finite, and define $K:= \abs{\mathcal{X}} $.
Since $k$ is non-constant, $K \geq 2$.
This can be viewed as a multi-armed bandit problem with Gaussian noise and a constraint that $\norm{f}_k \leq B$. For every $x \in \mathcal{X}$, the reproducing property and Cauchy-Schwarz show that $\abs{f(x)} \leq \abs{\langle f, \phi(x) \rangle_k} \leq \norm{f}_k \norm{\phi(x)}_k \leq B $.
Then, applying the MOSS algorithm (\cite{Lattimore20}, Theorem 9.1) gives
\begin{align}
    R^*_T 
    \leq 39 \sigma \sqrt{KT} + 2BK
    = O(\sqrt{T}). \label{eq:moss}
\end{align}
Fix any sequence $x_1,\ldots, x_T \in \mathcal{X}$.
Since $\abs{\mathcal{X}} = K$, the Gram matrix $K_T(x_{1:T})$ satisfies $\mathrm{rank}(K_T(x_{1:T})) \leq \dim \mathrm{span} \{ \phi(x): x \in \mathcal{X} \} \leq K$, and we list its nonzero eigenvalues and append zeros to obtain the $K$ values $\lambda_1,\ldots, \lambda_K$. Notice that $\sum_{i=1}^K \lambda_i = \tr(K_T(x_{1:T})) = \sum_{t=1}^T k(x_t,x_t) \leq T $. Then,
\begin{multline}
    \log \det (I_T + \lambda^{-1}K_T(x_{1:T})) 
    = \sum_{j=1}^K \log (1 +  \lambda_j/\lambda )
    \leq  K \log \biggl(1 + \sum_{j=1}^K \lambda_j/(K\lambda) \biggr) \leq  K \log \bigl(1 + T/(K\lambda) \bigr),
\end{multline}
where the first inequality applies Jensen's inequality. 
Multiplying by $1/2$ and taking the maximum over $x_{1:T} \in \mathcal{X}^T$ gives $\gamma_T(\mathcal{X}) = O(\log T)$.
On the other hand, \Cref{lem:gamma_T_log_T_lower_bound} in \Cref{app:technical_lemmas} gives $\gamma_T = \Omega(\log T)$, and hence $\gamma_T(\mathcal{X}) = \Theta(\log T)$. \Cref{lem:regret_2} shows that $R^*_T = \Omega(\sqrt{T})$, which combined with \cref{eq:moss} and $\gamma_T = \Theta(\log T)$ gives
\begin{align}
    R^*_T 
    = \Theta(\sqrt{T})
    = \Theta(\sqrt{T\gamma_T/\log T}).
\end{align}
Consequently, an $\Omega(\sqrt{T\gamma_T})$ lower bound is impossible in general, and the log factor is unavoidable.

\paragraph{A Sufficient Condition for $\Omega(\sqrt{T\gamma_T})$ Regret.}

We now regard $\gamma_T$ as the function $\gamma_{(\cdot)}$ evaluated at $T$, accordingly using $\gamma_{m}$ below. 
We start by presenting a lower bound stated in terms of the difference of the maximum information gain.  Here and subsequently, all omitted proofs are given in \Cref{app:proofs_discussion}.

\begin{lemma}
    \label{lem:lower_bound_diff_gamma}
    Consider the setup of \Cref{sec:setup_compact} where $\mathcal{X}$ is compact.
    For every $T \geq 2$ and every integer $1 \leq m < T$, we have
    \begin{align}
        R^*_T \geq  c_1 \cdot \sqrt{ \frac{T(\gamma_T - \gamma_{m}) }{(1+\lambda^{-1}) \sum_{t = m+1}^T t^{-1} }},
    \end{align}
    where $c_1 = \min \bigl\{ \frac{\sigma}{32}, \frac{\sigma \Delta_k}{16}  \bigr\}$.    
\end{lemma}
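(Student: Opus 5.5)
The plan is to combine the compact-domain lower bound of \Cref{thm:lower_bound_compact_X_d_T}, $R^*_T \geq \min\{1,2\Delta_k\}\frac{\sigma}{32}\sqrt{T d_T(\tilde{\mathcal{X}})}$, with a refined comparison between $d_T$ and the \emph{increment} $\gamma_T-\gamma_m$. This replaces the crude bound $\gamma_T \lesssim d_T \log T$ used in \Cref{cor:lower_bound_copmact_X_gamma_T}. The constant matches: $\min\{1,2\Delta_k\}\sigma/32 = \min\{\sigma/32,\sigma\Delta_k/16\} = c_1$. It therefore suffices to prove
\begin{align*}
\gamma_T - \gamma_m \leq (1+\lambda^{-1})\, d_T(\tilde{\mathcal{X}}) \sum_{t=m+1}^T t^{-1}.
\end{align*}
A factor $1/2$ of slack is available here and can be discarded. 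Since $\gamma_T(\tilde{\mathcal{X}})=\gamma_T(\mathcal{X})$ and $\gamma_m(\tilde{\mathcal{X}})\leq \gamma_m(\mathcal{X})$, I can work entirely on the finite set $\tilde{\mathcal{X}}$.

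\textbf{Telescoping step.} I would write $\gamma_T-\gamma_m=\sum_{t=m+1}^T(\gamma_t-\gamma_{t-1})$ and bound each increment by $(1+\lambda^{-1})d_T/t$. Fix a maximizer $x_{1:t}$ of $\gamma_t$. For each $s$, deleting $x_s$ gives a feasible point for $\gamma_{t-1}$, so
\begin{align*}
\gamma_t-\gamma_{t-1}\leq \tfrac12\log\bigl(1+\lambda^{-1}\sigma^2_{-s}(x_s)\bigr),
\end{align*}
where $\sigma^2_{-s}$ is the posterior variance given the other $t-1$ points. Averaging over $s$ and using $\log(1+a)\leq a$ together with the leave-one-out identity gives
\begin{align*}
\gamma_t-\gamma_{t-1} \leq \frac{1}{2t}\sum_s \frac{\lambda^{-1}\sigma_{t}^2(x_s)}{1-\lambda^{-1}\sigma^2_t(x_s)}.
\end{align*}
Here $\lambda^{-1}\sigma^2_t(x_s)=\langle\phi(x_s),(K+\lambda I)^{-1}\phi(x_s)\rangle$ with $K=\sum_s\phi(x_s)\otimes\phi(x_s)$. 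These quantities are the leverage scores of the empirical design $\hat\nu$ at regularization $\lambda/t$. Their sum is $\tr_E(M_{\hat\nu}(M_{\hat\nu}+\tfrac{\lambda}{t}I_E)^{-1})$.

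\textbf{Relating to the $D$-optimal design.} Next I would bound this empirical effective dimension by $d_t$, the effective dimension of the regularized $D$-optimal design at level $\lambda/t$. The natural route is the Kiefer--Wolfowitz-type equivalence underlying \Cref{lem:facts}: the fact $\max_x\langle\phi(x),V^{-1}\phi(x)\rangle=d$ at the optimum, together with concavity of $\nu\mapsto\log\det_E(M_\nu+\rho I_E)$. Equivalently, the envelope identity $\frac{d}{dt}\max_\nu\frac12\log\det(I+\tfrac{t}{\lambda}M_\nu)=d_t/(2t)$ shows that the continuous relaxation has increments exactly $d_t/(2t)$. The denominators $1-\lambda^{-1}\sigma_t^2\geq \lambda/(1+\lambda)$ (since $k(x,x)\le 1$) account for the factor $1+\lambda^{-1}$. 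Finally, I need monotonicity $d_t\leq d_T$ for $t\leq T$. I would prove this from the optimality conditions as $\rho$ decreases, or sidestep it by comparing directly with the design $\nu_T$ via its optimality.

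\textbf{Main obstacle.} The hardest step is controlling the empirical leverage sum of an \emph{arbitrary maximizing} sequence by the $D$-optimal $d_T$, since an arbitrary design can in principle have larger effective dimension than the $D$-optimal one. If the leave-one-out route fails, I would first try the relaxation $G(t)=\max_\nu\frac12\log\det_E(I_E+\tfrac t\lambda M_\nu)$. It satisfies $\gamma_t\leq G(t)$, and I would try to show $\gamma_m\geq G(m)-{}$(rounding error). I would then integrate $G'(s)=d_s/(2s)$ and use $\int_m^T ds/s\le \sum_{t=m+1}^T t^{-1}\cdot(1+\lambda^{-1})$-type comparisons to absorb the rounding and discretization losses.
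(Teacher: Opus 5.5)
Your reduction to $\gamma_T-\gamma_m\le(1+\lambda^{-1})\,d_T(\tilde{\mathcal{X}})\sum_{t=m+1}^T t^{-1}$ is sound, and so is the leave-one-out telescoping on $\tilde{\mathcal{X}}$. For a maximizer of $\gamma_t(\tilde{\mathcal{X}})$ with empirical design $\hat\nu_t$, you correctly get $\gamma_t-\gamma_{t-1}\le\frac{1+\lambda^{-1}}{2t}\,\mathrm{tr}_E\bigl(M_{\hat\nu_t}(M_{\hat\nu_t}+\tfrac{\lambda}{t}I_E)^{-1}\bigr)$. The gap is the step you call the main obstacle: the whole argument rests on it, and you leave it open. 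The Kiefer--Wolfowitz identity alone does not bound the effective dimension of a non-optimal design by $d_T$; as you note, a general design can exceed it. So your target ``empirical effective dimension $\le d_t$, then $d_t\le d_T$'' is not something the sketch delivers. The fallback through $G$ is worse. Since $\gamma_T-\gamma_{T-1}\le [G(T)-G(T-1)]+[G(T-1)-\gamma_{T-1}]$, the case $m=T-1$ would need the rounding loss $G(T-1)-\gamma_{T-1}$ to be $O(d_T/T)$, the size of a single increment. Nothing in your plan controls discrete-versus-continuous design losses at that scale.

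The missing idea is that any design exceeds $d_T$ by at most a factor $2$ (\Cref{lem:bound_by_two_d_T}). For every $\nu\in\Delta(\tilde{\mathcal{X}})$,
\begin{align*}
\mathrm{tr}_E\bigl(M_\nu(M_\nu+\rho I_E)^{-1}\bigr)
&\le \mathrm{tr}_E\bigl((M_\nu+M_T)(M_\nu+M_T+\rho I_E)^{-1}\bigr)
\le \mathrm{tr}_E\bigl((M_\nu+M_T)(M_T+\rho I_E)^{-1}\bigr)\\
&= \sum_{x\in\tilde{\mathcal{X}}}\nu(x)\,\langle\phi(x),(M_T+\rho I_E)^{-1}\phi(x)\rangle_k + d_T
\le 2d_T.
\end{align*}
This uses three facts. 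First, $A\mapsto \mathrm{tr}_E(A(A+\rho I_E)^{-1})=n-\rho\,\mathrm{tr}_E((A+\rho I_E)^{-1})$ is monotone. Second, $(M_\nu+M_T+\rho I_E)^{-1}\preceq(M_T+\rho I_E)^{-1}$. Third, the Kiefer--Wolfowitz equality $\max_x\langle\phi(x),(M_T+\rho I_E)^{-1}\phi(x)\rangle_k=d_T$ holds (\Cref{lem:camilleri_lem_15}).

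This factor $2$ is exactly the ``slack of $1/2$'' you noticed, so it should not be discarded. With it your route closes directly. Since $\lambda/t\ge\rho=\lambda/T$, the leverage sum at level $\lambda/t$ is at most the one at level $\rho$, hence at most $2d_T$. This gives $\gamma_t-\gamma_{t-1}\le(1+\lambda^{-1})d_T/t$, with no need for monotonicity of $d_t$.

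The paper instead fixes one maximizing tuple of $\gamma_T$ and deletes points greedily by minimum leverage. It bounds $\ell_t\le\frac1t\mathrm{tr}_E(S_t(S_t+\lambda I_E)^{-1})\le\frac1t\mathrm{tr}_E(S_T(S_T+\lambda I_E)^{-1})$ and applies the same factor-$2$ lemma once, to that tuple's empirical design. Your fresh-maximizer-per-$t$ variant is equally valid once this lemma is supplied.
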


\Cref{lem:lower_bound_diff_gamma} implies the following sufficient condition for $\Omega(\sqrt{T\gamma_T})$ regret.

\begin{proposition}
    \label{prop:sufficient_condition_to_remove_log}
    Consider the setup of \Cref{sec:setup_compact} with $\mathcal{X}$ being fixed and compact.
    Suppose that there exists a fixed integer $L \geq 2$ and a constant $c \in (0,1)$ such that
    \begin{align}
        \gamma_{\lfloor T/L \rfloor} \leq (1-c) \gamma_T \label{eq:growth_condition}
    \end{align}
    for all sufficiently large $T$. Then $R^*_T = \Omega(\sqrt{T\gamma_T})$.
\end{proposition}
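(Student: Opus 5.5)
The plan is to apply \Cref{lem:lower_bound_diff_gamma} with $m := \lfloor T/L \rfloor$ and to check that each factor in that bound has the right order.

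For all sufficiently large $T$ we have $T \geq 2$ and $m \geq 1$. We also have $m < T$, since $L \geq 2$. So the lemma applies and gives
\begin{align*}
    R^*_T \geq c_1 \sqrt{\frac{T(\gamma_T - \gamma_m)}{(1+\lambda^{-1})\sum_{t=m+1}^T t^{-1}}}.
\end{align*}
The growth condition \cref{eq:growth_condition} gives $\gamma_T - \gamma_m \geq c\,\gamma_T$ for all large $T$. The constants $c_1$ and $1+\lambda^{-1}$ do not depend on $T$, and $c_1>0$ because $\Delta_k>0$.

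The remaining step, and the one I expect to need the most care, is to bound the harmonic tail by a constant depending only on $L$. Since $t \mapsto 1/t$ is decreasing, we have $\sum_{t=m+1}^T t^{-1} \leq \int_m^T t^{-1}\,dt = \log(T/m)$. Using $m \geq T/L - 1 \geq T/(2L)$ once $T \geq 2L$, this becomes $\log(T/m) \leq \log(2L)$. This bound is exactly what removes the $\log T$ that appears in \Cref{cor:lower_bound_copmact_X_gamma_T}: the sum now runs over only a constant-ratio window of $t$, rather than starting from $t=1$.

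Combining the three bounds, for all large $T$ we get
\begin{align*}
    R^*_T \geq c_1 \sqrt{\frac{c}{(1+\lambda^{-1})\log(2L)}}\sqrt{T\gamma_T}.
\end{align*}
Since $\mathcal{X}$, $k$, $B$, $\sigma$, $L$ and $c$ are all fixed, the prefactor is a positive constant. This yields $R^*_T = \Omega(\sqrt{T\gamma_T})$.
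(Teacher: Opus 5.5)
Your proposal is correct and follows essentially the same route as the paper's proof. Both apply the information-difference lemma with $m=\lfloor T/L\rfloor$, use the growth condition to get $\gamma_T-\gamma_m\ge c\,\gamma_T$, and bound $\sum_{t=m+1}^T t^{-1}\le \log(T/m)\le\log(2L)$ once $T\ge 2L$, which yields the constant $c_1\sqrt{c/((1+\lambda^{-1})\log(2L))}$.
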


For $\mathcal{X} = [0,1]^d$, \Cref{tab:gamma_T_rates} shows that for the Mat\'ern, $\gamma$-exponential and piecewise-polynomial kernels, $\gamma_T = \Theta(T^{\alpha})$ for some $\alpha > 0$. It is straightforward to verify that these kernels satisfy \Cref{prop:sufficient_condition_to_remove_log}:
since $\gamma_T = \Theta(T^{\alpha}) $, there exist constants $a, b > 0$ such that $aT^{\alpha} \leq \gamma_T \leq bT^{\alpha} $ for all sufficiently large $T$. Then we choose a fixed integer $L \geq 2$ satisfying $\frac{b}{a} L^{-\alpha} \leq \frac{1}{2}$.
Therefore, for sufficiently large $T$,
\begin{align}
    \frac{\gamma_{\lfloor T/L \rfloor}}{\gamma_T}
    \leq \frac{b}{a} \biggl( \frac{\lfloor T/L \rfloor}{T} \biggr)^{\alpha}
    \leq \frac{b}{a} L^{-\alpha}
    \leq \frac{1}{2}.
\end{align}
Then \cref{eq:growth_condition} holds with $c = 1/2$, and \Cref{prop:sufficient_condition_to_remove_log} gives $R^*_T = \Omega(\sqrt{T\gamma_T}) = \Omega(T^{(\alpha + 1)/2} ) $.

On the other hand, the squared exponential, rational-quadratic and Dirichlet kernels fail \cref{eq:growth_condition}.  
To see this, suppose by contradiction that \cref{eq:growth_condition} holds with $L \geq 2$ and $c \in (0,1)$. Then we choose $N$ sufficiently large that $\gamma_N > 0$ and \cref{eq:growth_condition} holds for every $T \geq N$. 
We now fix $j\in \mathbb{Z}_{+}$. Iterating \cref{eq:growth_condition} gives
\begin{align}
    \gamma_{L^jN}
    \geq (1-c)^{-1} \gamma_{L^{j-1}N}
    \geq \cdots
    \geq (1-c)^{-j} \gamma_{N}.
\end{align}
Define $\eta := \log((1-c)^{-1})/\log L > 0 $. It follows that $\gamma_{L^jN} \geq \gamma_N N^{-\eta} (L^jN)^{\eta} $, which contradicts the poly-logarithmic rates of $\gamma_T$ for these kernels in \Cref{tab:gamma_T_rates}.
Failing \cref{eq:growth_condition} does not eliminate the possibility of $\Omega(\sqrt{T\gamma_T})$ regret for these kernels, but proving (or disproving) it would require a different argument.

\subsection{Comparison with Upper Bounds}
\label{sec:compare_with_upper_bound}

In this section, we first derive a general $O(\sqrt{T\gamma_T \log T})$ upper bound from \citep[Theorem 3]{Camilleri21}, then apply two approaches to remove the $\log T$ term.

Since upper bounds of $R^*_T$ are often stated on finite domains \citep{Camilleri21, Li22}, 
we extend such bounds to $\mathcal{X} = [0,1]^d$ as follows.
Let $\tilde{\mathcal{X}} \subset \mathcal{X}$ be finite. 
Define
\begin{align}
    \epsilon(\tilde{\mathcal{X}}) 
    := \sup_{f\in\mathcal{F}_k(B)} \Big(\max_{x\in\mathcal{X}} f(x) - \max_{a\in\tilde{\mathcal{X}} } f(a) \Big). \label{eq:epsilon_x}
\end{align}
Then, running a policy on $\tilde{\mathcal{X}}$ introduces at most an extra $T\epsilon(\tilde{\mathcal{X}})$ term to the minimax regret on $\mathcal{X}$.

\paragraph{A General $O(\sqrt{T\gamma_T \log T})$ Upper Bound.}

For finite $\mathcal{X}$, \citep[Theorem 3]{Camilleri21} presents a high-probability upper bound on $R_T(\pi_{\mathrm{RIPS}}, f) $, where $\pi_{\mathrm{RIPS}}$ corresponds to its RIPS algorithm. This bound leads to the following minimax regret upper bound.

\begin{lemma}[\cite{Camilleri21}, Theorem 3]
    \label{lem:cam_thm_3}
    Consider the setup of \Cref{sec:setup_finite} where $\mathcal{X}$ is finite. Then
    \begin{align*}
        R^*_T 
        = O \biggl( \log(\abs{\mathcal{X}}T) + \sqrt{T\max_{\mathcal{V} \subseteq \mathcal{X}} h(\mathcal{V}, \rho)}  \sqrt{\log(\abs{\mathcal{X}} T)} \biggr),
    \end{align*}
    where, for each nonempty $\mathcal{V}\subseteq \mathcal{X}$, $h(\mathcal{V}, \rho) = \inf_{\nu \in \Delta(\mathcal{V})} \max_{x\in\mathcal{V}} \langle \phi(x), (M_{\nu} + \rho I_E)^{-1} \phi(x) \rangle_k $.
\end{lemma}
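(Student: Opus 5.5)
The plan is to obtain this lemma as a direct corollary of the high-probability guarantee in \citep[Theorem 3]{Camilleri21}. We convert that guarantee to an expected-regret bound by choosing the confidence parameter as a function of $T$, after matching the notation of that work to ours. The lemma then follows since $R^*_T \le \sup_{f\in\mathcal{F}_k(B)}\mathbb{E}[R_T(\pi_{\mathrm{RIPS}},f)]$.

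\textbf{Step 1: translate the setting.} Camilleri et al.\ work with finite-dimensional features, a norm bound on the unknown parameter, and a regularization level in their $G$-optimal design quantity. Here I would take the features to be the coordinates of $\phi(x)$ in the orthonormal basis $e_1,\ldots,e_n$ of $E$. For $f\in\mathcal{F}_k(B)$, let $\theta$ denote the projection of $f$ onto $E$. Only this projection matters on $\mathcal{X}$, because $f(x)=\langle f,\phi(x)\rangle_k$ and $\phi(x)\in E$. It satisfies $\|\theta\|\le B$. Under this identification, $M_\nu$ and $\det_E$, $\tr_E$ become ordinary matrices and matrix functionals. I would then check that their regularization parameter, once the noise level $\sigma$ and bound $B$ are normalized, equals $\rho=\sigma^2/(B^2T)$. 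If their convention instead uses $\lambda$ with an explicit factor of $T$, the two forms agree up to constants, since $h(\mathcal{V},\rho)$ is monotone in the regularization and only constant factors are at stake. Their worst-case design value over surviving sets is bounded by $\max_{\mathcal{V}\subseteq\mathcal{X}} h(\mathcal{V},\rho)$. This holds because every set of surviving arms is some nonempty subset $\mathcal{V}$, and the inner infimum over $\Delta(\mathcal{V})$ is exactly their $G$-optimal objective.

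\textbf{Step 2: from high probability to expectation.} Theorem 3 gives, with probability at least $1-\delta$, a bound of the form
\[
R_T \le C\Bigl(\log(|\mathcal{X}|\log T/\delta) + \sqrt{T\max_{\mathcal{V}} h(\mathcal{V},\rho)\,\log(|\mathcal{X}|\log T/\delta)}\Bigr).
\]
There may also be additive burn-in terms from the first rounds of the elimination scheme. On the complementary event, the per-round regret is at most $2B$, because $|f(x)|\le B$ by Cauchy--Schwarz, so $R_T\le 2BT$ deterministically. Setting $\delta=1/T$ gives
\[
\mathbb{E}[R_T]\le C\bigl(\log(|\mathcal{X}|T\log T)+\sqrt{T\max_{\mathcal{V}} h\cdot \log(|\mathcal{X}|T\log T)}\bigr)+2B.
\]
Since $\log(|\mathcal{X}|T\log T)\le 2\log(|\mathcal{X}|T)$ for $T\ge 2$, and $2B$ is a constant absorbed into the $O(\cdot)$, this yields the claim. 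The bound holds uniformly over $f\in\mathcal{F}_k(B)$.

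\textbf{Main obstacle.} The delicate part is Step 1: verifying that the quantity in their theorem really is dominated by $\max_{\mathcal{V}} h(\mathcal{V},\rho)$ with our $\rho$ and our operator on $E$. Their statement uses dimension-based notation and a possibly different scaling of the regularizer, as the earlier footnote on $\tr_E$ indicates. There may also be hidden terms that need to be checked to be at most $O(\log(|\mathcal{X}|T))$. To handle this, I would first rewrite $h$ in matrix form via the basis $(e_i)$. I would then absorb any discrepancy in the regularization constant using the elementary fact that $(M_\nu+c\rho I_E)^{-1}\preceq \max\{1,c^{-1}\}(M_\nu+\rho I_E)^{-1}$.
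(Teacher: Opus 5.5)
Your proposal is correct and takes the same approach as the paper. The paper gives no argument beyond citing the high-probability guarantee of \citep[Theorem 3]{Camilleri21} for RIPS and saying it ``leads to'' the minimax bound; your identification of the surviving-set design value with $\max_{\mathcal{V}\subseteq\mathcal{X}} h(\mathcal{V},\rho)$, the choice $\delta = 1/T$, and the deterministic bound $R_T \le 2BT$ on the failure event are exactly the omitted bookkeeping. Your monotonicity bound $(M_\nu + c\rho I_E)^{-1} \preceq \max\{1,c^{-1}\}(M_\nu+\rho I_E)^{-1}$ is a sensible safeguard, but it only preserves the stated rate if any mismatch between their regularization level and $\rho$ is a $T$-independent constant.
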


The following lemma connects $h(\mathcal{V}, \rho)$ to $\gamma_T$.

\begin{lemma}
    \label{lem:connect_fV_to_gamma_T}
    Consider the setup of \Cref{sec:setup_finite} where $\mathcal{X}$ is finite. 
    For nonempty $\mathcal{V} \subseteq \mathcal{X}$,
    define $h(\mathcal{V}, \rho)$ as in \Cref{lem:cam_thm_3}. Then
    \begin{align}
        \max_{\mathcal{V} \subseteq \mathcal{X}} h(\mathcal{V}, \rho) \leq 2(1 + \lambda^{-1}) \gamma_T.
    \end{align}
\end{lemma}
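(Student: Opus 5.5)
The plan is to bound each $h(\mathcal{V},\rho)$ by an effective dimension on $\mathcal{V}$, bound that by a log-determinant, and bound the log-determinant by $\gamma_T$ after rounding the design to $T$ points. Fix a nonempty $\mathcal{V}\subseteq\mathcal{X}$ and let $\nu_{\mathcal{V}}\in\Delta(\mathcal{V})$ be a regularized $D$-optimal design on $\mathcal{V}$, i.e.\ a maximizer of $\log\det_E(M_\nu+\rho I_E)$ over $\Delta(\mathcal{V})$. Since $\nu_{\mathcal{V}}$ is feasible in the infimum defining $h$, we have $h(\mathcal{V},\rho)\le \max_{x\in\mathcal{V}}\langle\phi(x),(M_{\nu_{\mathcal{V}}}+\rho I_E)^{-1}\phi(x)\rangle_k$.

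\textbf{Step 1 (Kiefer--Wolfowitz).} The first-order optimality condition at $\nu_{\mathcal{V}}$, in the direction $\delta_x-\nu_{\mathcal{V}}$ for $x\in\mathcal{V}$, reads $\langle\phi(x),V^{-1}\phi(x)\rangle_k-\tr_E(M_{\nu_{\mathcal{V}}}V^{-1})\le 0$, where $V:=M_{\nu_{\mathcal{V}}}+\rho I_E$. This is the same computation underlying \cref{eq:fact_1,eq:fact_2} in \Cref{lem:facts}, applied to $\mathcal{V}$ instead of $\mathcal{X}$. Hence
\[
h(\mathcal{V},\rho)\le d(\mathcal{V}) := \sum_j \frac{\mu_j}{\mu_j+\rho},
\]
where $\mu_j$ are the eigenvalues of $M_{\nu_{\mathcal{V}}}$. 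Since $\sup_x k(x,x)\le1$, we have $\sum_j\mu_j\le1$, so each $\mu_j\in[0,1]$.

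\textbf{Step 2 (effective dimension to log-determinant).} Set $u=\mu_j/\rho$ and use the elementary inequality $u/(1+u)\le\log(1+u)$. This gives
\[
d(\mathcal{V})\le\log\det_E\bigl(I_E+\rho^{-1}M_{\nu_{\mathcal{V}}}\bigr)=\log\det_E\Bigl(I_E+\lambda^{-1}\sum_{x}T\nu_{\mathcal{V}}(x)\,\phi(x)\otimes\phi(x)\Bigr),
\]
using $\rho^{-1}=T/\lambda$. By Sylvester's identity, if every $T\nu_{\mathcal{V}}(x)$ were an integer $n_x$ with $\sum_x n_x=T$, the right-hand side would equal $\log\det(I_T+\lambda^{-1}K_T(x_{1:T}))\le 2\gamma_T$ for the sequence repeating each $x$ exactly $n_x$ times. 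That would give $h\le 2\gamma_T$, which is stronger than the claim.

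\textbf{Step 3 (rounding; the main obstacle).} In general the weights $T\nu_{\mathcal{V}}(x)$ are fractional, and the slack factor $(1+\lambda^{-1})$ must pay for this. Write $T\nu_{\mathcal{V}}(x)=n_x+r_x$ with $n_x=\lfloor T\nu_{\mathcal{V}}(x)\rfloor$ and $r_x\in[0,1)$. I would first try to handle the fractional part at the level of individual eigenvalue terms rather than through subadditivity of $\log\det$. Concretely:
\begin{itemize}
\item Compare $\mu_j/(\mu_j+\rho)$ directly with a term of the $T$-point Gram matrix built from $\lceil T\nu_{\mathcal{V}}(x)\rceil$ repetitions, truncated to $T$ points.
\item Use the inequality $\frac{u}{1+u}\le(1+\lambda^{-1})\frac{u'}{1+u'}$ whenever $u'\ge u\lambda/(\lambda+1)$. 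This kind of monotone comparison naturally produces the factor $(1+\lambda^{-1})$: adding or removing a single point of $\phi$-norm at most $1$ changes the regularized operator $\lambda I+\sum\phi\otimes\phi$ by at most a factor $(1+\lambda^{-1})$ in the Loewner order.
\end{itemize}
If that fails, the fallback is $\det(I+A+C)\le\det(I+A)\det(I+C)$ applied with $A=\lambda^{-1}\sum_x n_x\,\phi(x)\otimes\phi(x)$ and $C=\lambda^{-1}\sum_x r_x\,\phi(x)\otimes\phi(x)$. The integer part $A$ uses at most $T$ points, so $\log\det(I+A)\le 2\gamma_T$. For the fractional part, $\log\det(I+C)\le \tr(C)\le\lambda^{-1}\sum_x r_x$, and this must be controlled by $2\lambda^{-1}\gamma_T$. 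For that control I would use that only the support of $\nu_{\mathcal{V}}$ matters and that the contribution of each support point can be charged to a log-determinant term $\log(1+\lambda^{-1}\,\cdot)$ inside $\gamma_T$.

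\textbf{Step 4 (conclusion).} Finally take the maximum over $\mathcal{V}$. Since $\gamma_T(\mathcal{V})\le\gamma_T(\mathcal{X})$, the bound from Steps 1--3 holds uniformly in $\mathcal{V}$, giving $\max_{\mathcal{V}\subseteq\mathcal{X}}h(\mathcal{V},\rho)\le 2(1+\lambda^{-1})\gamma_T$.
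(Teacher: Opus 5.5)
Your Steps 1, 2 and 4 are fine, but Step 3 is the whole difficulty and it is not carried out. The first bullet is only a heuristic. The concrete fallback needs the fractional part to be charged on its own, i.e.\ $\log\det_E(I_E+C)\le 2\lambda^{-1}\gamma_T$, and this is false in general.

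Take $k(x,x')=\mathbb{I}\{x=x'\}$ on $|\mathcal{X}|=2T$ points, with $\lambda=1$ and $\mathcal{V}=\mathcal{X}$. The regularized $D$-optimal design is uniform, so every $n_x=0$ and $r_x=1/2$. Hence $A=0$ and $C=\frac{1}{2}I_E$. Then $\log\det_E(I_E+C)=2T\log(3/2)>T\log 2=2\lambda^{-1}\gamma_T$, even before the trace bound, and $\sum_x r_x=T>2\gamma_T$. The lemma itself holds in this example. The loss comes from the splitting, which wastes the $2\gamma_T$ budget reserved for the (here empty) integer part.

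The idea you are missing is that no rounding is needed. $h$ is an infimum over designs, so you can plug in a $T$-point design directly. The paper picks $x_t\in\mathcal{V}$ greedily to maximize $a_t:=\langle\phi(x),(S_{t-1}+\lambda I_E)^{-1}\phi(x)\rangle_k$, where $S_t:=\sum_{s\le t}\phi(x_s)\otimes\phi(x_s)$.

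\begin{itemize}
\item For the empirical measure $\hat\nu$ one has $T(M_{\hat\nu}+\rho I_E)=S_T+\lambda I_E$. Hence $h(\mathcal{V},\rho)\le Ta_{T+1}\le\sum_{t=1}^T a_t$, since the $a_t$ are nonincreasing.
\item The matrix determinant lemma telescopes to $2\gamma_T\ge\log\det_E(I_E+\lambda^{-1}S_T)=\sum_{t=1}^T\log(1+a_t)$.
\item The factor $1+\lambda^{-1}$ then comes from $a_t\le\lambda^{-1}$ and $\log(1+a)\ge a/(1+\lambda^{-1})$ on $[0,\lambda^{-1}]$, not from any rounding loss.
\end{itemize}

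If you want to keep your Steps 1--2, the same telescoping also works with $X_1,\dots,X_T$ drawn i.i.d.\ from $\nu_{\mathcal{V}}$. Combine it with $\mathbb{E}[(S_{t-1}+\lambda I_E)^{-1}]\succeq((t-1)M_{\nu_{\mathcal{V}}}+\lambda I_E)^{-1}$ (operator convexity of inversion) and a Riemann-sum comparison. This gives $\log\det_E(I_E+\rho^{-1}M_{\nu_{\mathcal{V}}})\le(1+\lambda^{-1})\,\mathbb{E}\log\det_E(I_E+\lambda^{-1}S_T)\le 2(1+\lambda^{-1})\gamma_T$, which is exactly what your Step 3 was after.
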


Combining \Cref{lem:cam_thm_3,lem:connect_fV_to_gamma_T} gives $R^*_T = O \bigl( \log(\abs{\mathcal{X}}T) + \sqrt{T\gamma_T\log(\abs{\mathcal{X}} T)} \bigr)$ when $\mathcal{X}$ is finite; this is within a $\log T$ factor of our general lower bound under the mild condition $|\mathcal{X}| \le {\rm poly}(T)$ (e.g., $|\mathcal{X}| = T^d$).    
To extend to $\mathcal{X} = [0,1]^d $, 
we show that a H\"{o}lder condition allows us to choose a finite $\tilde{\mathcal{X}}\subset \mathcal{X}$ such that $T\epsilon(\tilde{\mathcal{X}})$ is lower-order, which leads to $O(\sqrt{T\gamma_T\log T})$ regret.

\begin{theorem}
    \label{thm:regret_upper_bound_on_cube}
    Consider the setup in \Cref{sec:setup_compact} with $\mathcal{X} = [0,1]^d $. 
    Suppose that there exist fixed $C, \alpha > 0$, such that
    \begin{align}
        \norm{\phi(x) - \phi(x')}_k
        \leq C \norm{x-x'}_2^{\alpha}
        \qquad
        \forall x,x' \in \mathcal{X}. \label{eq:holder_condition}
    \end{align}
    Then, $R^*_T = O \bigl( \sqrt{T\gamma_T \log T} \bigr)$.
\end{theorem}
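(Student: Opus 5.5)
We discretize $[0,1]^d$ with a uniform grid, run RIPS on the grid, and pay an extra $T\epsilon(\tilde{\mathcal{X}})$ for the discretization. Concretely, let $\tilde{\mathcal{X}}$ be the grid with spacing $1/N$ in each coordinate, so $|\tilde{\mathcal{X}}| = (N+1)^d$ and every $x\in[0,1]^d$ has a grid point $x'$ with $\norm{x-x'}_2 \le \sqrt{d}/N$. We will choose $N$ polynomial in $T$, so that $\log(|\tilde{\mathcal{X}}|T) = O(\log T)$.

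\textbf{Step 1: bound the discretization error.} For $f\in\mathcal{F}_k(B)$, the reproducing property and Cauchy--Schwarz give
\begin{align*}
\abs{f(x)-f(x')} = \abs{\langle f, \phi(x)-\phi(x')\rangle_k} \le B\norm{\phi(x)-\phi(x')}_k \le BC\norm{x-x'}_2^{\alpha}.
\end{align*}
Applying this with $x$ a maximizer of $f$ on $\mathcal{X}$ (which exists by \Cref{lem:f_continuous}) and $x'$ its nearest grid point yields
\begin{align*}
\epsilon(\tilde{\mathcal{X}}) \le BC(\sqrt d/N)^{\alpha}.
\end{align*}
Choosing $N = \lceil \sqrt d\, T^{1/(2\alpha)}\rceil$ gives $T\epsilon(\tilde{\mathcal{X}}) \le BC\sqrt{T}$.

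\textbf{Step 2: bound the regret on the grid.} Running RIPS on $\tilde{\mathcal{X}}$, regret against the best point of $\mathcal{X}$ is at most the regret against the best grid point plus $T\epsilon(\tilde{\mathcal{X}})$. We apply \Cref{lem:cam_thm_3} together with \Cref{lem:connect_fV_to_gamma_T} on the finite domain $\tilde{\mathcal{X}}$. Here $h$ is defined with respect to $E=\mathrm{span}\{\phi(x):x\in\tilde{\mathcal X}\}$, and $\gamma_T$ is computed on $\tilde{\mathcal{X}}$. Since $\tilde{\mathcal X}\subseteq\mathcal X$, the maximum in \cref{eq:gamma_T_extended} is over a smaller set, so $\gamma_T(\tilde{\mathcal{X}}) \le \gamma_T(\mathcal{X})$. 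This gives
\begin{align*}
R^*_T \le O\bigl(\log T + \sqrt{T\gamma_T(\mathcal X)\log T}\bigr) + BC\sqrt T.
\end{align*}
We also check that $\lambda$ is fixed, so the factor $2(1+\lambda^{-1})$ is a constant.

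\textbf{Step 3: absorb the lower-order terms.} We use \Cref{lem:gamma_T_log_T_lower_bound}, which gives $\gamma_T = \Omega(\log T)$. Then $\sqrt{T\gamma_T\log T} = \Omega(\sqrt T\log T)$, which dominates both $\sqrt T$ and $\log T$. This yields $R^*_T = O(\sqrt{T\gamma_T\log T})$.

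The main obstacle is the bookkeeping in Step 2. \Cref{lem:cam_thm_3} holds under the finite-domain setup, whereas the true optimum lies off-grid. We therefore need to argue explicitly that the regret decomposes as (regret relative to $\max_{\tilde{\mathcal X}} f$) $+\,T\epsilon(\tilde{\mathcal{X}})$. This holds because the observation model at grid points is unchanged and $f$ restricted to the grid still has RKHS norm at most $B$ on the restricted kernel. The second point needs a short justification via the projection onto $E$: restriction does not increase the norm.
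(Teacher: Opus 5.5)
Your proposal is correct and essentially matches the paper's proof: a uniform grid on $[0,1]^d$, RIPS on that grid via \Cref{lem:cam_thm_3} and \Cref{lem:connect_fV_to_gamma_T}, the comparison $\gamma_T(\tilde{\mathcal{X}})\le\gamma_T(\mathcal{X})$, and absorption of lower-order terms using \Cref{lem:gamma_T_log_T_lower_bound}. The differences are minor: the paper uses the finer grid $m=\lceil\sqrt{d}\,(CT)^{1/\alpha}\rceil$ so that $T\epsilon(\tilde{\mathcal{X}})\le B$, whereas your coarser grid gives $T\epsilon(\tilde{\mathcal{X}})\le BC\sqrt{T}$ (still dominated by $\sqrt{T}\log T$), and you make explicit the restriction-norm argument that the paper leaves implicit.
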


Importantly, all six kernels in \Cref{tab:gamma_T_rates} satisfy \cref{eq:holder_condition}, so \Cref{thm:regret_upper_bound_on_cube} applies to each of them. The details are given in Appendix \ref{app:kernel_satisfy_holder_condition}.

\paragraph{Removing the Log Factor for Specific Kernels.}

The generic $O(\sqrt{T\gamma_T\log T})$ bound can be sharpened. 
Specifically, we now show that for the squared exponential (SE), rational quadratic, and one-dimensional Dirichlet kernel, the $\log T$ factor can be removed completely by a sharper analysis of $h(\mathcal{V}, \rho)$.

\begin{lemma}
    \label{lem:f_V_sharper}
    Consider the setup of \Cref{sec:setup_finite} with finite $\mathcal{X} \subset [0,1]^d$. 
    For every nonempty $\mathcal{V}\subseteq \mathcal{X}$, 
    define $h(\mathcal{V}, \rho)$ as in \Cref{lem:cam_thm_3}. 
    Then the following upper bounds hold for $\max_{\mathcal{V} \subseteq \mathcal{X}}h(\mathcal{V}, \rho)$:
    \begin{enumerate}
        \item SE kernel $k_{\mathrm{SE}}$: $ O( (\log T/\log\log T)^d )$.
        \item Rational quadratic kernel $k_{\mathrm{RQ}, a}$: $ O( (\log T)^d )$.
        \item One-dimensional Dirichlet kernel $k_{\mathrm{Dir},n}$: $ 2n+1$.
    \end{enumerate}
    The implicit constants are independent of $\mathcal{X}$.
\end{lemma}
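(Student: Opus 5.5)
By Kiefer--Wolfowitz-type reasoning I will bound $h(\mathcal{V},\rho)$ by a quantity that depends only on the kernel's spectral tail, not on $\mathcal{V}$. Fix nonempty $\mathcal{V}\subseteq\mathcal{X}$. The key tool is a finite-rank approximation: suppose there is a feature map $\psi:[0,1]^d\to\mathbb{R}^m$ and a residual kernel $r$ with $k(x,x')=\psi(x)^\top\psi(x')+r(x,x')$, $r$ positive semidefinite, and $\sup_x r(x,x)\le \epsilon$. Then for any design $\nu$,
\[
\langle \phi(x),(M_\nu+\rho I_E)^{-1}\phi(x)\rangle_k \le \tilde\psi(x)^\top\Bigl(\textstyle\sum_v \nu(v)\tilde\psi(v)\tilde\psi(v)^\top+\rho I\Bigr)^{-1}\tilde\psi(x)+\epsilon/\rho .
\]
Here $\tilde\psi$ is the projection of $\phi$ onto the $m$-dimensional subspace spanned by the retained eigenfeatures. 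I will establish this by splitting $\phi(x)=\phi_1(x)+\phi_2(x)$ into the orthogonal components and using $(a+b)^\top A^{-1}(a+b)\le 2a^\top A^{-1}a+2b^\top A^{-1}b$, together with $A\succeq\rho I$ so that $b^\top A^{-1}b\le \|b\|^2/\rho\le\epsilon/\rho$. This makes the bound hold up to a factor $2$, which is harmless.

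For the first term I will use the (regularized) Kiefer--Wolfowitz theorem: choosing $\nu$ to be the $D$-optimal design for $\{\tilde\psi(v)\}_{v\in\mathcal{V}}$ gives $\max_{x\in\mathcal{V}}\tilde\psi(x)^\top(\tilde M_\nu+\rho I)^{-1}\tilde\psi(x)\le m$. This follows because the regularized $D$-optimal design's $G$-value equals its effective dimension $\le m$, which is the fact from \cite{Camilleri21} Lemma 3, also used for \cref{eq:fact_2}. Hence
\[
h(\mathcal{V},\rho)\le 2m+2\epsilon/\rho = 2m+2\epsilon B^2T/\sigma^2 .
\]
The bound is uniform in $\mathcal{V}$ and in $\mathcal{X}$, so it suffices to choose $m$ with $\epsilon(m)\lesssim 1/T$ for each kernel.

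The kernel-specific steps are as follows. For the Dirichlet kernel $k_{\mathrm{Dir},n}$, the kernel is a trigonometric polynomial of degree $n$, so it has exact rank $2n+1$ with $r\equiv0$. Taking $\epsilon=0$ and using Kiefer--Wolfowitz directly (without the factor $2$) gives $2n+1$. For the SE kernel, I would use the Taylor/Mehler-type expansion $e^{-\|x-x'\|^2/2\ell^2}=e^{-\|x\|^2/2\ell^2}e^{-\|x'\|^2/2\ell^2}\sum_\beta \frac{x^\beta x'^\beta}{\ell^{2|\beta|}\beta!}$ on $[0,1]^d$. Truncating at total degree $p$ leaves a PSD remainder bounded by roughly $d^{p}/(\ell^{2p}p!)\le (Ce/p)^{p}$. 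Choosing $p\asymp \log T/\log\log T$ makes this $\le 1/T$, while the rank is $m=\binom{p+d}{d}=O((\log T/\log\log T)^d)$. For the rational quadratic kernel, I would write $(1+s/(2a\ell^2))^{-a}$ with $s=\|x-x'\|^2$ as a scale mixture of SE kernels over a Gamma-distributed inverse length scale. Truncating the polynomial expansion at degree $p$ then leaves a tail that decays only geometrically, since the mixture puts mass on small length scales. Concretely, I would expand $(1+\|x\|^2c+\|x'\|^2c-2c\,x^\top x')^{-a}$ as a binomial series in $2c\,x^\top x'/(1+c\|x\|^2+c\|x'\|^2)$, whose ratio is bounded by some $q<1$ on the cube. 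This gives a PSD tail of order $q^p$, so $p\asymp\log T$ and $m=O((\log T)^d)$.

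I expect the main obstacle to be verifying that the truncation remainders are genuinely positive semidefinite with a uniform diagonal bound. For SE this holds because the expansion is a sum of PSD rank-one terms with nonnegative coefficients. For RQ I must check that the coefficients of the binomial-in-$x^\top x'$ series are nonnegative, which they are since $(1-u)^{-a}$ has nonnegative Taylor coefficients. I must also check the normalizing factors and the ratio bound $q<1$ on $[0,1]^d$; if $q<1$ fails for large $c$, I would first rescale the domain or handle the constant via the factor $(1+c\|x\|^2+c\|x'\|^2)^{-a}$ kept outside the series.
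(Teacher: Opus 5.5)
Two steps do not go through as written. First, your reduction inequality does not follow from the argument you give. After splitting $\phi(x)=\phi_1(x)+\phi_2(x)$ with $\phi_1(x)\in S$, the first term is $\langle\phi_1(x),(M_\nu+\rho I)^{-1}\phi_1(x)\rangle_k$. Here $M_\nu=\sum_v\nu(v)\phi(v)\otimes\phi(v)$ is built from the \emph{full} features, whereas your right-hand side uses $\tilde M_\nu=\sum_v\nu(v)\phi_1(v)\otimes\phi_1(v)$. On $S$ the comparison goes the wrong way. Write $A=M_\nu+\rho I$ in blocks over $S\oplus S^\perp$. The $S$-block of $A^{-1}$ is the inverse of the Schur complement $A_{SS}-A_{SS^\perp}A_{S^\perp S^\perp}^{-1}A_{S^\perp S}$. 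This complement is $\preceq A_{SS}=\tilde M_\nu+\rho I_S$, so the $S$-block of $A^{-1}$ dominates $(\tilde M_\nu+\rho I_S)^{-1}$. Already one design point with $\phi(v)=(1,t)$ in two dimensions gives an excess of order $t^2/\rho$ at $(1,0)$. So a design that is $D$-optimal for the projected features does not automatically control the full-feature $G$-value. The step is repairable. From $(\phi_1+\phi_2)\otimes(\phi_1+\phi_2)\succeq\frac12\phi_1\otimes\phi_1-\phi_2\otimes\phi_2$ and $\sum_v\nu(v)\phi_2(v)\otimes\phi_2(v)\preceq\epsilon I$, you get $M_\nu+\rho I\succeq\frac12(\tilde M_\nu+\rho I)$ whenever $\epsilon\le\rho/2$. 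That yields $h(\mathcal V,\rho)\le 2m+1$. So you must truncate until $\epsilon\le\rho/2$ (only constants change). You should also record why $\|\phi_2(x)\|_k^2\le r(x,x)$ for the RKHS projection; it follows from $\|P_S\phi(x)\|_k^2\ge\|\psi(x)\|^2$, since $k\succeq\psi^\top\psi$. The paper avoids the mismatch altogether. It takes $\nu$ to be the regularized $D$-optimal design for the full features on $\mathcal V$ and uses $h(\mathcal V,\rho)\le\mathrm{tr}(M_\nu(M_\nu+\rho I)^{-1})$ from \citep[Lemma 3]{Camilleri21}. It then bounds this trace by $\mathrm{rank}+\epsilon/\rho$, via monotonicity of $X\mapsto\mathrm{tr}(X(X+\rho I)^{-1})$ and a projection onto the range of the low-rank part. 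That argument needs only the one-sided domination $k\preceq k_{M,1}+k_{M,2}$.

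Second, your rational quadratic construction does not produce a low-rank head. In your series the $j$-th term is $\binom{a+j-1}{j}(2c)^j(x^\top x')^j(1+c\|x\|^2+c\|x'\|^2)^{-a-j}$. The factor $(1+c\|x\|^2+c\|x'\|^2)^{-b}$ is not separable: it is a full-support mixture over $\tau>0$ of $e^{-\tau c\|x\|^2}e^{-\tau c\|x'\|^2}$, hence strictly positive definite on any set of points with distinct norms. So already the $j=0$ term has rank growing with $|\mathcal X|$. The truncated head is PSD but not of rank $O(p^d)$, and the Kiefer--Wolfowitz bound $m$ is lost. This is not a matter of normalizing factors, and the Gamma scale-mixture route has the same coupling. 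One fix within your framework is to factor $1+c\|x-x'\|^2=g(x)g(x')\bigl(1-w(x,x')\bigr)$ with $g(x)=1+c\|x\|^2$ and $w(x,x')=\bigl(2c\,x^\top x'+c^2\|x\|^2\|x'\|^2\bigr)/\bigl(g(x)g(x')\bigr)$. Then:
\begin{itemize}
\item $w$ is a PSD kernel with $0\le w\le q:=1-(1+cd)^{-2}<1$ on the cube;
\item $k=\sum_{j\ge 0}\binom{a+j-1}{j}(g(x)g(x'))^{-a}w^j$ has nonnegative coefficients and PSD terms;
\item every feature of the head $j\le p$ lies in $\{g^{-a-p}Q:\deg Q\le 2p\}$, of dimension $\binom{2p+d}{d}=O(p^d)$;
\item the tail diagonal decays geometrically in $p$, so $p\asymp\log T$ suffices.
\end{itemize}
Alternatively, invoke the real-analytic (Gevrey) result \citep[Corollary 11]{Janz26}, as the paper does. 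Your SE and Dirichlet steps are fine.
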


For the squared exponential kernel, combining \Cref{lem:cam_thm_3,lem:f_V_sharper} gives
\begin{align*}
    R^*_T 
    = O \biggl( \log(\abs{\mathcal{X}}T) + \sqrt{T \biggl(\frac{\log T}{\log\log T}\biggr)^d }  \sqrt{\log(\abs{\mathcal{X}} T)} \biggr).
\end{align*}
Following the same discretization in \Cref{thm:regret_upper_bound_on_cube} gives
\begin{align}
    R^*_T 
    = O \biggl(\sqrt{T (\log T)^{d+1} /(\log\log T)^d }  \biggr),
\end{align}
which combined with the rate of $\gamma_T$ from \Cref{tab:gamma_T_rates} gives $R^*_T = O(\sqrt{T\gamma_T})$. 
We can similarly verify the $O(\sqrt{T\gamma_T})$ regret bound for the other two kernels.

For the Mat\'ern, $\gamma$-exponential, and piecewise-polynomial kernels, the same approach applies, but fails to remove the $\log T$ factor. Alternatively, we apply MOSS \citep[Theorem 9.1]{Lattimore20} with a careful discretization of $\mathcal{X}$ 
to remove the $\log T$ factor in certain parameter regimes.

\begin{lemma}
    \label{lem:remove_log_MOSS}
    Consider the setup of \Cref{sec:setup_compact} where $\mathcal{X} = [0,1]^d$. For the Mat\'ern kernel with $\nu \in (0,2)$, $\gamma$-exponential kernel with $\gamma \in (0,2)$, and piecewise-polynomial kernel with $q = 0, d\geq 3$ or $q =1$, we have
    \begin{align}
        R^*_T = O(\sqrt{T\gamma_T}).
    \end{align}
\end{lemma}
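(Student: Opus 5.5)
The plan is to discretize $[0,1]^d$ with a uniform grid, run MOSS on the grid, and balance the MOSS regret against the discretization error $T\epsilon(\tilde{\mathcal{X}})$ from \cref{eq:epsilon_x}. Let $\tilde{\mathcal{X}}_h$ be the grid of spacing $h$ that includes the boundary faces of the cube, so $K=|\tilde{\mathcal{X}}_h| = O(h^{-d})$. As in \cref{eq:moss}, every $f\in\mathcal{F}_k(B)$ satisfies $|f|\le B$, so running MOSS on $\tilde{\mathcal{X}}_h$ gives
\begin{align*}
R^*_T \le 39\sigma\sqrt{KT} + 2BK + T\epsilon(\tilde{\mathcal{X}}_h).
\end{align*}
The goal is to show $\epsilon(\tilde{\mathcal{X}}_h) = O(h^{s})$, where $s$ is the effective smoothness of the kernel: $s=\nu$ for Mat\'ern, $s=\gamma/2$ for $\gamma$-exponential, and $s=q+1/2$ for piecewise-polynomial. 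Choosing $h = T^{-1/(2s+d)}$ then gives $R^*_T = O(T^{(s+d)/(2s+d)})$. By \Cref{tab:gamma_T_rates}, $\gamma_T=\Theta(T^{d/(2s+d)})$ in all three cases, because $2\nu$, $\gamma$ and $2q+1$ equal $2s$. Hence $O(T^{(s+d)/(2s+d)}) = O(\sqrt{T\gamma_T})$. The term $2BK = O(T^{d/(2s+d)})$ is of lower order.

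\textbf{Case $s\le 1$} (Mat\'ern with $\nu\le 1$, $\gamma$-exponential, piecewise-polynomial with $q=0$). Here it suffices to have a H\"older condition of the form \cref{eq:holder_condition} with exponent $s$. Indeed,
\begin{align*}
|f(x)-f(x')| \le B\norm{\phi(x)-\phi(x')}_k \le BC\norm{x-x'}_2^{s},
\end{align*}
so comparing a maximizer $x^*$ with its nearest grid point gives $\epsilon = O(h^s)$.
\begin{itemize}
\item For the $\gamma$-exponential kernel this is explicit: $\norm{\phi(x)-\phi(x')}_k^2 = 2(1-e^{-r^\gamma}) \le 2r^\gamma$, giving exponent $\gamma/2$.
\item For Mat\'ern and piecewise-polynomial kernels, I would use the small-$r$ expansion $1-k(r) = O(r^{2s})$ (with an extra log factor at $\nu=1$). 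This borderline case can instead be handled through the Sobolev route below.
\end{itemize}

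\textbf{Case $1<s<2$} (Mat\'ern with $\nu\in(1,2)$, piecewise-polynomial with $q=1$). This is the main obstacle, because the H\"older exponent of the feature map is capped at $1$, so the first-order argument only yields $h^1$. The plan is to use the norm equivalence of these RKHSs with Sobolev spaces $H^{s+d/2}$ (Mat\'ern, and Wendland functions for piecewise-polynomial, as used in \citep{Lee25}). Sobolev embedding then gives $f\in C^{1,s-1}$ with $\norm{\nabla f(x)-\nabla f(y)} \le C'B\norm{x-y}^{s-1}$ uniformly over $\mathcal{F}_k(B)$. A Taylor expansion at the maximizer $x^*$ gives
\begin{align*}
f(a) \ge f(x^*) + \nabla f(x^*)\cdot(a-x^*) - C'B\norm{a-x^*}^{s}.
\end{align*}
If $x^*$ is interior, then $\nabla f(x^*)=0$ and the error is $O(h^s)$. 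If $x^*$ lies on a boundary face, the gradient vanishes in the tangential directions of the lowest-dimensional face containing $x^*$. I would then take $a$ to be the nearest grid point lying on that same face, which is possible because the grid contains all faces. This gives $\nabla f(x^*)\cdot(a-x^*)=0$, and again the error is $O(h^s)$.

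Combining both cases with the choice of $h$ yields the claim. The restriction $s<2$ arises because this zero-gradient argument only exploits smoothness up to order two.
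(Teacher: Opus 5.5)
\textbf{Gap: the Mat\'ern kernel with $\nu=1$.} Your overall plan is the paper's: a grid of spacing $h$ that contains the faces, MOSS on the grid, and $h=T^{-1/(2s+d)}$, giving $T^{(s+d)/(2s+d)}=\Theta(\sqrt{T\gamma_T})$. Your two ways of obtaining $\epsilon(\tilde{\mathcal{X}}_h)=O(h^s)$ are fine for $s<1$ and for $1<s<2$. However, $\nu=1$ lies inside the claimed range $\nu\in(0,2)$ and is covered by neither route.

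\begin{itemize}
\item The H\"older route fails there, as you note. Since $1-k(r)=\Theta(r^2\log(1/r))$ as $r\to0$, nearest-grid-point comparison only gives $\epsilon=O(h\sqrt{\log(1/h)})$. Re-optimizing $h$ still leaves a positive power of $\log T$.
\item The Sobolev route you defer to does not cover it either. It needs $\mathcal{F}_k(B)$ to be bounded in $C^{1,s-1}$. At $s=1$ the RKHS is $H^{1+d/2}$, which is not contained in $C^1$, because the Sobolev embedding into $C^1$ needs strictly more than $1+d/2$ derivatives. So $f$ need not be differentiable at $x^*$, and neither $\nabla f(x^*)=0$ nor the first-order Taylor expansion is available.
\end{itemize}
As written, the case $\nu=1$ is unproven.

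\textbf{A fix for all $s\in(0,2)$.} Replace the gradient by a symmetric second difference at the maximizer. This also removes your case split and the boundary-face analysis. Let $a$ be the coordinatewise-nearest grid point to $x^*$. Then $\|a-x^*\|_2\le\sqrt{d}\,h/2$, and the reflected point $2x^*-a$ also lies in $[0,1]^d$ (check each coordinate). Hence
\[
0\le f(x^*)-f(a)\le 2f(x^*)-f(a)-f(2x^*-a)\le B\,\bigl\|2\phi(x^*)-\phi(a)-\phi(2x^*-a)\bigr\|_k .
\]
By the spectral representation, the last squared norm equals
\[
\int 4\bigl(1-\cos(w^T(a-x^*))\bigr)^2 p(w)\,dw\le C_d\int\min\{1,h^4\|w\|_2^4\}\,p(w)\,dw .
\]
This is $O(h^{2s})$ whenever $p(w)\le A(1+\|w\|_2^2)^{-(s+d/2)}$ with $s\in(0,2)$, including $s=1$.

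\textbf{Comparison with the paper.} The paper's proof works in the same spirit. It bounds $\epsilon$ by the error of multilinear interpolation on the grid cell containing $x$; this interpolant is a convex combination of grid values, hence at most $\max_a f(a)$. Because the interpolant reproduces affine functions, $\|\phi(x)-\sum_b w_b(x)\phi(v_b)\|_k^2=\int|D_x(w)|^2p(w)\,dw$ with $|D_x(w)|\le\min\{2,\tfrac{d}{2m^2}\|w\|_2^2\}$. This yields $\epsilon\le CBm^{-s}$ for all $s\in(0,2)$ at once. It needs no Sobolev embeddings, no small-$r$ kernel expansions, no differentiability of $f$, and no information about where the maximizer lies.
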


For the kernel settings in \Cref{lem:remove_log_MOSS}, the upper bounds match the corresponding lower bounds, establishing $R^*_T = \Theta(\sqrt{T\gamma_T})$. For the remaining kernel settings in \Cref{app:kernel}, a $\sqrt{\log T}$ gap remains.

\section{Conclusion}
In this paper, we presented an $\Omega(\sqrt{T\gamma_T/\log T})$ minimax regret lower bound for non-constant continuous positive semidefinite kernels on compact domains of $\mathbb{R}^d$, thus establishing that the optimal minimax regret scaling is $\sqrt{T \gamma_T}$ up to log factors in very high generality.  We further showed that the logarithmic term cannot be removed in general, but can be removed on $[0,1]^d$ if $\gamma_T = \Theta(T^{\alpha})$ for some $\alpha > 0$.  By comparing to suitable upper bounds, we deduced that the minimax regret scaling is precisely $\Theta(\sqrt{T\gamma_T})$ for several kernels of interest, while small gaps (typically $\sqrt{\log T}$ or $\log T$) remain between the bounds for other kernels.

\subsection*{AI Use Disclosure}

{\bf Overview.} We used AI to develop theoretical results, including proposing claims, providing proof outline and details, and proof-reading (but not direct editing). AI also helped with literature search, including summarizing existing results and identifying open problems.
All AI-assisted work has been closely reviewed. In particular, all AI-assisted proofs were checked line by line by the authors. The authors take responsibility for all results and their correctness.

{\bf Specific details.} The main contribution of this paper, i.e., the $\Omega(\sqrt{T\gamma_T/\log T})$ lower bound, emerged from interactions between the authors and the AI agent, where the authors led the interactions. Subsequent interactions between the authors and the AI agent led to a significantly simplified argument that now essentially comprises \Cref{sec:finite,sec:compact_X}.
For \Cref{sec:discussion}, the authors planned the high-level structure, and discussed with AI for details.
For example, the authors asked the AI agent to help derive the rate of $\gamma_T$ for the rational quadratic kernel, because this process is technically involved.
The authors derived $O(\sqrt{T\gamma_T\log T})$ regret from \citep[Theorem 3]{Camilleri21}, while asking the AI agent to check whether an existing stronger upper bound exists. The sharpened $O(\sqrt{T\gamma_T})$ upper bounds also followed from discussion with the AI agent.

\subsection*{Acknowledgement}

This work is supported by the National Research Foundation (NRF) Singapore and the Ministry of Digital Development and Information (MDDI) under the AI Visiting Professorship (Award AIVP-2024-003).

\bibliographystyle{apalike}
\bibliography{reference}

@inproceedings{Alaoui15,
 author = {Alaoui, Ahmed and Mahoney, Michael},
 booktitle = {Advances in Neural Information Processing Systems},
 pages = {},
 title = {Fast Randomized Kernel Ridge Regression with Statistical Guarantees},
 volume = {28},
 year = {2015}
}

@article{Allen21,
  title={Near-optimal discrete optimization for experimental design: A regret minimization approach},
  author={Allen-Zhu, Zeyuan and Li, Yuanzhi and Singh, Aarti and Wang, Yining},
  journal={Mathematical Programming},
  volume={186},
  number={1},
  pages={439--478},
  year={2021},
}

@inproceedings{Audibert09,
  title={Minimax policies for adversarial and stochastic bandits},
  author={Audibert, Jean-Yves and Bubeck, S{\'e}bastien},
  booktitle={Conference on Learning Theory},
  pages={217--226},
  year={2009}
}

@book{Axler24,
    title     = {Linear Algebra Done Right},
    author    = {Axler, Sheldon},
    year      = {2024},
    publisher = {Springer Cham},
    edition   = {4},
}

@article{Auer02a,
author = {Auer, Peter and Cesa-Bianchi, Nicol{\`o} and Freund, Yoav and Schapire, Robert E.},
title = {The Nonstochastic Multiarmed Bandit Problem},
journal = {SIAM Journal on Computing},
volume = {32},
number = {1},
pages = {48-77},
year = {2002},
}

@inproceedings{Cai21,
  title={On lower bounds for standard and robust {G}aussian process bandit optimization},
  author={Cai, Xu and Scarlett, Jonathan},
  booktitle={International Conference on Machine Learning},
  pages={1216--1226},
  year={2021},
}

@inproceedings{Camilleri21,
  title={High-dimensional experimental design and kernel bandits},
  author={Camilleri, Romain and Jamieson, Kevin and Katz-Samuels, Julian},
  booktitle={International Conference on Machine Learning},
  pages={1227--1237},
  year={2021},
}

@inproceedings{Chowdhury17,
  title={On kernelized multi-armed bandits},
  author={Chowdhury, Sayak Ray and Gopalan, Aditya},
  booktitle={International Conference on Machine Learning},
  pages={844--853},
  year={2017},
}

@InProceedings{Derezinski20,
  title = 	 {Bayesian experimental design using regularized determinantal point processes},
  author =       {Derezinski, Michal and Liang, Feynman and Mahoney, Michael},
  booktitle = 	 {International Conference on Artificial Intelligence and Statistics},
  pages = 	 {3197--3207},
  year = 	 {2020},
  volume = 	 {108},
}

@misc{DLMF,
         author = {{DLMF}},
         year = {2026},
       title = "{\it NIST Digital Library of Mathematical Functions}",
howpublished = "\url{https://dlmf.nist.gov/}, Release 1.2.8 of 2026-09-15",
         url = "https://dlmf.nist.gov/",
        note = "F.~W.~J. Olver, A.~B. {Olde Daalhuis}, D.~W. Lozier, B.~I. Schneider,
                R.~F. Boisvert, C.~W. Clark, B.~R. Miller, B.~V. Saunders,
                H.~S. Cohl, and M.~A. McClain, eds."}

@inproceedings{Iwazaki26,
title={Tighter Regret Lower Bound for {G}aussian Process Bandits with Squared Exponential Kernel in Hypersphere},
author={Shogo Iwazaki},
booktitle={International Conference on Machine Learning},
year={2026},
url={https://openreview.net/forum?id=CPB16KCuNk}
}

@article{Janz26,
  title={Maximum effective dimension and information gain},
  author={Janz, David and Akhavan, Arya and Tsybakov, Alexandre B},
  journal={arXiv preprint arXiv:2608.24450},
  year={2026}
}

@book{Lattimore20, 
    place={Cambridge}, 
    title={Bandit algorithms}, 
    publisher={Cambridge University Press}, 
    author={Lattimore, Tor and Szepesvári, Csaba}, 
    year={2020}
}

@article{Lee25,
  title={Consequences of kernel regularity for bandit optimization},
  author={Lee, Madison and Javidi, Tara},
  journal={arXiv preprint arXiv:2512.05957},
  year={2025}
}

@inproceedings{Li22,
  title={Gaussian process bandit optimization with few batches},
  author={Li, Zihan and Scarlett, Jonathan},
  booktitle={International Conference on Artificial Intelligence and Statistics},
  pages={92--107},
  year={2022},
}

@InProceedings{Liu21,
  title = 	 { Smooth Bandit Optimization: Generalization to {H}older Space },
  author =       {Liu, Yusha and Wang, Yining and Singh, Aarti},
  booktitle = 	 {International Conference on Artificial Intelligence and Statistics},
  pages = 	 {2206--2214},
  year = 	 {2021},
  volume = 	 {130},
}

@article{Ma26,
  title={Batched Kernelized Bandits: Refinements and Extensions},
  author={Ma, Chenkai and Chen, Keqin and Scarlett, Jonathan},
  journal={arXiv preprint arXiv:2603.12627},
  year={2026}
}

@book{Rasmussen05,
    author = {Rasmussen, Carl Edward and Williams, Christopher K. I.},
    title = {Gaussian Processes for Machine Learning},
    publisher = {The MIT Press},
    year = {2005},
}

@article{Salgia21,
  title={A domain-shrinking based {B}ayesian optimization algorithm with order-optimal regret performance},
  author={Salgia, Sudeep and Vakili, Sattar and Zhao, Qing},
  journal={Conference on Neural Information Processing Systems},
  volume={34},
  pages={28836--28847},
  year={2021}
}

@inproceedings{Scarlett17,
  title={Lower bounds on regret for noisy {G}aussian process bandit optimization},
  author={Scarlett, Jonathan and Bogunovic, Ilija and Cevher, Volkan},
  booktitle={Conference on Learning Theory},
  pages={1723--1742},
  year={2017},
}

@InProceedings{Singh21,
  title = 	 { Continuum-Armed Bandits: A Function Space Perspective },
  author =       {Singh, Shashank},
  booktitle = 	 {International Conference on Artificial Intelligence and Statistics},
  pages = 	 {2620--2628},
  year = 	 {2021},
}

@inproceedings{Srinivas10,
author = {Srinivas, Niranjan and Krause, Andreas and Kakade, Sham and Seeger, Matthias},
title={Gaussian process optimization in the bandit setting: No regret and experimental design},
year = {2010},
booktitle = {International Conference on Machine Learning},
pages = {1015–1022},
numpages = {8},
}

@inproceedings{Vakili21b,
  title={On information gain and regret bounds in {G}aussian process bandits},
  author={Vakili, Sattar and Khezeli, Kia and Picheny, Victor},
  booktitle={International Conference on Artificial Intelligence and Statistics},
  pages={82--90},
  year={2021},
}

@inproceedings{Valko13,
author = {Valko, Michal and Korda, Nathan and Munos, R\'emi and Flaounas, Ilias and Cristianini, Nello},
title={Finite-time analysis of kernelised contextual bandits},
year = {2013},
booktitle = {Conference on Uncertainty in Artificial Intelligence},
pages = {654–663},
numpages = {10},
}

@book{Wendland04, 
place={Cambridge}, 
series={Cambridge Monographs on Applied and Computational Mathematics}, 
title={Scattered Data Approximation}, 
publisher={Cambridge University Press}, 
author={Wendland, Holger}, 
year={2004}, 
collection={Cambridge Monographs on Applied and Computational Mathematics}
}

\clearpage
\appendix

\section{Definitions of Kernels}
\label{app:kernel}

Following \citep{Lee25}, we consider six stationary kernels. For $x,x' \in \mathcal{X}$, we define $r := \norm{x-x'}_2$, and denote $k(r):=k(x,x')$.
For each kernel, we present the expression and specify the range of parameters, where $\ell > 0$ (when it appears) is a lengthscale parameter.

\begin{itemize}
    \item Squared exponential (SE) kernel:
    \begin{align}
        k_{\mathrm{SE}}(r) = \exp\bigl( - r^2/(2\ell^2) \bigr). \label{eq:k_SE}
    \end{align}
    \item Rational-quadratic kernel:
    \begin{align}
        k_{\mathrm{RQ},a}(r) = \biggl( 1 + \frac{r^2}{2a\ell^2} \biggr)^{-a}, \label{eq:k_RQ}
    \end{align}
    where $a > 0$.
    \item Mat\'ern kernel:
    \begin{align}
        k_{\nu}(r) = \frac{2^{1-\nu}}{\Gamma(\nu)} \biggl( \frac{\sqrt{2\nu}r}{\ell} \biggr)^{\nu} K_\nu \biggl( \frac{\sqrt{2\nu} r}{\ell} \biggr), \label{eq:k_matern}
    \end{align}
    where $\nu > 0$ is a smoothness parameter, $\Gamma$ is the gamma function, and $K_{\nu}$ is the modified Bessel function of the second kind.
    \item $\gamma$-exponential kernel:
    \begin{align}
        k_{\gamma-\mathrm{Exp}}(r) = \exp \bigl( - (r/\ell)^{\gamma} \bigr), \label{eq:k_gamma_exp}
    \end{align}
    where $\gamma \in (0, 2)$. The special case $\gamma = 2$ is excluded because that recovers the SE kernel.
    \item Piecewise-polynomial kernel:
    \begin{align}
        k_{\mathrm{PP},q}(r) = 
        \begin{cases}
            c_{0,q} + \sum_{j=1}^{\lfloor d/2 \rfloor + 3q + 1} c_{j,q} r^j, & 0 \leq r \leq 1, \\
            0, & r > 1.
        \end{cases}
        \label{eq:k_pp}
    \end{align}
    Here $d$ is the dimension of $\mathcal{X}$, $q \in \mathbb{Z}_{\geq 0} $ and $q \geq 1$ if $d = 1,2$. \citep{Lee25} states that the coefficients $c_{j,q}$ can be computed recursively (\cite{Wendland04}, Theorem 9.13).
    \item The one-dimensional Dirichlet kernel:
    \begin{align}
        k_{\mathrm{Dir},n}(r)
        = \frac{1}{2n+1} \sum_{k=-n}^n \exp(-ikr), \label{eq:k_Dirichlet}
    \end{align}
    where $n \in \mathbb{Z}_{+}$. We exclude $n = 0$ because that leads to $k\equiv 1$, while \Cref{sec:setup_compact} requires the kernel to be non-constant, and we restrict to $d = 1$ because the Dirichlet kernel may fail to be positive semidefinite in higher dimensions.
\end{itemize}

\section{Technical Lemmas}
\label{app:technical_lemmas}

\begin{lemma}
    \label{lem:trace_to_inner_product}
    Consider the setup of \Cref{sec:setup_finite} where $\mathcal{X}$ is finite. Let $A$ be a self-adjoint operator on $E$ and $u \in E$.
    Then
    \begin{align}
        \tr_E ((u\otimes u) A) = \langle u, Au \rangle_k.
    \end{align}
\end{lemma}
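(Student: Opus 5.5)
The plan is to prove the identity by expanding both sides in the orthonormal basis $e_1,\ldots,e_n$ of $E$. By the definition of $\tr_E$ in \Cref{sec:setup_finite}, for any operator $C$ we have $\tr_E(C) = \sum_{i=1}^n \langle e_i, C e_i\rangle_k$. We apply this with $C = (u\otimes u)A$. The operator maps $E$ into $E$, since $u\in E$ and the range of $u\otimes u$ is $\mathrm{span}\{u\}$; hence the trace is the ordinary trace of the matrix representing it.

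For each $i$, the definition $(u\otimes v)w = u\langle v,w\rangle_k$ gives
\[
(u\otimes u)Ae_i = u\,\langle u, Ae_i\rangle_k ,
\]
so that
\[
\langle e_i,(u\otimes u)Ae_i\rangle_k = \langle e_i,u\rangle_k\,\langle u,Ae_i\rangle_k .
\]
Summing over $i$ and using linearity of the inner product in its second argument, we get
\[
\sum_i \langle u, Ae_i\rangle_k\langle e_i,u\rangle_k = \Bigl\langle u, A\sum_i \langle e_i,u\rangle_k e_i\Bigr\rangle_k .
\]
Since $u\in E$ and the $e_i$ form an orthonormal basis of $E$, Parseval gives $\sum_i \langle e_i,u\rangle_k e_i = u$. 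Because the RKHS is real, the order of arguments in the inner products is immaterial. This yields $\langle u, Au\rangle_k$.

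No step is a real obstacle; the only care needed is bookkeeping. First, we must confirm that $\tr_E$ is basis-independent, which holds because the operator acts within $E$, so its trace is the usual trace. Second, we must use $u\in E$ so that the Parseval expansion is exact. Self-adjointness of $A$ is not actually needed. If desired, one could alternatively invoke the cyclic property $\tr(uu^{T}A) = u^{T}Au$ in coordinates, writing $u = \sum_i a_i e_i$ and $A$ as the matrix $[\langle e_i,Ae_j\rangle_k]_{i,j=1}^n$.
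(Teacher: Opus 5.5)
Your proof is correct and follows essentially the same route as the paper: expand $\tr_E$ in the orthonormal basis $e_1,\ldots,e_n$, evaluate $(u\otimes u)Ae_i = u\langle u, Ae_i\rangle_k$, and collapse the sum using the Parseval expansion of $u\in E$. The only difference is how $A$ is handled. The paper uses self-adjointness to rewrite $\langle u, Ae_i\rangle_k = \langle Au, e_i\rangle_k$ and then applies $\sum_i \langle u,e_i\rangle_k\langle Au,e_i\rangle_k = \langle u,Au\rangle_k$. You instead pull the sum inside $A$ by linearity, which correctly shows that the self-adjointness hypothesis is not needed.
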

\begin{proof}
    Recall that $e_1,\ldots e_n$ is an orthonormal basis of $E$. Then
    \begin{align}
        \tr_E ((u\otimes u) A) 
        = \sum_{i=1}^n \langle (u\otimes u) A e_i, e_i \rangle_k
        = \sum_{i=1}^n \langle u, A e_i \rangle_k \langle u, e_i \rangle_k
        = \sum_{i=1}^n \langle u, e_i \rangle_k \langle Au, e_i \rangle_k
        = \langle u, Au \rangle_k,
    \end{align}
    where the last equality follows from \citep[Eq.~6.30(c)]{Axler24}.
\end{proof}

\begin{lemma}
    \label{lem:operator_with_otimes}
    Consider the setup of \Cref{sec:setup_finite} where $\mathcal{X}$ is finite. Let $A$ be a self-adjoint operator on $E$ and $u \in E$.
    Then
    \begin{align}
        (Au) \otimes (Au) = A(u \otimes u) A.
    \end{align}
\end{lemma}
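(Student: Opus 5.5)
The identity $(Au)\otimes(Au) = A(u\otimes u)A$ is an equality of operators on $E$. The plan is to check it on an arbitrary vector $w \in E$, using only the definition of $\otimes$, linearity of $A$, and self-adjointness of $A$.

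Fix $w \in E$. By the definition $(v\otimes v')w = v\langle v', w\rangle_k$, the left-hand side acts as
\begin{align*}
    \bigl((Au)\otimes(Au)\bigr)w = (Au)\,\langle Au, w\rangle_k .
\end{align*}
For the right-hand side, apply the operators in sequence. First map $w \mapsto Aw$. Then apply $u\otimes u$ to get $u\langle u, Aw\rangle_k$. Finally apply $A$; since $\langle u, Aw\rangle_k$ is a scalar, linearity gives
\begin{align*}
    A(u\otimes u)Aw = (Au)\,\langle u, Aw\rangle_k .
\end{align*}

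The only remaining step is to identify the two scalar coefficients. Self-adjointness of $A$ on $E$ gives $\langle u, Aw\rangle_k = \langle Au, w\rangle_k$, so both sides equal $(Au)\langle Au, w\rangle_k$. Since $w$ was arbitrary, the two operators coincide.

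All vectors involved lie in $E$: indeed $u \in E$, $A$ maps $E$ to $E$, and $w \in E$. Hence the inner products are the $\mathcal{H}_k$ inner products restricted to $E$, and no domain issues arise. The only ingredient beyond the definitions is self-adjointness, which is exactly what is needed to move $A$ across the inner product, so I expect no real obstacle.
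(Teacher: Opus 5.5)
Your proposal is correct and matches the paper's proof: both evaluate the two operators on an arbitrary vector of $E$ and use self-adjointness to rewrite $\langle u, Av\rangle_k$ as $\langle Au, v\rangle_k$, so the two coefficients of $Au$ coincide.
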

\begin{proof}
    Choose any $v \in E$. Then $\bigl((Au) \otimes (Au)\bigr) v = (Au) \langle Au, v \rangle_k = (Au) \langle u, Av \rangle_k = \bigl(A (u\otimes u) A\bigr)v $.
\end{proof}

\begin{lemma}
    \label{lem:positive_definite}
    Consider the setup of \Cref{sec:setup_finite} where $\mathcal{X}$ is finite. For every $\nu \in \Delta(\mathcal{X})$,
    \begin{align}
        M_{\nu} + \rho I_E 
        \succ M_{\nu}
        \succeq 0. \label{eq:psd_claims}
    \end{align}
    Consequently, $(M_{\nu} + \rho I_E)^{-1} \succ 0$, $(M_{\nu} + \rho I_E)^{-1/2}$ is self-adjoint, and $(M_{\nu} + \rho I_E)^{-2} \preceq \rho^{-1} (M_{\nu} + \rho I_E)^{-1} $.
\end{lemma}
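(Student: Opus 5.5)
The plan is to check the three claims directly on the finite-dimensional inner product space $E$, using its orthonormal basis $e_1,\ldots,e_n$ and the spectral theorem for self-adjoint operators.

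\textbf{Step 1: $M_\nu$ is self-adjoint and PSD.} Each $\phi(x)\otimes\phi(x)$ is self-adjoint on $E$, since $\langle (\phi(x)\otimes\phi(x))v,w\rangle_k = \langle \phi(x),v\rangle_k\langle\phi(x),w\rangle_k$ is symmetric in $v,w$. Hence $M_\nu$, a nonnegative combination of such operators, is self-adjoint. For any $v\in E$,
\begin{align}
    \langle M_\nu v,v\rangle_k = \sum_{x\in\mathcal{X}}\nu(x)\langle\phi(x),v\rangle_k^2 \geq 0,
\end{align}
so $M_\nu\succeq 0$.

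\textbf{Step 2: $M_\nu + \rho I_E \succ M_\nu$.} Since $\rho>0$, for every $v\neq 0$ we have $\langle (M_\nu+\rho I_E)v,v\rangle_k - \langle M_\nu v,v\rangle_k = \rho\norm{v}_k^2>0$. Combined with Step 1, this gives \cref{eq:psd_claims}, and in particular $M_\nu+\rho I_E\succ 0$.

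\textbf{Step 3: the consequences.} By the spectral theorem, take an orthonormal eigenbasis $w_1,\ldots,w_n$ of $M_\nu$ in $E$, with eigenvalues $\mu_j\ge 0$. Then $M_\nu+\rho I_E$ is diagonal in this basis with eigenvalues $\mu_j+\rho\ge\rho>0$. It is therefore invertible, and both $(M_\nu+\rho I_E)^{-1}$ and $(M_\nu+\rho I_E)^{-1/2}$ (defined by the positive roots $(\mu_j+\rho)^{-1/2}$) are diagonal in the same orthonormal basis with real positive entries. This gives self-adjointness of both and positive definiteness of the inverse. Finally, $\rho^{-1}(M_\nu+\rho I_E)^{-1}-(M_\nu+\rho I_E)^{-2}$ is diagonal with entries
\begin{align}
    \frac{1}{\rho(\mu_j+\rho)}-\frac{1}{(\mu_j+\rho)^2} = \frac{\mu_j}{\rho(\mu_j+\rho)^2}\geq 0,
\end{align}
which yields the last inequality.

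There is no real obstacle here. The only care needed is to work on $E$ rather than on all of $\mathcal{H}_k$, so that $M_\nu$ maps $E$ into $E$ (its range lies in $\mathrm{span}\{\phi(x)\}$) and the spectral theorem applies. The square root should be defined via the spectral decomposition, which makes it unique and self-adjoint.
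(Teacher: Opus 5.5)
Your proof is correct and follows essentially the same route as the paper. Both use the quadratic form to get $M_\nu \succeq 0$ and $\rho I_E \succ 0$, and both read off the consequences from the spectrum of $M_\nu + \rho I_E$. The only cosmetic difference is in the last inequality: you compare eigenvalues in a common eigenbasis, while the paper writes $\rho^{-1}(M_\nu+\rho I_E)^{-1} - (M_\nu+\rho I_E)^{-2} = \rho^{-1}(M_\nu+\rho I_E)^{-1}M_\nu(M_\nu+\rho I_E)^{-1}$ and checks that its quadratic form is nonnegative. That is the same computation written in operator form.
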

\begin{proof}
    To show $M_{\nu} + \rho I_E \succ M_{\nu}$, it suffices to show that $\rho I_E \succ 0$ and $M_{\nu} \succeq 0$.  The condition $\rho I_E \succ 0$ holds since $\langle \rho I_E v, v \rangle_k = \rho \norm{v}_k^2 > 0 $ for every $v \in E$ and $v \neq 0$.
    To show $M_{\nu} \succeq 0$, we recall that $M_{\nu} = \sum_{x\in\mathcal{X}} \nu(x) \phi(x)\otimes \phi(x)$, such that $\langle M_{\nu} v, v \rangle_k = \sum_{x\in\mathcal{X}}\nu(x) \abs{\langle \phi(x), v \rangle_k}^2 \geq 0 $ for every $v \in E$.  This establishes \cref{eq:psd_claims}.

    Since $M_{\nu} + \rho I_E$ is self-adjoint, it follows that $(M_{\nu} + \rho I_E)^{-1}$ is self-adjoint, which combined with the fact that all eigenvalues of $(M_{\nu} + \rho I_E)^{-1}$ are positive proves $(M_{\nu} + \rho I_E)^{-1} \succ 0$. This also proves $(M_{\nu} + \rho I_E)^{-1/2}$ is self-adjoint.
    To show $(M_{\nu} + \rho I_E)^{-2} \preceq \rho^{-1} (M_{\nu} + \rho I_E)^{-1} $, observe that $\rho^{-1} (M_{\nu} + \rho I_E)^{-1} -(M_{\nu} + \rho I_E)^{-2} = \rho^{-1} (M_{\nu} + \rho I_E)^{-1}M_{\nu} (M_{\nu} + \rho I_E)^{-1} $.
    For every $u \in E$, observe that $\langle (M_{\nu} + \rho I_E)^{-1}M_\nu (M_{\nu} + \rho I_E)^{-1} u, u \rangle_k = \langle M_\nu (M_{\nu} + \rho I_E)^{-1} u, (M_{\nu} + \rho I_E)^{-1}u \rangle_k \geq 0 $, which proves $(M_{\nu} + \rho I_E)^{-1}M_{\nu} (M_{\nu} + \rho I_E)^{-1} \succeq 0$, and thus $(M_{\nu} + \rho I_E)^{-2} \preceq \rho^{-1} (M_{\nu} + \rho I_E)^{-1}$.
\end{proof}

\begin{lemma}[\cite{Camilleri21}, Lemma 15]
    \label{lem:camilleri_lem_15}
    Consider the setup of \Cref{sec:setup_finite} where $\mathcal{X}$ is finite.
    Then
    \begin{align}
        \max_{x\in \mathcal{X}} \bigl\langle \phi(x), (M_T + \rho I_E)^{-1}\phi(x) \bigr\rangle_k
        = \sum_{x\in \mathcal{X}} \nu_T(x) \bigl\langle \phi(x), (M_T + \rho I_E)^{-1}\phi(x) \bigr\rangle_k
        = d_T.
    \end{align}
\end{lemma}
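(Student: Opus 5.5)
The plan is to prove this as a regularized Kiefer--Wolfowitz equivalence: write the first-order optimality condition of the concave design problem \cref{eq:d_optimal_design} at $\nu_T$, and compare it with the $\nu_T$-weighted average of the leverage scores $g(x) := \langle \phi(x), V_T^{-1}\phi(x)\rangle_k$, where $V_T = M_T + \rho I_E$.

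\textbf{Step 1: the average equals $d_T$.} By \Cref{lem:trace_to_inner_product} (with $A = V_T^{-1}$, self-adjoint by \Cref{lem:positive_definite}), $g(x) = \tr_E\bigl((\phi(x)\otimes\phi(x))V_T^{-1}\bigr)$. Linearity of the trace and \cref{eq:moment_operator} then give
\begin{align*}
\sum_{x} \nu_T(x)\, g(x) = \tr_E\bigl(M_T V_T^{-1}\bigr).
\end{align*}
Since $M_T$ and $V_T^{-1}$ commute, this equals $\tr_E\bigl(M_T V_T^{-1}\bigr) = d_T$ by \cref{eq:effective_dimension}. It also follows immediately that $\max_x g(x) \ge \sum_x \nu_T(x) g(x)$.

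\textbf{Step 2: the reverse inequality from optimality.} Fix $x \in \mathcal{X}$ and set $\nu_t := (1-t)\nu_T + t\delta_x$ for $t\in[0,1]$. This lies in $\Delta(\mathcal{X})$, and $M_{\nu_t} = M_T + t\bigl(\phi(x)\otimes\phi(x) - M_T\bigr)$ is affine in $t$. Let $F(t) := \log\det_E(M_{\nu_t} + \rho I_E)$. Because $\nu_T$ maximizes the objective, $F(t) \le F(0)$ for all $t \in [0,1]$, so $F'(0^+) \le 0$.

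To compute $F'(0^+)$, pass to the $n\times n$ matrix representation in the orthonormal basis $e_1,\dots,e_n$. There $\det_E$ and $\tr_E$ are ordinary determinants and traces, and $V_T$ is positive definite by \Cref{lem:positive_definite}, so Jacobi's formula applies:
\begin{align*}
F'(0) = \tr_E\bigl(V_T^{-1}(\phi(x)\otimes\phi(x) - M_T)\bigr) = g(x) - d_T,
\end{align*}
using Step 1 for the second term. Hence $g(x) \le d_T$ for every $x$. Combined with Step 1, this gives $\max_x g(x) \le d_T = \sum_x \nu_T(x) g(x) \le \max_x g(x)$, so all three quantities are equal.

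\textbf{Main obstacle and remarks.} The only delicate step is justifying the derivative in the operator-on-$E$ formalism. I would handle it by reducing to matrices via the basis $(e_i)$: the map $t \mapsto \det_E(\cdot)$ is a polynomial in $t$ and stays positive near $t = 0$, so $F$ is differentiable there. I would also check that $\phi(x)\otimes\phi(x)$ maps $E$ into $E$, which holds since $\phi(x) \in E$. Finally, the argument works for any maximizer $\nu_T$; by \Cref{lem:unique_operator} the operator $M_T$, and hence $g$ and $d_T$, do not depend on which maximizer is chosen, so the statement is well defined.
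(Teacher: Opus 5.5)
Your proof is correct. Step 1 identifies the $\nu_T$-weighted average of the scores $g(x)$ with $\tr_E(M_T V_T^{-1})$, which is literally the definition \cref{eq:effective_dimension} of $d_T$, so the commutation remark is unnecessary. Step 2's one-sided derivative of $t \mapsto \log\det_E\bigl(V_T + t(\phi(x)\otimes\phi(x) - M_T)\bigr)$ at $t=0$ equals $g(x) - d_T$ and must be $\le 0$, which closes the sandwich.

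The paper gives no proof of this statement; it simply imports it from \citep[Lemma 15]{Camilleri21}. Your argument is the standard first-order-optimality proof of a regularized Kiefer--Wolfowitz equivalence. Taking directional derivatives toward each vertex $\delta_x$ of the simplex is the KKT condition without introducing multipliers. So, compared with the paper's citation, your version buys self-containedness in three ways:
\begin{itemize}
\item It works directly in the operator-on-$E$ formalism the paper uses, rather than relying on the footnoted translation from the cited finite-dimensional notation.
\item It uses only \Cref{lem:trace_to_inner_product,lem:positive_definite}, and \Cref{lem:unique_operator} for well-definedness.
\item It creates no circularity with the later results that depend on this lemma (\Cref{lem:d_T_is_positive,lem:facts,lem:bound_by_two_d_T}).
\end{itemize}
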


\begin{lemma}
    \label{lem:bound_by_two_d_T}
    Consider the setup of \Cref{sec:setup_finite} where $\mathcal{X}$ is finite. For every $\nu \in \Delta(\mathcal{X})$,
    \begin{align}
        \tr_E(M_{\nu} (M_{\nu} + \rho I_{E})^{-1} )
        \leq 2d_T.
    \end{align}
\end{lemma}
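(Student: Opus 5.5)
The plan is to compare the arbitrary design $\nu$ with the optimal design $\nu_T$ in two parts: $d_T$ itself, plus a cross term $\tr_E(M_\nu V_T^{-1})$ that the optimality of $\nu_T$ controls. Write $V_T = M_T + \rho I_E$ and $A := M_\nu \succeq 0$.

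\textbf{Step 1 (optimality gives the cross term).} By linearity of the trace and \Cref{lem:trace_to_inner_product},
\begin{align*}
\tr_E(M_\nu V_T^{-1}) = \sum_{x} \nu(x)\, \langle \phi(x), V_T^{-1}\phi(x)\rangle_k \le \max_{x\in\mathcal{X}} \langle \phi(x), V_T^{-1}\phi(x)\rangle_k = d_T,
\end{align*}
where the last equality is \Cref{lem:camilleri_lem_15}.

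\textbf{Step 2 (an operator inequality).} I will show that
\begin{align*}
\tr_E\bigl(A(A+\rho I_E)^{-1}\bigr) \le \tr_E(A V_T^{-1}) + \tr_E(M_T V_T^{-1}),
\end{align*}
whose last term equals $d_T$ by definition. Using $X(X+\rho I_E)^{-1} = I_E - \rho (X+\rho I_E)^{-1}$ for both $X = A$ and $X = M_T$, the claim is equivalent to
\begin{align*}
\rho\,\tr_E(V_T^{-1}) - \rho\,\tr_E\bigl((A+\rho I_E)^{-1}\bigr) \le \tr_E(A V_T^{-1}).
\end{align*}

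\textbf{Step 3 (convexity of the inverse trace).} The map $X \mapsto \tr_E(X^{-1})$ is convex on positive definite operators, with derivative $H \mapsto -\tr_E(X^{-2}H)$. Its tangent inequality at $V_T$, applied to $A+\rho I_E$ (so $H = A - M_T$), gives
\begin{align*}
\tr_E\bigl((A+\rho I_E)^{-1}\bigr) \ge \tr_E(V_T^{-1}) - \tr_E\bigl(V_T^{-2}(A - M_T)\bigr).
\end{align*}
Rearranging, the left side of the Step 2 target is at most $\rho\,\tr_E(V_T^{-2}A) - \rho\,\tr_E(V_T^{-2}M_T)$. The subtracted term is nonnegative because $V_T^{-2}$ commutes with $M_T$ and both are positive semidefinite, so we may drop it.

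\textbf{Step 4 (close the bound).} By \Cref{lem:positive_definite}, $V_T^{-2} \preceq \rho^{-1}V_T^{-1}$. Since $A \succeq 0$, this yields $\rho\,\tr_E(V_T^{-2}A) \le \tr_E(V_T^{-1}A)$, for instance via $\tr_E(BA) = \tr_E(A^{1/2}BA^{1/2})$. This establishes the Step 2 inequality, and combining it with Step 1 gives the bound $2d_T$.

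The main obstacle is Step 3. It requires justifying the tangent-line inequality for $\tr_E(X^{-1})$ on $E$. I would do this either by citing operator convexity of $t \mapsto 1/t$, or directly: the quantity $X^{-1} - V^{-1} + V^{-1}(X-V)V^{-1}$ equals $(X^{-1}-V^{-1})\,X\,(X^{-1}-V^{-1}) \succeq 0$, and taking traces gives the inequality. The remaining steps are routine trace manipulations on the finite-dimensional space $E$.
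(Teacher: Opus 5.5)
Your proof is correct. Its skeleton matches the paper's: both establish the intermediate bound $\tr_E(M_\nu(M_\nu+\rho I_E)^{-1}) \le \tr_E(M_\nu V_T^{-1}) + d_T$, then bound the cross term by $d_T$ via \Cref{lem:camilleri_lem_15}, exactly as in your Step 1.

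Where you differ is in how you prove the intermediate bound. The paper uses two Loewner-order monotonicity steps. First, $\tr_E(M_\nu(M_\nu+\rho I_E)^{-1}) \le \tr_E((M_\nu+M_T)(M_\nu+M_T+\rho I_E)^{-1})$, by monotonicity of $X\mapsto \tr_E(X(X+\rho I_E)^{-1})$. Second, $(M_\nu+M_T+\rho I_E)^{-1}\preceq V_T^{-1}$, paired with the positive semidefinite $M_\nu+M_T$, after which linearity splits off $d_T$. You instead rewrite everything in terms of $\rho\,\tr_E$ of inverses. You then apply the tangent-line inequality for the convex map $X\mapsto\tr_E(X^{-1})$ at $V_T$, followed by $\rho V_T^{-2}\preceq V_T^{-1}$ from \Cref{lem:positive_definite}.

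The paper's route is two lines, but it tacitly relies on operator anti-monotonicity of inversion, which is what justifies both of its steps. Yours is self-contained via the explicit identity $X^{-1}-V^{-1}+V^{-1}(X-V)V^{-1}=(X^{-1}-V^{-1})X(X^{-1}-V^{-1})\succeq 0$, at the cost of a slightly longer computation. It also exposes a slack term $-\rho\,\tr_E(V_T^{-2}M_T)$, which you discard. Both routes end at the same factor $2$.
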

\begin{proof}
    Fix $\nu \in \Delta(\mathcal{X})$. 
    Since $M_T := M_{\nu_T} $ where $\nu_T \in \Delta(\mathcal{X})$,
    \Cref{lem:positive_definite} shows that $M_{\nu}, M_T \succeq 0 $, such that
    \begin{align}
        \tr_E (M_{\nu}(M_{\nu}+ \rho I_E)^{-1} )
        & \leq \tr_E ((M_{\nu} + M_T) (M_{\nu} + M_T+ \rho I_E)^{-1} ) \\
        & \leq \tr_E ((M_{\nu} + M_T) (M_T+ \rho I_E)^{-1} ) \\
        & = \tr_E (M_{\nu} (M_T+ \rho I_E)^{-1} ) + d_T, \label{eq:one_d_T}
    \end{align}
    where the equality follows from the linearity of the trace and \cref{eq:effective_dimension}. For the first term in \cref{eq:one_d_T}, observe that
    \begin{align}
        \tr_E (M_{\nu} (M_T+ \rho I_E)^{-1} )
        & = \sum_{x \in \mathcal{X}} \nu(x) \tr_E ((\phi(x)\otimes \phi(x)) (M_T+ \rho I_E)^{-1} ) \\
        & = \sum_{x \in \mathcal{X}} \nu(x) \langle \phi(x), (M_T+ \rho I_E)^{-1}\phi(x) \rangle_k \label{eq:use_trace_ideneity} \\
        & \leq \max_{x\in\mathcal{X}} \langle \phi(x), (M_T+ \rho I_E)^{-1}\phi(x) \rangle_k \label{eq:mean_by_max} \\
        & = d_T, \label{eq:use_lem_15}
    \end{align}
    where \cref{eq:use_trace_ideneity} applies \Cref{lem:trace_to_inner_product}, \cref{eq:mean_by_max} follows since the average is upper bounded by the maximum, and \cref{eq:use_lem_15} applies \Cref{lem:camilleri_lem_15}.
    Combining \cref{eq:one_d_T,eq:use_lem_15} proves the result.
\end{proof}

\begin{lemma}
    \label{lem:det_ratio}
    Consider the setup of \Cref{sec:setup_finite} where $\mathcal{X}$ is finite. Fix $u \in E$, and let $A$ be an operator on $E$ such that $A - u\otimes u \succ 0 $. Then
    \begin{align}
        \frac{\det_E (A - u\otimes u) }{\det_E A}
        = 1 - \langle u, A^{-1}u \rangle_k.
    \end{align}
\end{lemma}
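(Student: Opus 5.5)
The plan is to reduce the identity to the matrix determinant lemma by factoring out $A$. First I will show that $A$ is invertible. Since $A - u\otimes u \succ 0$ and $u\otimes u \succeq 0$ (indeed $\langle (u\otimes u)v, v\rangle_k = \abs{\langle u, v\rangle_k}^2 \ge 0$), we get $A = (A - u\otimes u) + u\otimes u \succ 0$. Hence $A$ is positive definite, in particular invertible, and $\det_E A > 0$, so the ratio is well-defined.

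Next I will write
\begin{align*}
A - u\otimes u = A\bigl(I_E - A^{-1}(u\otimes u)\bigr),
\end{align*}
and note that $A^{-1}(u\otimes u) = (A^{-1}u)\otimes u$, since $A^{-1}(u\langle u,v\rangle_k) = (A^{-1}u)\langle u,v\rangle_k$. Multiplicativity of $\det_E$ holds because it is the ordinary determinant of the matrix representation in the orthonormal basis $e_1,\ldots,e_n$, and that representation respects composition. It therefore suffices to show that $\det_E(I_E - w\otimes u) = 1 - \langle u, w\rangle_k$ for $w := A^{-1}u$. Here $\langle u, A^{-1}u\rangle_k = \langle A^{-1}u, u\rangle_k$ by self-adjointness, or simply because the inner product is real.

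For the rank-one determinant I will argue through the spectrum of $w\otimes u$. If $u = 0$ both sides equal $1$. Otherwise, $w\otimes u$ has range contained in $\mathrm{span}\{w\}$ and kernel containing $u^\perp$, which has dimension $n-1$. Complete a basis of $E$ by taking $n-1$ vectors spanning $u^\perp$ together with one further vector. On $u^\perp$ the operator $I_E - w\otimes u$ acts as the identity, and it maps $w$ to $w(1 - \langle u,w\rangle_k)$. Consider two cases.
\begin{itemize}
\item If $\langle u, w\rangle_k \ne 0$, then $w \notin u^\perp$, so $\{w\} \cup (\text{basis of } u^\perp)$ is a basis of $E$. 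The matrix of $I_E - w\otimes u$ in this basis is diagonal with entries $1,\ldots,1,\,1-\langle u,w\rangle_k$, so its determinant is $1-\langle u,w\rangle_k$.
\item If $\langle u,w\rangle_k = 0$, then $w\otimes u$ is nilpotent, because $(w\otimes u)^2 = \langle u,w\rangle_k\, w\otimes u = 0$. Thus $I_E - w\otimes u$ is unipotent and has determinant $1$, which again matches.
\end{itemize}
Alternatively, I could cite the standard Sylvester identity $\det(I_n - ab^\top) = 1 - b^\top a$ applied to coordinate vectors in the orthonormal basis.

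The only step needing care is the basis-independence of $\det_E$ when I use a non-orthonormal basis. This holds because the determinant of an operator is invariant under change of basis by similarity. I therefore do not expect a real obstacle; the argument is essentially the matrix determinant lemma transcribed to $E$.
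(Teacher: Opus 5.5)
Your proof is correct, and it shares the paper's overall plan: show $A \succ 0$, factor out $A$, then evaluate the determinant of the identity minus a rank-one operator. The factorization differs, however.

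The paper factors symmetrically. It writes the ratio as $\det_E\bigl(I_E - A^{-1/2}(u\otimes u)A^{-1/2}\bigr) = \det_E(I_E - w\otimes w)$ with $w = A^{-1/2}u$. The rank-one perturbation is then self-adjoint, so $1-\lVert w\rVert_k^2$ is read off directly from the eigenspaces $\mathrm{span}\{w\}$ and $\mathrm{span}\{w\}^{\perp}$.

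Your asymmetric factorization $A\bigl(I_E - (A^{-1}u)\otimes u\bigr)$ has two advantages. It avoids square roots and the auxiliary identity $(Au)\otimes(Au) = A(u\otimes u)A$. It also proves the identity for any invertible $A$, not only positive definite $A$, so it is the general matrix determinant lemma. The cost is that $w\otimes u$ need not be self-adjoint, which is why you need a non-orthogonal basis and the separate nilpotent case.

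That nilpotent case is vacuous here. Since $A^{-1} \succ 0$, you have $\langle u, A^{-1}u\rangle_k > 0$ whenever $u \neq 0$, so only $u=0$ needs separate treatment.
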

\begin{proof}
    It is straightforward to verify that $u\otimes u \succeq 0$, such that $A \succ 0$. 
    Define $w := A^{-1/2} u$.
    Then $1 - \langle u, A^{-1}u \rangle_k = 1 - \norm{w}_k^2 $.
    Factoring out $A^{1/2}$ gives
    \begin{align}
        \frac{\det_E (A - u\otimes u) }{\det_E A}
        = \det_E \bigl(I_E -  A^{-1/2} (u\otimes u) A^{-1/2}  \bigr)
        = \det_E (I_E -  w\otimes w  ), \label{eq:remove_one_w}
    \end{align}
    where the second equality follows from $A^{-1/2}$ being self-adjoint and \Cref{lem:operator_with_otimes}.
    Now we consider two cases of $w$ in \cref{eq:remove_one_w}:
    If $w \neq 0$, then the operator $I_E -  w\otimes w$ has eigenvalue $1 - \norm{w}^2_k $ on $\mathrm{span}\{w\}$, and eigenvalue $1$ on $\mathrm{span}\{w\}^{\perp} $, such that $\det_E (I_E -  w\otimes w  ) = 1 - \norm{w}_k^2 $.
    If $w = 0$, then $\det_E (I_E -  w\otimes w  ) = \det_E I_E = 1 = 1 - \norm{w}_k^2$.
    Then $1 - \langle u, A^{-1}u \rangle_k = \det_E (I_E -  w\otimes w  )$, which combined with \cref{eq:remove_one_w} proves the result.
\end{proof}

\begin{lemma}
    \label{lem:det_ratio_2}
    Consider the setup of \Cref{sec:setup_finite} where $\mathcal{X}$ is finite. Fix $u \in E$, and let $A$ be an operator on $E$ such that $A \succ 0 $. Then
    \begin{align}
        \frac{\det_E (A + u\otimes u) }{\det_E A}
        = 1 + \langle u, A^{-1}u \rangle_k.
    \end{align}
\end{lemma}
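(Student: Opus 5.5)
This is the matrix determinant lemma, and I will prove it by mirroring the proof of \Cref{lem:det_ratio}, with a plus sign in place of the minus sign.

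First, since $A \succ 0$ is self-adjoint on the finite-dimensional space $E$, the operators $A^{1/2}$ and $A^{-1/2}$ exist, are self-adjoint, and are invertible. Set $w := A^{-1/2}u$. Then $\langle u, A^{-1}u\rangle_k = \norm{w}_k^2$. Next I factor
\begin{align*}
A + u\otimes u = A^{1/2}\bigl(I_E + A^{-1/2}(u\otimes u)A^{-1/2}\bigr)A^{1/2}.
\end{align*}
By \Cref{lem:operator_with_otimes} applied with the self-adjoint operator $A^{-1/2}$, the middle factor equals $I_E + w\otimes w$.

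Multiplicativity of $\det_E$ holds because it is the ordinary determinant of the matrix representation in the orthonormal basis $e_1,\ldots,e_n$. It gives $\det_E(A+u\otimes u) = \det_E(A)\,\det_E(I_E + w\otimes w)$, and $\det_E A > 0$, so dividing by it is legitimate. It then remains to compute $\det_E(I_E + w\otimes w)$.

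There are two cases. If $w \neq 0$, then $w$ is an eigenvector of $I_E + w\otimes w$ with eigenvalue $1+\norm{w}_k^2$. Every vector in $\mathrm{span}\{w\}^\perp$ is an eigenvector with eigenvalue $1$. Hence the determinant is $1+\norm{w}_k^2$. If $w=0$, the operator is $I_E$ and the determinant is $1 = 1+\norm{w}_k^2$. In both cases this equals $1 + \langle u, A^{-1}u\rangle_k$, which proves the claim.

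I expect no step to be a real obstacle. The only point needing care is justifying that $\det_E$ is multiplicative and that $A^{1/2}$ is well defined, and both follow from the spectral theorem on $E$, as already used in \Cref{lem:positive_definite,lem:det_ratio}.
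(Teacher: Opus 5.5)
Your proposal is correct and matches the paper's approach: the paper omits this proof as ``entirely analogous'' to \Cref{lem:det_ratio}, and your argument is exactly that proof with the sign flipped. You factor out $A^{1/2}$, reduce to $\det_E(I_E + w\otimes w)$ with $w = A^{-1/2}u$ via \Cref{lem:operator_with_otimes}, and read off the eigenvalues, treating $w \neq 0$ and $w = 0$ separately.
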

\begin{proof}
    The proof is entirely analogous to that of \Cref{lem:det_ratio}, so is omitted. 
\end{proof}

\begin{lemma}
    \label{lem:same_eigenvalues}
    Consider the setup of \Cref{sec:setup_finite} where $\mathcal{X}$ is finite. Fix $x_1,\ldots, x_T \in \mathcal{X}$, and define $\hat{\nu} := \frac{1}{T}\sum_{s=1}^T \delta_{x_s} \in \Delta(\mathcal{X}) $. Then $K_T(x_{1:T})$ and $TM_{\hat{\nu}} $ have the same positive eigenvalues (including multiplicities).
\end{lemma}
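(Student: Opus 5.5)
The plan is to write both operators as products of a single linear map $S$ and its adjoint $S^{*}$, and then use the fact that $SS^{*}$ and $S^{*}S$ have the same nonzero eigenvalues.

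Define $S:\mathbb{R}^T \to E$ by $S a := \sum_{s=1}^T a_s \phi(x_s)$. This is well defined because each $\phi(x_s)\in E$. Its adjoint $S^*: E \to \mathbb{R}^T$ is $S^* v = (\langle \phi(x_s), v\rangle_k)_{s=1}^T$. To check this, note that $\langle Sa, v\rangle_k = \sum_s a_s \langle \phi(x_s), v\rangle_k$, which equals the standard inner product of $a$ with $S^*v$.

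Next we compute the two compositions.

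For $S^*S$: the $(s,s')$ entry of $S^*S$ as a $T\times T$ matrix is $\langle \phi(x_s), \phi(x_{s'})\rangle_k = k(x_s,x_{s'})$, by the reproducing property. Hence $S^*S = K_T(x_{1:T})$.

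For $SS^*$: for $v\in E$ we have $SS^*v = \sum_s \phi(x_s)\langle \phi(x_s), v\rangle_k = \sum_s (\phi(x_s)\otimes\phi(x_s))v$. Grouping repeated points, this is $\sum_{x\in\mathcal{X}} T\hat\nu(x)\,(\phi(x)\otimes\phi(x))v = TM_{\hat\nu}v$, using the definition in \cref{eq:moment_operator}. Hence $SS^* = TM_{\hat\nu}$.

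It remains to show that $SS^*$ and $S^*S$ share their positive eigenvalues with multiplicities. This is the only substantive step, and it is standard; the care needed is only in being explicit about multiplicities between spaces of different dimensions.

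First, if $S^*S a = \mu a$ with $\mu>0$ and $a\neq 0$, then $Sa\neq 0$ (otherwise $\mu a = S^*Sa = 0$), and $SS^*(Sa) = \mu Sa$. So $S$ maps the $\mu$-eigenspace of $S^*S$ into the $\mu$-eigenspace of $SS^*$. This map is injective on that eigenspace, since $Sa=0$ forces $a=0$ as just shown. The symmetric argument with $S^*$ gives an injective map in the reverse direction, so the two eigenspaces have equal dimension.

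Because both operators are self-adjoint and positive semidefinite, geometric and algebraic multiplicities coincide. The dimension equality therefore gives equality of multiplicities.

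An alternative route is via the singular value decomposition of $S$ with respect to orthonormal bases of $\mathbb{R}^T$ and of $E$ (the latter via $e_1,\dots,e_n$). Either argument completes the proof.
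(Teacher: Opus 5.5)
Your proof is correct and matches the paper's: the paper defines the same map $\Phi:\mathbb{R}^T\to E$ by $\Phi b_i=\phi(x_i)$, observes $K_T=\Phi^*\Phi$ and $TM_{\hat{\nu}}=\Phi\Phi^*$, and cites the standard fact that these share their nonzero eigenvalues. Your explicit check of the eigenspace dimensions, which handles the multiplicities, just spells out the step the paper leaves as standard.
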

\begin{proof}
    Define the linear map $\Phi: \mathbb{R}^T \to E $ by $\Phi b_i = \phi(x_i) $, where $b_i$ is the $i$-th element of the standard basis of $\mathbb{R}^T$.
    By \cref{eq:moment_operator} and the definition of $K_T(x_{1:T})$,
    we have $K_T = \Phi^*\Phi$ and $TM_{\hat{\nu}} = \Phi \Phi^*$, where $\Phi^*$ is the adjoint of $\Phi$. Since $K_T(x_{1:T}), M_{\hat{\nu}} \succeq 0$, all their non-zero eigenvalues are positive, and the result follows.
\end{proof}

\begin{lemma}
    \label{lem:gamma_T_log_T_lower_bound}
    Consider the setup in \Cref{sec:setup_compact} where $\mathcal{X}$ is compact. Then,
    \begin{align}
        \gamma_T(\mathcal{X}) 
        \geq \frac{1}{2}\log (1+Tk(x_0,x_0)\lambda^{-1}),
    \end{align}
    where $x_0 \in \argmax_{x\in\mathcal{X}} k(x,x)$.
\end{lemma}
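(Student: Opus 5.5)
The plan is to exhibit a single admissible sequence $x_{1:T}\in\mathcal{X}^T$ and evaluate the objective in \cref{eq:gamma_T_extended} at it; since $\gamma_T(\mathcal{X})$ is a maximum over all such sequences, any explicit choice gives a lower bound. First, $x_0$ exists: $x\mapsto k(x,x)$ is continuous because $k$ is continuous, and $\mathcal{X}$ is nonempty and compact, so the maximum is attained.

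The natural choice is the constant sequence $x_1=\cdots=x_T=x_0$. Then every entry of the Gram matrix equals $k(x_0,x_0)$, so
\begin{align*}
K_T(x_{1:T}) = k(x_0,x_0)\,\mathbf{1}\mathbf{1}^{T},
\end{align*}
where $\mathbf{1}\in\mathbb{R}^T$ is the all-ones vector. This matrix has rank at most one. Its eigenvalues are $Tk(x_0,x_0)$, with eigenvector $\mathbf{1}$, and $0$ with multiplicity $T-1$; the value $k(x_0,x_0)\ge 0$ by positive semidefiniteness.

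Consequently $I_T+\lambda^{-1}K_T(x_{1:T})$ has eigenvalues $1+Tk(x_0,x_0)\lambda^{-1}$ and $1$ (the latter $T-1$ times). Its determinant is therefore $1+Tk(x_0,x_0)\lambda^{-1}$; equivalently, this follows from the matrix determinant lemma $\det(I+ab^T)=1+b^Ta$. Taking $\tfrac12\log$ and bounding the maximum from below by this particular sequence gives the claim.

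There is no real obstacle here; the only points needing care are the attainment of the maximizer $x_0$ and the rank-one determinant computation.
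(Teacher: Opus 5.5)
Your proposal is correct and is the same argument as the paper's: plug the constant sequence $x_1=\cdots=x_T=x_0$ into the definition of $\gamma_T(\mathcal{X})$ and evaluate the rank-one determinant $\det(I_T+\lambda^{-1}k(x_0,x_0)\mathbf{1}\mathbf{1}^{T})=1+Tk(x_0,x_0)\lambda^{-1}$. The paper also notes that $k(x_0,x_0)>0$ because $k$ is non-constant, which the inequality itself does not need, and your compactness argument for the existence of $x_0$ supplies a step the paper leaves implicit.
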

\begin{proof}
    For every $x \in \mathcal{X}$, we have $k(x,x) = \langle \phi(x), \phi(x) \rangle_k \geq 0 $.
    Since $k$ is non-constant, we have $k(x_0,x_0) > 0$.
    By the definition of $\gamma_T(\mathcal{X})$ in \cref{eq:gamma_T_extended} and considering the choice $x_1 = \dotsc = x_T = x_0$, we have
    \begin{align}
        \gamma_T(\mathcal{X})
        \geq \frac{1}{2}\log \det (I_T + \lambda^{-1} k(x_0,x_0) 1_T)
        = \frac{1}{2}\log (1+Tk(x_0,x_0)\lambda^{-1}),
    \end{align}
    where $1_T$ is the $T\times T$ all-ones matrix. 
\end{proof}

\begin{lemma}
    \label{lem:d_T_is_positive}
    Under the setup in \Cref{sec:setup_finite} where $\mathcal{X}$ is finite, we have $d_T > 0$.
\end{lemma}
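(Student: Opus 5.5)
The plan is to show that $M_T \neq 0$. Since $\rho>0$, we have $d_T = \sum_{j=1}^n \lambda_j/(\lambda_j+\rho)$ with all $\lambda_j \ge 0$ (\Cref{lem:positive_definite}), so $d_T>0$ if and only if some eigenvalue $\lambda_j$ is positive. That in turn holds if and only if $M_T \neq 0$.

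I would first check that $E \neq \{0\}$, i.e.\ $n \geq 1$. If $\phi(x)=0$ for every $x\in\mathcal{X}$, then $k(x,x')=\langle \phi(x),\phi(x')\rangle_k = 0$ identically, contradicting that $k$ is non-constant. So there is some $x_0$ with $\phi(x_0)\neq 0$.

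The main step is to rule out $M_T = 0$, and here I would use the optimality of $\nu_T$ in \cref{eq:d_optimal_design}. Suppose $M_T=0$. Then the objective at $\nu_T$ equals $\log\det_E(\rho I_E) = n\log\rho$. Now compare with the point mass $\delta_{x_0}$. By \Cref{lem:det_ratio_2} with $A=\rho I_E$ and $u=\phi(x_0)$,
\begin{align*}
\det_E\bigl(\phi(x_0)\otimes\phi(x_0) + \rho I_E\bigr) = \rho^n \bigl(1 + \rho^{-1}\norm{\phi(x_0)}_k^2\bigr) > \rho^n .
\end{align*}
This strictly exceeds the objective at $\nu_T$, contradicting maximality. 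Hence $M_T\neq 0$.

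Finally, since $M_T \succeq 0$ and $M_T\neq 0$, some eigenvalue $\lambda_j>0$, which gives $d_T \ge \lambda_j/(\lambda_j+\rho) > 0$. An alternative route is \Cref{lem:camilleri_lem_15}: $d_T$ is at least $\langle \phi(x_0), V_T^{-1}\phi(x_0)\rangle_k$, which is positive because $V_T^{-1}\succ 0$ and $\phi(x_0)\neq 0$. This avoids the optimality comparison entirely, so I would likely present that shorter route. I expect no real obstacle in either argument; the only care needed is confirming $n\ge 1$ from non-constancy.
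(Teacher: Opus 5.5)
Your proposal is correct, and the shorter route you say you would present, invoking \Cref{lem:camilleri_lem_15} to get $d_T \ge \langle \phi(x_0), V_T^{-1}\phi(x_0)\rangle_k > 0$ for some $x_0$ with $\phi(x_0)\neq 0$ (such an $x_0$ exists by non-constancy of $k$), is exactly the paper's proof. Your primary argument, which compares the design objective at $\nu_T$ with its value at $\delta_{x_0}$ via \Cref{lem:det_ratio_2}, is also valid. It has the minor advantage of not relying on the externally cited \Cref{lem:camilleri_lem_15}, and it is close in spirit to how the paper argues $d_T>0$ in the compact-domain appendix.
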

\begin{proof}
    By \citep[Lemma 15]{Camilleri21}, we have
    \begin{align}
        d_T
        = \max_{x\in \mathcal{X}} \bigl\langle \phi(x), (M_T + \rho I_E)^{-1}\phi(x) \bigr\rangle_k.
    \end{align}
    Since $M_T + \rho I_E$ is positive definite by \Cref{lem:positive_definite}, it follows that $(M_T + \rho I_E)^{-1}$ is also positive definite.
    Observe that there exists $x_0 \in \mathcal{X}$ such that $\phi(x_0) \neq 0$, because otherwise $k(x,x') = \langle \phi(x), \phi(x') \rangle_k = 0$ for every $x,x' \in \mathcal{X}$, contradicting that $k$ is non-constant. 
    Then $d_T \geq \bigl\langle \phi(x_0), (M_T + \rho I_E)^{-1}\phi(x_0) \bigr\rangle_k > 0$.
\end{proof}

\begin{lemma}
    \label{lem:unique_operator}
    Consider the setup of \Cref{sec:setup_finite} where $\mathcal{X}$ is finite.
    Suppose $\nu_1,\nu_2\in\Delta(\mathcal{X})$, and that both $\nu_1,\nu_2$ are regularized $D$-optimal designs satisfying \cref{eq:d_optimal_design} for some regularization level $\rho > 0$. Then $M_{\nu_1} = M_{\nu_2} $.
\end{lemma}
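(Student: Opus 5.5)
We will prove uniqueness via strict concavity of $\log\det_E$ on positive definite operators. Define $F(\nu) := \log\det_E(M_\nu + \rho I_E)$, and note that $\nu \mapsto M_\nu$ is affine by \cref{eq:moment_operator}. Suppose $M_{\nu_1} \neq M_{\nu_2}$, and set $\bar\nu := \frac12(\nu_1+\nu_2) \in \Delta(\mathcal{X})$, so that $M_{\bar\nu} = \frac12(M_{\nu_1}+M_{\nu_2})$. We will show that $F(\bar\nu) > \frac12 F(\nu_1) + \frac12 F(\nu_2) = \max_{\nu} F(\nu)$. This contradicts optimality, so we must have $M_{\nu_1} = M_{\nu_2}$.

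The key step is strict concavity of $A \mapsto \log\det_E A$ along segments of positive definite operators on $E$. Set $A_i := M_{\nu_i} + \rho I_E$. By \Cref{lem:positive_definite}, both $A_i \succ 0$. Write $A_2 = A_1^{1/2}(I_E + H)A_1^{1/2}$ with $H := A_1^{-1/2}(A_2 - A_1)A_1^{-1/2}$. Then $H$ is self-adjoint, nonzero (since $A_2 \neq A_1$ and $A_1^{-1/2}$ is invertible), and satisfies $I_E + H \succ 0$.

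Let $\mu_1,\ldots,\mu_n > -1$ denote the eigenvalues of $H$; at least one is nonzero. Using multiplicativity of $\det_E$ for operators on $E$, which holds because it is the ordinary determinant of the matrix in the basis $e_1,\ldots,e_n$, we obtain
\begin{align*}
\log\det_E\Bigl(\tfrac{A_1+A_2}{2}\Bigr) - \tfrac12\log\det_E A_1 - \tfrac12\log\det_E A_2 = \sum_{j=1}^n \Bigl[\log\bigl(1+\tfrac{\mu_j}{2}\bigr) - \tfrac12\log(1+\mu_j)\Bigr].
\end{align*}
Each summand is nonnegative by concavity of $\log$, and it is strictly positive whenever $\mu_j \neq 0$ by strict concavity. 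Hence the sum is strictly positive, which gives the contradiction.

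The main care point is the operator-on-$E$ formalism: we need to justify that $\det_E$ is multiplicative and that the congruence by $A_1^{1/2}$ is legitimate. Both follow by passing to matrices in the orthonormal basis $e_1,\ldots,e_n$, where the self-adjoint operators become symmetric matrices. Everything else is routine.
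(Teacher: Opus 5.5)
Your proposal is correct and follows essentially the same route as the paper: both prove strict concavity of $\log\det_E$ by congruence with $A_1^{1/2}$ (the paper's $C = A^{-1/2}BA^{-1/2}$ is your $I_E + H$), then evaluate the design objective at a mixture of the two optimal designs to get a contradiction. The only differences are cosmetic: you use the midpoint $t = 1/2$ rather than a general $t \in (0,1)$, and you state the strictness step (strict inequality as soon as some $\mu_j \neq 0$) a little more carefully than the paper does.
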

\begin{proof}
    If $\nu_1 = \nu_2$, then the result trivially follows, so we assume that $\nu_1 \neq \nu_2$.
    Fix $\nu \in \Delta(\mathcal{X})$. By \Cref{lem:positive_definite}, $M_\nu + \rho I_E$ is positive definite.
    We now verify that $\log\det_E $ is strictly concave in the space of positive definite operators. Let $A,B$ be distinct positive definite operators on $E$, and let $t \in (0,1)$. Define $C := A^{-1/2}BA^{-1/2} $. Then $C$ is positive definite. Let $\lambda_1,\ldots, \lambda_n > 0$ be its eigenvalues. Since $(1-t) A + tB = A^{1/2}\bigl( (1-t)I_E + tC \bigr)A^{1/2} $, it follows that
    \begin{align}
        \log\det_E\bigl( (1-t)A + tB \bigr)
        = \log\det_E A + \sum_{j=1}^n\log \bigl( (1-t) + t\lambda_j \bigr).
    \end{align}
    Also notice that $\log\det_E(B) = \log\det_E(A) + \sum_{i=1}^n \log\lambda_i $. Since the logarithm is strictly concave on $(0, \infty)$, we have $\log\bigl( (1-t) + t\lambda_i \bigr) \geq t\log \lambda_i $, with equality when $\lambda_i = 1$. Since $A \neq B$, it follows that $C \neq I_E$, such that at least one of $\lambda_1,\ldots, \lambda_n$ does not equal $1$. Combining, we have
    \begin{align}
        \log\det_E\bigl( (1-t)A + tB \bigr)
        > (1-t) \log\det_E (A) + t \log \det_E(B). \label{eq:strong_concave}
    \end{align}
    Now suppose by contradiction that $M_{\nu_1} \neq M_{\nu_2}$. Consider the mixed design $\nu_t := (1-t)\nu_1 + t\nu_2 \in \Delta(\mathcal{X}) $. Since the mapping $\nu \mapsto M_\nu$ is linear, $M_{\nu_t} = (1-t) M_{\nu_1} + t M_{\nu_2} $.
    Then, applying \cref{eq:strong_concave} with $A = M_{\nu_1} + \rho I_E$ and $B = M_{\nu_2} + \rho I_E$ shows that
    \begin{align}
        \log\det_E\bigl( M_{\nu_t} + \rho I_E \bigr)
        > (1-t) \log\det_E (M_{\nu_1} + \rho I_E) + t \log \det_E(M_{\nu_2} + \rho I_E),
    \end{align}
    which implies the design value at $\nu_t$ (notice $\nu_t$ is different from $\nu_1$ and $\nu_2$ since $\nu_1 \neq \nu_2$) is larger than the $D$-optimal design value, establishing a contradiction.
    Hence, $M_{\nu_1} = M_{\nu_2}$.
\end{proof}

\begin{lemma}
    \label{lem:trace_of_M_T}
    Consider the setup of \Cref{sec:setup_finite} where $\mathcal{X}$ is finite. Then
    \begin{align}
        \mathrm{tr}_E(M_{T}) \leq 1.
    \end{align}
\end{lemma}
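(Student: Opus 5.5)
The plan is to expand $M_T$ as a convex combination of rank-one operators and bound the trace of each term by the kernel diagonal. By \cref{eq:moment_operator}, $M_T = M_{\nu_T} = \sum_{x\in\mathcal{X}} \nu_T(x)\, \phi(x)\otimes\phi(x)$. By linearity of $\tr_E$,
\begin{align*}
    \tr_E(M_T) = \sum_{x\in\mathcal{X}} \nu_T(x)\, \tr_E\bigl(\phi(x)\otimes\phi(x)\bigr).
\end{align*}
This reduces the claim to a pointwise bound on $\tr_E(\phi(x)\otimes\phi(x))$.

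For each term, apply \Cref{lem:trace_to_inner_product} with $u = \phi(x) \in E$ and $A = I_E$, which is self-adjoint. This gives
\begin{align*}
    \tr_E\bigl(\phi(x)\otimes\phi(x)\bigr) = \langle \phi(x), \phi(x)\rangle_k = k(x,x),
\end{align*}
where the last equality is the reproducing property. The setup assumes $\sup_{x} k(x,x)\le 1$, so each term is at most $1$. Since $\nu_T$ is a probability distribution, the weighted sum is at most $\sum_x \nu_T(x) = 1$, which proves the claim.

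There is no real obstacle. The only point needing care is that $\phi(x)\otimes\phi(x)$ acts within $E$, so $\tr_E$ coincides with the usual trace. \Cref{lem:trace_to_inner_product} already handles this, because it computes the trace in the orthonormal basis $e_1,\ldots,e_n$ of $E$.
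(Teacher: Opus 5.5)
Your proof is correct and follows essentially the same route as the paper: expand $M_T$ by linearity, reduce each term's trace to $\norm{\phi(x)}_k^2 = k(x,x) \le 1$, and average over $\nu_T$. The only cosmetic difference is that you invoke \Cref{lem:trace_to_inner_product} with $A = I_E$, whereas the paper computes $\mathrm{tr}_E(\phi(x)\otimes\phi(x))$ directly in the basis $e_1,\ldots,e_n$ via Parseval's identity, which is exactly how that lemma is proved anyway.
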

\begin{proof}
    Recall that $e_1,\ldots, e_n$ is an orthonormal basis of $E$.
    Then
    \begin{align}
        \mathrm{tr}_E(M_{T}) 
        & = \sum_{x\in\mathcal{X}}\nu_T(x) \tr_E\bigl(\phi(x)\otimes \phi(x)\bigr) \\
        & = \sum_{x\in\mathcal{X}}\nu_T(x) \sum_{i=1}^n \langle e_i, (\phi(x)\otimes \phi(x)) e_i \rangle_k \\
        & = \sum_{x\in\mathcal{X}}\nu_T(x) \sum_{i=1}^n \abs{\langle \phi(x), e_i \rangle_k}^2 \label{eq:use_otimes_def} \\
        & = \sum_{x\in\mathcal{X}}\nu_T(x) \norm{\phi(x)}_k^2 \label{eq:parseval} \\
        & = \sum_{x\in\mathcal{X}}\nu_T(x) k(x,x) \label{eq:use_reproduce_property} \\
        & \leq 1, \label{eq:tr_M_T}
    \end{align}
    where \cref{eq:use_otimes_def} follows from the definition of $\otimes$,
    \cref{eq:parseval} uses Parseval's identity \citep[Eq.~6.30(b)]{Axler24}, \cref{eq:use_reproduce_property} follows from the reproducing property, and \cref{eq:tr_M_T} follows since $k(x,x) \leq 1$ for every $x \in \mathcal{X}$.
\end{proof}

\begin{lemma}
    \label{lem:phi_continuous}
    Under the setup of \Cref{sec:setup_compact} where $\mathcal{X}$ is compact,
    the mapping $x \mapsto \phi(x)$ from $\mathcal{X}$ to $\mathcal{H}_k$ is continuous.
\end{lemma}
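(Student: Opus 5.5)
The plan is to compute the RKHS distance between feature maps directly from the reproducing property and then invoke continuity of $k$. For any $x,x' \in \mathcal{X}$, the reproducing property $\langle \phi(x), \phi(x') \rangle_k = k(x,x')$ gives the identity
\begin{align}
    \norm{\phi(x) - \phi(x')}_k^2
    = k(x,x) - 2k(x,x') + k(x',x'). \label{eq:phi_distance_identity}
\end{align}
Continuity of $x \mapsto \phi(x)$ at a fixed $x'$ therefore reduces to showing that the right-hand side of \cref{eq:phi_distance_identity} tends to $0$ as $x \to x'$ in $\mathcal{X}$.

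This is where the joint continuity of $k$ on $\mathcal{X}\times\mathcal{X}$, assumed in \Cref{sec:setup_compact}, enters. As $x \to x'$, the point $(x,x)$ converges to $(x',x')$ and $(x,x')$ converges to $(x',x')$. Hence $k(x,x) \to k(x',x')$ and $k(x,x') \to k(x',x')$, so the right-hand side converges to $k(x',x') - 2k(x',x') + k(x',x') = 0$.

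Taking square roots gives $\norm{\phi(x) - \phi(x')}_k \to 0$, which is continuity at $x'$. Since $x'$ was arbitrary, $\phi$ is continuous on $\mathcal{X}$.

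There is no real obstacle. The only point needing care is that we use continuity of $k$ as a function of two variables, evaluated along the diagonal, rather than continuity in each argument separately. The setup provides exactly this. Compactness is not needed for this lemma, although it will matter later, for example for uniform continuity or for attaining the maximum in $\Delta_k$.
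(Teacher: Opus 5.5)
Your proposal is correct and matches the paper's proof: both expand $\norm{\phi(x)-\phi(x')}_k^2 = k(x,x)+k(x',x')-2k(x,x')$ via the reproducing property and use continuity of $k$ to conclude that this tends to $0$ as $x\to x'$. Your added remarks are accurate: the argument uses joint continuity of $k$ along the diagonal, and compactness of $\mathcal{X}$ plays no role here.
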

\begin{proof}
    Observe that $\norm{\phi(x) - \phi(x')}_{k} = \sqrt{\langle \phi(x) - \phi(x'), \phi(x) - \phi(x') \rangle_k} = \sqrt{k(x,x) + k(x',x') - 2k(x,x')} $. Since $k$ is continuous, $\norm{\phi(x) - \phi(x')}_{k} \to 0$ as $x \to x'$, which implies that the mapping $x \to \phi(x)$ is continuous.
\end{proof}

\begin{lemma}
    \label{lem:f_continuous}
    Consider the setup of \Cref{sec:setup_compact} where $\mathcal{X}$ is compact.
    For any $f \in \mathcal{F}_k(B)$,
    the function
    $f: \mathcal{X} \to \mathbb{R}$ is continuous.
\end{lemma}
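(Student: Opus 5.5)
The plan is to show that $f$ is Lipschitz in the RKHS geometry, i.e.\ with respect to the pseudometric $(x,x') \mapsto \norm{\phi(x)-\phi(x')}_k$, and then transfer continuity through \Cref{lem:phi_continuous}.

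\textbf{Step 1: reduce to feature-space distances.} Fix $f \in \mathcal{F}_k(B)$ and $x, x' \in \mathcal{X}$. By the reproducing property, $f(x) = \langle f, \phi(x) \rangle_k$ and $f(x') = \langle f, \phi(x') \rangle_k$. Linearity of the inner product then gives
\begin{align}
    \abs{f(x) - f(x')} = \abs{\langle f, \phi(x) - \phi(x') \rangle_k}.
\end{align}

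\textbf{Step 2: apply Cauchy--Schwarz.} This yields
\begin{align}
    \abs{f(x) - f(x')} \leq \norm{f}_k \norm{\phi(x) - \phi(x')}_k \leq B \norm{\phi(x) - \phi(x')}_k .
\end{align}

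\textbf{Step 3: conclude continuity.} \Cref{lem:phi_continuous} shows that $x \mapsto \phi(x)$ is continuous, and its proof gives the explicit formula $\norm{\phi(x)-\phi(x')}_k = \sqrt{k(x,x)+k(x',x')-2k(x,x')}$. This quantity tends to $0$ as $x' \to x$ because $k$ is continuous. Hence $\abs{f(x)-f(x')} \to 0$, so $f$ is continuous on $\mathcal{X}$.

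No step is a real obstacle. The only point needing care is that the reproducing property is used for every $x \in \mathcal{X}$, not only on a finite set. This holds by definition of $\mathcal{H}_k$ as the RKHS on the whole compact domain in the setup of \Cref{sec:setup_compact}.
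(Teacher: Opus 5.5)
Your proposal is correct and follows the paper's proof essentially line for line. In both, the reproducing property and Cauchy--Schwarz give $|f(x)-f(x')| \le B\|\phi(x)-\phi(x')\|_k$, and continuity of $x \mapsto \phi(x)$ (Lemma~\ref{lem:phi_continuous}) finishes the argument.
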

\begin{proof}
    By the reproducing property, $f(x) = \langle f, \phi(x) \rangle_k$. It follows that $\abs{f(x) - f(x')} = \abs{\langle f, \phi(x) - \phi(x') \rangle_k} \leq \norm{f}_k \norm{\phi(x) - \phi(x')}_k \leq B\norm{\phi(x) - \phi(x')}_k$, where the first inequality uses Cauchy-Schwarz, and the second follows from $\norm{f}_k \leq B$. Combined with \Cref{lem:phi_continuous}, this shows that $f(x) \to f(x') $ as $x \to x'$, which implies that $f$ is continuous on $\mathcal{X}$.
\end{proof}

\begin{lemma}
    \label{lem:Delta_k_make_sense}
    Under the setup of \Cref{sec:setup_compact}, $\Delta_k := \max_{x,x'\in \mathcal{X}} \norm{\phi(x) - \phi(x')}_{k} $ is well-defined.
\end{lemma}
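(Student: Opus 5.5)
The plan is to show that the maximum defining $\Delta_k$ is attained, so the ``max'' is justified. It is also finite and nonnegative.

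First, I will consider the map $g:\mathcal{X}\times\mathcal{X}\to\mathbb{R}$ given by $g(x,x') := \norm{\phi(x)-\phi(x')}_k$. Expanding the squared norm with the reproducing property, as in the proof of \Cref{lem:phi_continuous}, gives
\begin{align}
    g(x,x') = \sqrt{k(x,x)+k(x',x')-2k(x,x')}.
\end{align}
The radicand is nonnegative because it equals a squared norm. Since $k$ is continuous on $\mathcal{X}\times\mathcal{X}$, the maps $(x,x')\mapsto k(x,x)$ and $(x,x')\mapsto k(x',x')$ are continuous as compositions with continuous maps. The square root is continuous on $[0,\infty)$, so $g$ is continuous. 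Alternatively, I can use \Cref{lem:phi_continuous} together with the triangle inequality $|g(x,x')-g(y,y')|\le \norm{\phi(x)-\phi(y)}_k+\norm{\phi(x')-\phi(y')}_k$.

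Second, $\mathcal{X}\subseteq\mathbb{R}^d$ is nonempty and compact, so $\mathcal{X}\times\mathcal{X}$ is nonempty and compact by Tychonoff, or simply by Heine--Borel in $\mathbb{R}^{2d}$. The extreme value theorem then says the continuous function $g$ attains its maximum on this set. Hence $\Delta_k=\max_{x,x'}g(x,x')$ exists.

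Finally, I will note the easy bound $\Delta_k\le 2$. It follows from the triangle inequality together with $\norm{\phi(x)}_k=\sqrt{k(x,x)}\le 1$, which confirms that $\Delta_k$ is a finite nonnegative real number. No step here is a real obstacle. The only care needed is checking joint continuity in both arguments, and the explicit formula above handles that directly.
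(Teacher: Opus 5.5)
Your proposal is correct and follows essentially the same route as the paper. Both write the norm as $\sqrt{k(x,x)+k(x',x')-2k(x,x')}$, get continuity on $\mathcal{X}\times\mathcal{X}$ from continuity of $k$, and apply the extreme value theorem on the compact set $\mathcal{X}^2$. Your extra remark that $\Delta_k \le 2$ is true but not needed for well-definedness.
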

\begin{proof}
    Fix $x,x'\in \mathcal{X}$.
    Since $k$ is continuous, it follows that $\norm{\phi(x) - \phi(x')}_k = \sqrt{k(x,x)+k(x',x') - 2k(x,x')} $ is continuous on $\mathcal{X}\times \mathcal{X}$. Also observe that $\mathcal{X}^2$ is compact. Then the max operation in $\Delta_k$ is well-defined.
\end{proof}

\begin{lemma}[\cite{Auer02a}, Proof of Lemma A.1]
    \label{lem:change_of_measure}
    Let $P, Q$ be two probability measures on the same measurable space, and let $H$ be a random variable satisfying $0 \leq H \leq M$ almost surely. Then 
    \begin{align}
        \mathbb{E}_P[H] \geq \mathbb{E}_Q[H] - M \sqrt{\frac{1}{2}D_{\mathrm{KL}}(Q\Vert P) }.
    \end{align}
\end{lemma}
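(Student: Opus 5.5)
The plan is to compare the two expectations of the bounded variable $H$ through the total variation distance, and then bound total variation by the KL divergence using Pinsker's inequality. If $D_{\mathrm{KL}}(Q\Vert P)=\infty$, the right-hand side is $-\infty$ and there is nothing to prove, so we assume the divergence is finite.

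\textbf{Step 1: Reduce to densities.} Let $\mu := P + Q$, which dominates both measures, and write $p := dP/d\mu$ and $q := dQ/d\mu$. Then
\begin{align}
    \mathbb{E}_Q[H] - \mathbb{E}_P[H] = \int H (q - p)\, d\mu .
\end{align}
Split the domain into the set $S := \{q > p\}$ and its complement. On $S$ we use $H \le M$, which gives $H(q-p) \le M(q-p)$. On $S^c$ we use $H \ge 0$, which gives $H(q-p) \le 0$. Hence
\begin{align}
    \mathbb{E}_Q[H] - \mathbb{E}_P[H] \le M \int_S (q-p)\, d\mu = M\,(Q(S) - P(S)) \le M \cdot \mathrm{TV}(P,Q),
\end{align}
where $\mathrm{TV}(P,Q) = \sup_A |P(A) - Q(A)|$.

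\textbf{Step 2: Pinsker.} By Pinsker's inequality, $\mathrm{TV}(P,Q) \le \sqrt{\tfrac12 D_{\mathrm{KL}}(Q\Vert P)}$. Substituting this into Step 1 and rearranging gives $\mathbb{E}_P[H] \ge \mathbb{E}_Q[H] - M\sqrt{\tfrac12 D_{\mathrm{KL}}(Q\Vert P)}$, which is the claim.

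There is no real obstacle here. The only points needing care are the following:
\begin{itemize}
    \item The bound must use $0 \le H \le M$ and not $|H| \le M$. Using the one-sided range $[0,M]$ is what gives the factor $M$ rather than $2M$.
    \item Pinsker's inequality is symmetric in total variation. Since $\mathrm{TV}(P,Q) = \mathrm{TV}(Q,P)$, it can be applied with the KL divergence taken in the order $Q\Vert P$ that appears in the statement.
    \item Measurability of $S$ follows because $p$ and $q$ are measurable.
\end{itemize}
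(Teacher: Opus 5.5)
Your proof is correct and takes essentially the same route as the cited argument, which the paper summarizes as ``total variation + Pinsker's inequality''. You use the set $\{q>p\}$ and the one-sided range $0\le H\le M$ to bound $\mathbb{E}_Q[H]-\mathbb{E}_P[H]$ by $M\cdot\mathrm{TV}(P,Q)$, and then apply Pinsker with the KL divergence taken in the order $Q\Vert P$.
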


\section{Proofs of Theorem \ref{thm:lower_bound_finite_X_d_T} and Corollary \ref{cor:lower_bound_finite_X_gamma_T} (Finite Domains)}
\label{app:proof_finite}

Recall that $V_T := M_T  + \rho I_E $, and $u_x:= d_T^{-1/2} V_T^{-1/2} \phi(x) \in \mathcal{H}_k$ for every $x \in \mathcal{X}$. $u_x$ can be viewed as a normalized (and transformed) feature vector; its key properties are given as follows.

\begin{lemma}
    \label{lem:facts}
    Let $Z \sim \nu_{T}$ for $\nu_T$ given in \cref{eq:d_optimal_design}. Then
        \begin{align}
            \norm{u_x}_k & \leq 1 \quad \forall x \in \mathcal{X}, \label{eq:fact_1} \\
            \norm{u_Z}_k & = 1 \quad \text{almost surely}, \label{eq:fact_2} \\
            \mathbb{E}[u_Z \otimes u_Z] & \preceq d_T^{-1} I_E, \label{eq:fact_3} \\
            \norm{\mathbb{E}[u_Z]}_k & \leq d_T^{-1/2}. \label{eq:fact_4}
            \end{align}
\end{lemma}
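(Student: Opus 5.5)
All four facts follow from \Cref{lem:camilleri_lem_15} (the equivalence theorem for regularized $D$-optimal design), \Cref{lem:trace_to_inner_product}, \Cref{lem:operator_with_otimes}, and the self-adjointness from \Cref{lem:positive_definite}.

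\textbf{Facts \eqref{eq:fact_1} and \eqref{eq:fact_2}.} For every $x\in\mathcal{X}$, self-adjointness of $V_T^{-1/2}$ gives
\[
\norm{u_x}_k^2 = d_T^{-1}\langle \phi(x), V_T^{-1}\phi(x)\rangle_k .
\]
\Cref{lem:camilleri_lem_15} says the maximum of $\langle \phi(x), V_T^{-1}\phi(x)\rangle_k$ over $x$ equals $d_T$, which gives \eqref{eq:fact_1}. The same lemma says the $\nu_T$-average of this quantity also equals $d_T$. Since every term is at most $d_T$ and the average equals the maximum, every $x$ with $\nu_T(x)>0$ attains $d_T$. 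Hence $\norm{u_Z}_k=1$ almost surely, which is \eqref{eq:fact_2}.

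\textbf{Fact \eqref{eq:fact_3}.} By \Cref{lem:operator_with_otimes} with $A=V_T^{-1/2}$,
\[
\mathbb{E}[u_Z\otimes u_Z] = d_T^{-1} V_T^{-1/2} M_T V_T^{-1/2}.
\]
Now $M_T$ commutes with $V_T$, and $M_T \preceq V_T$ by \Cref{lem:positive_definite}. So $V_T^{-1/2}M_TV_T^{-1/2}$ has eigenvalues $\lambda_j/(\lambda_j+\rho)\le 1$, and therefore $\mathbb{E}[u_Z\otimes u_Z]\preceq d_T^{-1}I_E$.

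\textbf{Fact \eqref{eq:fact_4}.} I would use Jensen's inequality in operator form, $\mathbb{E}[u_Z]\otimes\mathbb{E}[u_Z]\preceq \mathbb{E}[u_Z\otimes u_Z]$. This holds because the difference is the covariance $\mathbb{E}[(u_Z-\bar u)\otimes(u_Z-\bar u)]\succeq 0$, where $\bar u = \mathbb{E}[u_Z]$. Testing both sides against $v=\bar u/\norm{\bar u}_k$ (the case $\bar u=0$ is trivial) gives $\norm{\bar u}_k^2 \le \langle v,\mathbb{E}[u_Z\otimes u_Z]v\rangle_k \le d_T^{-1}$ by \eqref{eq:fact_3}.

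The only substantive input is \Cref{lem:camilleri_lem_15}, which already holds. The one step needing care is the "average equals maximum" argument behind \eqref{eq:fact_2}, together with the commuting/eigenvalue argument in \eqref{eq:fact_3}. For the latter, both operators are functions of the self-adjoint $M_T$, so they can be diagonalized in a common eigenbasis.
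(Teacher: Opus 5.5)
Your proposal is correct and follows the paper's proof essentially step for step. Facts \eqref{eq:fact_1}--\eqref{eq:fact_2} come from the max and $\nu_T$-average characterizations in \Cref{lem:camilleri_lem_15}, \eqref{eq:fact_3} from $\mathbb{E}[u_Z\otimes u_Z]=d_T^{-1}V_T^{-1/2}M_TV_T^{-1/2}$, and \eqref{eq:fact_4} from Jensen combined with \eqref{eq:fact_3} tested at $\mathbb{E}[u_Z]/\norm{\mathbb{E}[u_Z]}_k$. The only cosmetic differences are that the paper writes $V_T^{-1/2}M_TV_T^{-1/2}=I_E-\rho V_T^{-1}$ instead of reading off the eigenvalues $\lambda_j/(\lambda_j+\rho)$, and applies scalar Jensen to $\langle u_Z,v\rangle_k$ instead of your operator (covariance) form.
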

\begin{proof}
    \textbf{Proof of \cref{eq:fact_1}.}
    Since $V_T^{-1/2}$ is self-adjoint (\Cref{lem:positive_definite}), we have
    \begin{align}
        \norm{u_x}_k^2 = d_T^{-1} \bigl\langle \phi(x), V_T^{-1}\phi(x) \bigr\rangle_k, \label{eq:u_x_norm}
    \end{align}
    which combined with \Cref{lem:camilleri_lem_15} proves \cref{eq:fact_1}.

    \textbf{Proof of \cref{eq:fact_2}.}
    Applying \cref{eq:u_x_norm} and then \Cref{lem:camilleri_lem_15} gives
    \begin{align}
        \mathbb{E} [\norm{u_Z}_k^2]
        & = \sum_{x\in\mathcal{X}} \nu_T(x) \norm{u_x}_k^2 \\
        & = \sum_{x\in\mathcal{X}} \nu_T(x) d_T^{-1} \bigl\langle \phi(x), V_T^{-1}\phi(x) \bigr\rangle_k \\
        & = 1. \label{eq:exp_u_Z}
    \end{align}
    By \cref{eq:fact_1}, each $x \in \mathcal{X}$ satisfies $\norm{u_x}_k^2 \leq 1$, which combined with \cref{eq:exp_u_Z} shows that $\norm{u_x}_k^2 = 1$ for every $x \in \mathcal{X}$ such that $\nu_T(x) > 0$, proving \cref{eq:fact_2}.

    \textbf{Proof of \cref{eq:fact_3}.}
    Since $V_T^{-1/2}$ is self-adjoint (\Cref{lem:positive_definite}), \Cref{lem:operator_with_otimes} gives
    \begin{align}
        (u_Z \otimes u_Z)
        & = d_T^{-1} V_T^{-1/2} \bigl(\phi(Z) \otimes \phi(Z)\bigr) V_T^{-1/2}.
    \end{align}
    Averaging over $Z\sim \nu_T$ then gives
    \begin{align}
        \mathbb{E}_{Z}[u_Z \otimes u_Z]
        & = d_T^{-1} V_T^{-1/2} \mathbb{E}_{Z} [\phi(Z)\otimes \phi(Z)] V_T^{-1/2} \\
        & = d_T^{-1} V_T^{-1/2} M_{T} V_T^{-1/2} \label{eq:use_def_MT} \\
        & = d_T^{-1} \bigl(I_E - \rho V_T^{-1} \bigr) \label{eq:MT_identity} \\
        & \preceq d_T^{-1} I_E, \label{eq:preceq_I_E}
    \end{align}
    where \cref{eq:use_def_MT} follows from the definition of $M_T$ (see \cref{eq:moment_operator}), \cref{eq:MT_identity} follows since $M_T = V_T - \rho I_E $, and
    \cref{eq:preceq_I_E} follows because $V_T^{-1}$ is positive definite (\Cref{lem:positive_definite}).
    
    \textbf{Proof of \cref{eq:fact_4}.}
    We fix $v \in E$, and observe that
    \begin{align}
        \abs{\langle \mathbb{E}[u_Z], v \rangle_k}^2 
        & =  \mathbb{E}[\langle u_Z,v \rangle_k ]^2 \label{eq:lin_exp_1} \\
        & \leq \mathbb{E} [\abs{\langle u_Z, v \rangle_k}^2 ] \label{eq:jensen} \\
        & = \mathbb{E} [\langle v, (u_Z \otimes u_Z) v \rangle_k] \label{eq:def_otimes} \\
        & = \langle v, \mathbb{E}[(u_Z \otimes u_Z)] v \rangle_k \label{eq:lin_exp_2} \\
        & \leq \langle v, d_T^{-1} I_E v \rangle_k \label{eq:prec} \\
        &= d_T^{-1} \norm{v}_k^2, \label{eq:temp}
    \end{align}
    where \cref{eq:lin_exp_1,eq:lin_exp_2} follow from linearity of expectation, 
    \cref{eq:def_otimes} follows because $\langle v, (u_Z\otimes u_Z)v \rangle_k = \langle v, u_Z \langle u_Z,v \rangle_k \rangle_k = \abs{\langle u_Z, v \rangle_k}^2 $,
    \cref{eq:jensen} uses Jensen's inequality, and \cref{eq:prec} follows from \cref{eq:fact_3}.
    If $\mathbb{E}[u_Z] \neq 0$, 
    taking $v = \mathbb{E}[u_Z] / \norm{\mathbb{E}[u_Z]}_k $ in \cref{eq:temp} proves \cref{eq:fact_4}.
    If $\mathbb{E}[u_Z] = 0$, then \cref{eq:fact_4} holds trivially.
\end{proof}

The following lemma uses a simpler two-function construction, and will be used to handle the case $d_T < 4$.

\begin{lemma}
    \label{lem:regret_2}
    Under the setup in \Cref{sec:setup_finite},
    \begin{align}
        R_T^{*}
        \geq \frac{\Delta_k}{8} \min\{ 2BT, \sigma\sqrt{T}\},
    \end{align}
    where $\Delta_k := \max_{x,x'\in \mathcal{X}} \norm{\phi(x) - \phi(x')}_{k} $.
\end{lemma}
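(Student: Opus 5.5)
The plan is a classical two-point (Le Cam) argument built from the two maximally separated feature vectors. Fix $x^\star, x^\dagger \in \mathcal{X}$ attaining $\Delta_k = \norm{\phi(x^\star) - \phi(x^\dagger)}_k$, and set $w := (\phi(x^\star) - \phi(x^\dagger))/\Delta_k \in E$, so that $\norm{w}_k = 1$. For a level $\epsilon \in (0, B]$ to be chosen, define the pair $f_{+} := \epsilon w$ and $f_{-} := -\epsilon w$. Both lie in $\mathcal{F}_k(B)$, and by Cauchy--Schwarz together with $k(y,y) \le 1$ they satisfy $\abs{f_\pm(y)} \le \epsilon$ for every $y \in \mathcal{X}$.

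\textbf{Step 1: the per-round regrets under $f_{+}$ and $f_{-}$ sum to at least $\epsilon\Delta_k$.} Write $g_\pm(y) := \max_{x\in\mathcal{X}} f_\pm(x) - f_\pm(y) \ge 0$ for the instantaneous regrets. Since $f_{-} = -f_{+}$, for every $y$ we have
\begin{align*}
g_+(y) + g_-(y) = \max_x f_+(x) + \max_x f_-(x) \ge f_+(x^\star) + f_-(x^\dagger) = \epsilon \langle w, \phi(x^\star) - \phi(x^\dagger) \rangle_k = \epsilon \Delta_k .
\end{align*}
In addition, each instantaneous regret is bounded as $0 \le g_\pm \le 2\epsilon$.

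\textbf{Step 2: bound the KL divergence between the two environments.} Fix a policy $\pi$, and let $P_\pm$ denote the law of $H_T$ when $\pi$ is run on $f_\pm$. The same chain-rule computation as in \cref{eq:kl_decompose,eq:KL_gaussian} gives
\begin{align*}
D_{\mathrm{KL}}(P_+ \Vert P_-) = \frac{1}{2\sigma^2} \sum_{t=1}^T \mathbb{E}_{P_+}\bigl[(2\epsilon w(X_t))^2\bigr] \le \frac{2T\epsilon^2}{\sigma^2}.
\end{align*}

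\textbf{Step 3: transfer regret between the two environments.} For each round $t$, apply \Cref{lem:change_of_measure} to $H = g_-(X_t) \in [0, 2\epsilon]$. Although $H$ is a per-round quantity, the KL divergence of the marginal law of $X_t$ is at most that of the full history, so the bound from Step 2 applies. This yields
\begin{align*}
\mathbb{E}_{P_-}[g_-(X_t)] \ge \mathbb{E}_{P_+}[g_-(X_t)] - 2\epsilon \sqrt{\tfrac{1}{2} D_{\mathrm{KL}}(P_+ \Vert P_-)}.
\end{align*}
Adding $\mathbb{E}_{P_+}[g_+(X_t)]$ to both sides, invoking Step 1, and summing over $t$ gives
\begin{align*}
\mathbb{E}_{P_+}[R_T] + \mathbb{E}_{P_-}[R_T] \ge T\epsilon\Delta_k - 2T\epsilon\cdot \frac{\sqrt{T}\epsilon}{\sigma}.
\end{align*}
The maximum over $f_\pm$ is at least half of this sum.

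\textbf{Step 4: choose $\epsilon$ and split into cases.} The natural choice is $\epsilon = \min\{B, c\,\sigma\Delta_k/\sqrt{T}\}$ with a small constant $c$, which makes the subtracted term at most half of $T\epsilon\Delta_k$. The case $B \le c\sigma\Delta_k/\sqrt T$ then gives order $BT\Delta_k$, and the other case gives order $\sigma\sqrt{T}\Delta_k^2$.

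The main obstacle I expect is matching the exact target $\frac{\Delta_k}{8}\min\{2BT, \sigma\sqrt T\}$, which is only linear in $\Delta_k$. The Pinsker route above wastes a factor of $\Delta_k$, because the range bound $2\epsilon$ does not scale with $\Delta_k$. My first attempt at a fix is to make the KL bound itself depend on $\Delta_k$, using $\abs{w(y)} \le 1$ together with $\Delta_k \le 2$. If that fails, I would switch to the Bretagnolle--Huber inequality on the per-round events $A_t = \{g_+(X_t) \ge \epsilon\Delta_k/2\}$; on the complement $A_t^c$ we have $g_-(X_t) \ge \epsilon\Delta_k/2$ by Step 1. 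This gives $\mathbb{E}_{P_+}[R_T] + \mathbb{E}_{P_-}[R_T] \ge \tfrac{T\epsilon\Delta_k}{4} e^{-D_{\mathrm{KL}}}$, whose loss does not depend on $\Delta_k$. Taking $\epsilon = \min\{B, \sigma/(2\sqrt T)\}$ keeps $D_{\mathrm{KL}} \le 1/2$ and produces a bound of order $\Delta_k \min\{BT, \sigma\sqrt T\}$. The remaining work is to tune the constants, for instance by choosing a slightly different threshold and $\epsilon$, so that they reach $1/8$ and the factor $2$ in front of $BT$.
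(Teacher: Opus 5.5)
Your Steps 1--3 are the right skeleton, but the proposal does not prove the lemma as stated. You leave the linear dependence on $\Delta_k$ and the constant $\frac{1}{8}$ to an unperformed ``tuning'' step, and the obstacle you diagnose is not real.

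The missing observation is that your range bound $0\le g_\pm\le 2\epsilon$ is too crude. For every $y$ you have $g_+(y)+g_-(y)=\epsilon\bigl(\max_x w(x)-\min_x w(x)\bigr)$. Cauchy--Schwarz gives $\abs{w(x)-w(x')}=\abs{\langle w,\phi(x)-\phi(x')\rangle_k}\le \norm{w}_k\Delta_k=\Delta_k$. Combined with your Step 1, this yields $g_+(y)+g_-(y)=\epsilon\Delta_k$ exactly, and since both terms are nonnegative, $g_\pm(y)\in[0,\epsilon\Delta_k]$.

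Now take $M=\epsilon\Delta_k$ in Step 3 and $\epsilon=\min\{B,\sigma/(2\sqrt{T})\}$, so that $D_{\mathrm{KL}}(P_+\Vert P_-)\le 2T\epsilon^2/\sigma^2\le \frac{1}{2}$. You get $\mathbb{E}_{P_+}[R_T]+\mathbb{E}_{P_-}[R_T]\ge T\epsilon\Delta_k-\frac{1}{2}T\epsilon\Delta_k$. Hence the worse of $f_\pm$ has expected regret at least $\frac{T\epsilon\Delta_k}{4}=\frac{\Delta_k}{8}\min\{2BT,\sigma\sqrt{T}\}$, which is exactly the claim.

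This is the paper's argument up to the choice of reference measure. The paper compares the random-sign model against the zero function, which gives KL at most $\frac{1}{8}$ and $\mathbb{E}_Q[R_T]=\frac{T\epsilon\Delta_k}{2}$ exactly. It uses the same range $M=T\epsilon\Delta_k$ (also from Cauchy--Schwarz) and reaches the same $\frac{T\epsilon\Delta_k}{4}$.

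Neither of your fallbacks would have closed the gap.

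\textbf{Scaling the KL with $\Delta_k$.} This cannot work in general. Take the rank-one kernel on two points with $k(x^\star,x^\star)=1$, $k(x^\star,x^\dagger)=1-\Delta$, $k(x^\dagger,x^\dagger)=(1-\Delta)^2$, where $\Delta\in(0,1)$. Then $\Delta_k=\Delta$ and $w=\phi(x^\star)$, so $w\ge 1-\Delta$ on $\mathcal{X}$. Consequently $D_{\mathrm{KL}}(P_+\Vert P_-)\ge 2T\epsilon^2(1-\Delta)^2/\sigma^2$, however small $\Delta_k$ is.

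\textbf{Bretagnolle--Huber.} This gives the correct order $\Delta_k\min\{BT,\sigma\sqrt{T}\}$, but it pays a factor $e^{-D_{\mathrm{KL}}}<1$, and thresholding costs another $\frac{1}{2}$. No choice of $\epsilon\le B$ or threshold recovers the stated constant. For example, when $B\le\sigma/(2\sqrt{T})$ the target is $\frac{BT\Delta_k}{4}$, while your bound $\frac{T\epsilon\Delta_k}{8}e^{-D_{\mathrm{KL}}}$ is at most $\frac{BT\Delta_k}{8}$.
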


\begin{proof}
    \textbf{Step 1: Construct hard functions.}
    Choose $x_{+}, x_{-} \in \mathcal{X} $ attaining $\Delta_k = \max_{x,x'\in\mathcal{X}}\norm{\phi(x) - \phi(x')}_k $, and define
    \begin{align}
        g := \frac{\phi(x_{+}) - \phi(x_{-})}{\Delta_k} \in \mathcal{H}_k , \label{eq:function_g}
    \end{align}
    which implies $\norm{g}_{k} = 1$.
    By the reproducing property, $g(x_{+}) - g(x_{-}) = \langle g, \phi(x_{+}) - \phi(x_{-}) \rangle_k = \Delta_k^{-1} \norm{\phi(x_{+}) - \phi(x_{-})}_k^2 = \Delta_k$.
    Conversely, for every $x, x' \in \mathcal{X}$, Cauchy-Schwarz gives $\abs{g(x) - g(x')} = \abs{\langle g, \phi(x) - \phi(x') \rangle_k} \leq \norm{g}_k \norm{\phi(x) - \phi(x')}_k \leq \Delta_k $. It follows that
    \begin{align}
        \max_{x\in\mathcal{X}} g(x) - \min_{x\in\mathcal{X}} g(x) = \Delta_k.
    \end{align}
    Moreover,
    for every $x \in \mathcal{X}$, $\abs{g(x)} = \abs{\langle g, \phi(x) \rangle_k} \leq \norm{g}_{k} \norm{\phi(x)}_{k} \leq 1$ due to $k(x,x) \leq 1$. Let $a := \min\bigl\{B, \frac{\sigma}{2\sqrt{T}}\bigr\} $, and for $v \in \{+1,-1\} $, define
    \begin{align}
        f_{v} := va\cdot g \in \mathcal{H}_k.
    \end{align}
    Since $a \leq B$,
    it follows that $f_v \in \mathcal{F}_k(B)$.
    
    \textbf{Step 2: Change of measure.}
    Recall that $H_{t} := (X_1,Y_1,\ldots, X_{t},Y_{t}) $ is the history until time $t$.
    Fix a policy $\pi$.
    Let $V$ be uniform on $\{+1,-1\}$.
    Let $P$ be the joint law on $(V,H_T)$ obtained by drawing $V$ and running $\pi$ against $f_V$, and $Q$ be the joint law on $(V,H_T)$ obtained by drawing $V$ independently and running $\pi$ against the zero function.
    Applying the chain-rule for KL divergence and following \cref{eq:kl_decompose} to \cref{eq:KL_gaussian} gives
    \begin{align}
        D_{\mathrm{KL}}(Q \Vert P)
        & = \frac{1}{2\sigma^2} \sum_{t=1}^T \mathbb{E}_Q[ f_V(X_t)^2 ] \\
        & = \frac{a^2}{2\sigma^2} \sum_{t=1}^T \mathbb{E}_Q[ g(X_t)^2 ] \\
        & \leq \frac{a^2T}{2\sigma^2} \\
        & \leq \frac{1}{8}, \label{eq:kl_one_eigths}
    \end{align}
    where the first inequality uses $\abs{g(x)} \leq 1$, and the second inequality uses $a \leq \frac{\sigma}{2\sqrt{T}}$.
    Under $Q$, $V$ is independent of $X_t$. Conditioned on $X_t=x_t$, the instantaneous regret averaged over the two signs is
    \begin{align}
        \frac{a}{2} \biggl(\max_{x\in\mathcal{X}}g(x) - g(x_t)\biggr) + \frac{a}{2} \biggl(g(x_t) - \min_{x\in\mathcal{X}}g(x) \biggr) 
        = \frac{a\Delta_k}{2}.
    \end{align}
    Summing over $t \in [T]$ gives $\mathbb{E}_Q[R_T(\pi, f_V)] = \frac{aT\Delta_k}{2}$.
    For either value of $V$, the instantaneous regret lies in $[0, a\Delta_k]$, such that $0 \leq R_T(\pi, f_V) \leq a T\Delta_k$ under both $P$ and $Q$. Applying \Cref{lem:change_of_measure} with $H = R_T(\pi, f_V)$, $M = a\Delta_kT$, \cref{eq:kl_one_eigths} and $\mathbb{E}_Q[R_T(\pi, f_V)] = \frac{aT\Delta_k}{2}$ gives
    \begin{align}
        \mathbb{E}_P[R_T(\pi, f_V)]
        & \geq \frac{aT\Delta_k}{2} - \frac{a\Delta_kT}{4}
        = \frac{a\Delta_kT}{4} \\
        & = \frac{\Delta_k}{8} \min\{ 2BT, \sigma\sqrt{T}\}, \label{eq:RT_second_bound}
    \end{align}
    where the last equality follows from the definition of $a$.
    This implies the desired lower bound,
    since $\sup_{f\in\mathcal{F}_k(B)}\mathbb{E}[R_T(\pi, f)] \geq \mathbb{E}_{V}\mathbb{E}[R_T(\pi, f_V)]$ and $\pi$ is an arbitrary policy.
\end{proof}

The next lemma relates the information gain to the effective dimension.  It is related to \citep[Lemma 16]{Camilleri21}, but that result concerns the opposite direction that the maximum information gain is lower bounded by the effective dimension.

\begin{lemma}
    \label{lem:d_T_bounds}
    Consider the setup in \Cref{sec:setup_finite} where $\mathcal{X}$ is finite, except that $k$ may be constant.
    Then
    \begin{align}
        \frac{2\gamma_T}{1+\log(1+\rho^{-1})}
        \leq
        d_T
        \leq \rho^{-1},
    \end{align}
    where $\rho = \sigma^2/(B^2 T)$.
\end{lemma}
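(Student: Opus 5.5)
The plan is to prove the two inequalities separately. Both reduce to spectral statements about $M_T$ and about moment operators of empirical designs, connected through \Cref{lem:same_eigenvalues}.

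\textbf{Upper bound $d_T \le \rho^{-1}$.} Write $d_T = \sum_j \lambda_j/(\lambda_j+\rho)$. Each term satisfies $\lambda_j/(\lambda_j+\rho) \le \lambda_j/\rho$. Summing gives
\[
d_T \le \rho^{-1}\tr_E(M_T) \le \rho^{-1},
\]
where the last step is \Cref{lem:trace_of_M_T}. That lemma does not use non-constancy of $k$, so it still applies here.

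\textbf{Lower bound on $d_T$.} Fix a maximizing sequence $x_{1:T}$ for $\gamma_T$ and let $\hat\nu := \frac1T\sum_s\delta_{x_s}$. By \Cref{lem:same_eigenvalues}, the nonzero eigenvalues of $K_T$ coincide with those of $TM_{\hat\nu}$. Since $\lambda^{-1} = T/(B^2T\rho/\sigma^2\cdot\ldots)$, more precisely $\lambda^{-1}T^{-1}\cdot T = \rho^{-1}/T\cdot T$, we have $\lambda^{-1}K_T$ sharing its nonzero spectrum with $\rho^{-1}M_{\hat\nu}$ (because $\lambda^{-1}T = \rho^{-1}$). Hence, with $\mu_j$ the eigenvalues of $M_{\hat\nu}$,
\[
2\gamma_T = \sum_j \log(1+\mu_j/\rho).
\]

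Next use the scalar inequality
\[
\log(1+s) \le \frac{s}{1+s}\bigl(1+\log(1+S)\bigr) \quad \text{for } 0\le s\le S.
\]
To verify it, split into cases. For $s\le 1$, use $\log(1+s)\le s \le 2s/(1+s)$, which works since $1+\log(1+S)\ge 1$; this case needs a slightly more careful check. For $s\ge1$, use $\log(1+s) \le \frac{s}{1+s}(1+\log(1+s))$. This is equivalent to $\log(1+s)\le s$, after rearranging to $(1+s)\log(1+s)\le s+s\log(1+s)$.

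Actually the clean route is to verify $(1+s)\log(1+s)\le s(1+\log(1+s))$, i.e.\ $\log(1+s)\le s$, which holds for all $s\ge0$. Combined with monotonicity in $S$, this gives the inequality for all $0\le s\le S$ with no case split.

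Apply it with $s=\mu_j/\rho$ and $S=\rho^{-1}$. This is valid because $\mu_j\le \tr(M_{\hat\nu})\le1$. Summing gives
\[
2\gamma_T \le \bigl(1+\log(1+\rho^{-1})\bigr)\tr_E\bigl(M_{\hat\nu}(M_{\hat\nu}+\rho I_E)^{-1}\bigr).
\]

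\textbf{Main obstacle: comparing the empirical design to the optimal one.} We now need $\tr_E(M_{\hat\nu}(M_{\hat\nu}+\rho I_E)^{-1}) \le d_T$. \Cref{lem:bound_by_two_d_T} only gives $2d_T$, which would cost a factor $2$; the stated constant is exactly what that lemma yields. Indeed, it produces $2\gamma_T\le 2d_T(1+\log(1+\rho^{-1}))$, i.e.\ $d_T \ge \gamma_T/(1+\log(1+\rho^{-1}))$, which is a factor $2$ weaker than claimed.

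To get the sharp constant, I would instead bound $2\gamma_T$ directly via concavity of $\log\det$ together with the optimality of $\nu_T$. First,
\[
2\gamma_T = \log\det_E(I_E+\rho^{-1}M_{\hat\nu}) \le \log\det_E(I_E+\rho^{-1}M_T).
\]
This holds by the $D$-optimality in \cref{eq:d_optimal_design}, since $\log\det_E(M_\nu+\rho I_E)-n\log\rho$ is the same objective. Then apply the scalar inequality above to the eigenvalues $\lambda_j$ of $M_T$, which satisfy $\lambda_j\le1$ by \Cref{lem:trace_of_M_T}. This yields
\[
2\gamma_T\le(1+\log(1+\rho^{-1}))\,d_T,
\]
which is the claim.

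So the key step is to replace the empirical design by the $D$-optimal one via its maximizing property, rather than using \Cref{lem:bound_by_two_d_T}. The remaining check is that the identity $\lambda^{-1}T=\rho^{-1}$ and the trace bound hold even when $k$ is constant, which they do.
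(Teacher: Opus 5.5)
Your final argument is correct and is essentially the paper's proof. You bound $2\gamma_T$ by $\log\det_E(I_E+\rho^{-1}M_T)$ using the $D$-optimality of $\nu_T$ rather than \Cref{lem:bound_by_two_d_T}, which would indeed lose a factor of $2$; you use $\mathrm{tr}_E(M_T)\le 1$ to get $\lambda_j\le 1$; and you apply $\log(1+w)\le \frac{w}{1+w}\bigl(1+\log(1+\rho^{-1})\bigr)$ for $0\le w\le\rho^{-1}$ eigenvalue by eigenvalue. The only difference is cosmetic: you bound $\frac{1+w}{w}\log(1+w)\le 1+\log(1+w)$ pointwise and then use monotonicity of $\log$, whereas the paper first shows this ratio is increasing and then bounds it at $w=\rho^{-1}$. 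In the write-up, clean up the garbled $\lambda^{-1}$ line, the abandoned case split, and the self-contradictory remark about \Cref{lem:bound_by_two_d_T}.
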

\begin{proof}
    Fix $x_1,\ldots, x_T \in \mathcal{X}$, and define the empirical distribution $\hat{\nu} :=  \frac{1}{T} \sum_{t=1}^T \delta_{x_t} $. 
    \Cref{lem:same_eigenvalues} in \Cref{app:technical_lemmas} shows that $K_T(x_{1:T})$ and $TM_{\hat{\nu}}$ have the same positive eigenvalues (including multiplicity), which implies
    \begin{align}
        \det (I_T + \lambda^{-1} K_T(x_{1:T}))
        = \det_E (I_E + \rho^{-1}  M_{\hat{\nu}}) .
    \end{align}
    Recalling that $n = \dim(E)$, this gives
    \begin{align}
        \log \det (I_T + \lambda^{-1} K_T)
        = \log \det_E (\rho I_E + M_{\hat{\nu}})  - n\log \rho,
    \end{align}
    and hence
    \begin{align}
        \gamma_T
        & = \frac{1}{2} \max_{x_{1:T} \in \mathcal{X}^T} \log\det (I_T + \lambda^{-1} K_T(x_{1:T}) ) \\
        & = \frac{1}{2} \biggl( \max_{\hat{\nu}} \log \det_E (\rho I_E + M_{\hat{\nu}}) - n\log \rho \biggr) \label{eq:max_over_empirical_distribution} \\
        & \leq \frac{1}{2} \biggl( \max_{\nu \in \Delta(\mathcal{X})} \log \det_E (\rho I_E + M_{\nu}) - n\log \rho \biggr) \\
        & = \frac{1}{2} \log \det_E (I_E + \rho^{-1} M_{T}), \label{eq:compare_gamma_T}
    \end{align}
    where the max operation in \cref{eq:max_over_empirical_distribution} is over all empirical distributions on $\mathcal{X}$ from $T$ queries, and the last equality follows from the definition of $M_T$.

    \Cref{lem:trace_of_M_T} in \Cref{app:technical_lemmas} gives $\tr_E(M_T) \leq 1$.    
    Then, the definition of $d_T$ in \cref{eq:effective_dimension} gives $d_T = \sum_{i=1}^n \frac{\lambda_i}{\lambda_i + \rho} \leq \rho^{-1} \sum_{i=1}^n \lambda_i = \rho^{-1} \tr_E(M_T) \leq \rho^{-1}$, proving the desired upper bound.
    For later use, we note that the inequality $\tr_E(M_T) \leq 1$ also implies $\lambda_i \leq 1$ for $i \in [n]$.
    
    It remains to prove the lower bound.
    For $i \in [n]$, define $w_i := \rho^{-1}\lambda_i$, such that $0 \leq w_i \leq \rho^{-1} $. Then, the formula $d_T = \sum_{i=1}^n\frac{\lambda_i}{\lambda_i + \rho}$ equivalently gives $d_T = \sum_{i=1}^n \frac{w_i}{1+w_i} $.
    For $w > 0$, define $q(w):= \frac{1+w}{w}\log(1+w) $, with $q(0) := 1$. Since $q'(w) = \frac{w -\log(1+w)}{w^2} > 0 $, it follows that $q(\cdot)$ is strictly increasing. 
    Then, \cref{eq:compare_gamma_T} implies that
    \begin{align}
        \gamma_T
        & \leq \frac{1}{2} \sum_{i=1}^n \log (1+w_i)
        = \frac{1}{2} \sum_{i=1}^n \frac{w_i}{1+w_i} q(w_i) \\
        & \leq \frac{q(\rho^{-1})}{2} \sum_{i=1}^n \frac{w_i}{1+w_i}
        = \frac{q(\rho^{-1})d_T}{2}. \label{eq:gamma_T_d_T}
    \end{align}
    Since $\log(1+w) \leq w$, it follows that $q(w) = \frac{1}{w}\log(1+w) + \log(1+w) \leq 1 + \log(1+w) $. Then, \cref{eq:gamma_T_d_T} gives
    \begin{align}
        d_T 
        \geq \frac{2\gamma_T}{1+\log(1+\rho^{-1})},
    \end{align} 
    which proves the desired lower bound.
\end{proof}

\paragraph{Completion of the Proof of Theorem \ref{thm:lower_bound_finite_X_d_T} and Corollary \ref{cor:lower_bound_finite_X_gamma_T}}

\begin{proof}[Proof of \Cref{thm:lower_bound_finite_X_d_T}] 
    We split into two cases for $d_T$.

    \textbf{Case 1:} $d_T \geq 4$. In this case, \Cref{lem:regret_high_ds} directly gives $R^*_T \geq \frac{\sigma}{16} \sqrt{Td_T} $.

    \textbf{Case 2:} $d_T < 4$. In this case, $\sigma\sqrt{T} \geq \frac{\sigma}{2} \sqrt{Td_T}$. Moreover, the upper bound in \Cref{lem:d_T_bounds} and $\rho = \sigma^2/(B^2T)$ implies $\sigma\sqrt{T d_T} \leq BT$, which we can crudely weaken to $2BT \geq \frac{\sigma}{2}\sqrt{Td_T}$.
    Applying \Cref{lem:regret_2} gives
    \begin{align}
        R_T^{*}
        \geq \frac{\Delta_k}{8} \min\{ 2BT, \sigma\sqrt{T}\}
        \geq \frac{\Delta_k\sigma}{16} \sqrt{Td_T}.
    \end{align}
\end{proof}

\begin{proof}[Proof of \Cref{cor:lower_bound_finite_X_gamma_T}]
    Combining \Cref{thm:lower_bound_finite_X_d_T} with the lower bound in \Cref{lem:d_T_bounds} proves \Cref{cor:lower_bound_finite_X_gamma_T}.
\end{proof}

\section{Proofs of Theorem \ref{thm:lower_bound_compact_X_d_T} and Corollary \ref{cor:lower_bound_copmact_X_gamma_T}
(General Compact Domains)}
\label{app:proof_compact}

Recall that we fix a set $\tilde{\mathcal{X}}$ satisfying \Cref{prop:finite_approximation}, and that $d_T(\tilde{\mathcal{X}})$ is computed on $\tilde{\mathcal{X}}$. For convenience we write $d_T = d_T(\tilde{\mathcal{X}})$ within the proofs.

We extend $M_T: E \to E$ to $M : \mathcal{H}_k \to \mathcal{H}_k$ such that $M\vert_{E} = M_T $ and $Mv = 0 $ for $v \in E^{\perp} $. We define $I_k$ as the identity map on $\mathcal{H}_k$. 
It follows that $M + \rho I_k$ is positive definite (similar to \Cref{lem:positive_definite}).
Also notice that
\begin{align}
    0 < 2 \gamma_T(\mathcal{X}) = 2\gamma_T(\tilde{\mathcal{X}}) \leq \log \det_E(I_E + \rho^{-1} M_T),
\end{align}
where the first inequality follows because $k$ is non-constant on $\mathcal{X}$, the equality follows from \Cref{prop:finite_approximation}, the second inequality follows from \cref{eq:d_optimal_design} and \Cref{lem:same_eigenvalues}.
Then $M_T$ has a positive eigenvalue, and thus $d_T > 0$.
Then, we generalize the definition of $u_x$ in \cref{eq:ux}, now defined for $x \in \mathcal{X}$ as
\begin{align}
    u_x:= d_T^{-1/2} (M + \rho I_k)^{-1/2} \phi(x) \in \mathcal{H}_k. \label{eq:ux_2}
\end{align}
Applying the decomposition $\mathcal{H}_k = E\oplus E^{\perp} $ to $(M + \rho I_k)^{-1/2}$ gives $(M + \rho I_k)^{-1/2} = (M_T + \rho I_E)^{-1/2} \oplus \rho^{-1/2}I_{E^{\perp}} $. It follows that \cref{eq:ux_2} agrees with \cref{eq:ux} on $\tilde{\mathcal{X}}$. Consequently, we can apply \Cref{lem:facts} with the new definition of $u_x$, where \cref{eq:fact_3} now reads
\begin{align}
    \mathbb{E}[u_Z \otimes u_Z] \preceq d_T^{-1} I_k. \label{eq:fact_3_new}
\end{align}

We start by proving \Cref{prop:finite_approximation}.
\begin{proof}[Proof of \Cref{prop:finite_approximation}]
    We first prove \cref{eq:finite_approximation}.
    Since $\mathcal{X}$ is compact and the mapping $x \mapsto \phi(x)$ is continuous (see \Cref{lem:phi_continuous}), it follows that $\phi(\mathcal{X} ) := \{\phi(x): x \in \mathcal{X}\} \subseteq \mathcal{H}_{k} $ is compact. 
    We fix $\delta := \sigma / (4B\sqrt{T}) > 0$, and consider $\bigcup_{\phi(x) \in \phi(\mathcal{X})} B(\phi(x), \delta)$, 
    where $B(\phi(x), \delta) := \{\phi(x') \in \phi(\mathcal{X}): \norm{\phi(x) - \phi(x')}_k < \delta \} $. It follows that $\bigcup_{\phi(x) \in \phi(\mathcal{X})} B(\phi(x), \delta)$ is an open cover of $\phi(\mathcal{X})$. 
    Compactness implies that there exists a finite subcover, i.e., there exists $\tilde{\mathcal{X}}\subseteq \mathcal{X}$ that is finite and satisfies
    \begin{align}
        \phi(\mathcal{X}) \subseteq \bigcup_{x\in\tilde{\mathcal{X}}} B(\phi(x), \delta).
    \end{align}
    It remains to show that $\tilde{\mathcal{X}}$ can be chosen to satisfy  $\gamma_T(\tilde{\mathcal{X}})
        = \gamma_T(\mathcal{X})$.  We consider $x^*_1,\ldots, x^*_T$, which is a maximizing tuple of $\gamma_T(\mathcal{X})$ in \cref{eq:gamma_T_extended}, and enlarge $\tilde{\mathcal{X}}$ to contain this tuple.
    Then, $\gamma_T(\mathcal{X}) = \frac{1}{2}\log\det(I_T + \lambda^{-1}K_T(x^*_{1:T}) ) \leq \gamma_T(\tilde{\mathcal{X}})$, while $\tilde{\mathcal{X}}\subseteq \mathcal{X}$ gives $\gamma_T(\tilde{\mathcal{X}}) \leq \gamma_T(\mathcal{X})$. Consequently, $\gamma_T(\mathcal{X}) = \gamma_T(\tilde{\mathcal{X}})$.
\end{proof}

The following lemma generalizes \cref{eq:fact_1}.

\begin{lemma}
    \label{lem:off_net_bound}
    Under the setup in \Cref{sec:setup_compact},
    for every $x \in \mathcal{X}$,
    \begin{align}
        \norm{u_x}_k \leq 1 +  \frac{1}{4\sqrt{d_T(\tilde{\mathcal{X}})}}.
    \end{align}
    Consequently, if $d_T(\tilde{\mathcal{X}}) \geq 4$, then $\norm{u_x}_k \leq 9/8$.
\end{lemma}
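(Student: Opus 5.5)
The plan is a triangle inequality through the nearest net point, together with an operator-norm bound on $(M+\rho I_k)^{-1/2}$. Fix $x \in \mathcal{X}$. By \Cref{prop:finite_approximation}, pick $x' \in \tilde{\mathcal{X}}$ with $\norm{\phi(x) - \phi(x')}_k \leq \frac{\sigma}{4B\sqrt{T}}$. Since $u_x$ in \cref{eq:ux_2} is linear in $\phi(x)$, we can write
\begin{align*}
    \norm{u_x}_k
    \leq \norm{u_{x'}}_k + d_T^{-1/2}\bigl\|(M+\rho I_k)^{-1/2}\bigl(\phi(x)-\phi(x')\bigr)\bigr\|_k .
\end{align*}
We noted above that \cref{eq:ux_2} agrees with \cref{eq:ux} on $\tilde{\mathcal{X}}$. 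So \cref{eq:fact_1} of \Cref{lem:facts}, applied on the finite set $\tilde{\mathcal{X}}$, gives $\norm{u_{x'}}_k \leq 1$.

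For the second term, use the decomposition $(M+\rho I_k)^{-1/2} = (M_T+\rho I_E)^{-1/2} \oplus \rho^{-1/2} I_{E^\perp}$. Since $M_T \succeq 0$ (\Cref{lem:positive_definite}), every eigenvalue of $M_T+\rho I_E$ is at least $\rho$. Hence both blocks have operator norm at most $\rho^{-1/2}$, and so $\norm{(M+\rho I_k)^{-1/2}}_{\mathrm{op}} \leq \rho^{-1/2}$. This yields
\begin{align*}
    \norm{u_x}_k \leq 1 + d_T^{-1/2}\rho^{-1/2}\cdot \frac{\sigma}{4B\sqrt{T}}.
\end{align*}
Now substitute $\rho = \sigma^2/(B^2T)$, which gives $\rho^{-1/2} = B\sqrt{T}/\sigma$. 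The factors cancel exactly, leaving $\norm{u_x}_k \leq 1 + \frac{1}{4\sqrt{d_T}}$.

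For the consequence, $d_T \geq 4$ gives $\frac{1}{4\sqrt{d_T}} \leq \frac{1}{8}$, so $\norm{u_x}_k \leq 9/8$.

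The only step needing care is the operator-norm bound on $(M+\rho I_k)^{-1/2}$ over all of $\mathcal{H}_k$, since $\phi(x)-\phi(x')$ need not lie in $E$. The block decomposition handles this, because the $E$ and $E^\perp$ components are orthogonal and each is contracted by a factor of at most $\rho^{-1/2}$. We also use $d_T > 0$, which was established just before \cref{eq:ux_2}.
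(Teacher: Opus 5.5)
Your proposal is correct and follows the paper's proof essentially step for step: a triangle inequality through the nearest net point $x'\in\tilde{\mathcal{X}}$, the bound $\norm{u_{x'}}_k\le 1$ from \cref{eq:fact_1}, and the operator-norm bound $\norm{(M+\rho I_k)^{-1/2}}_{\mathrm{op}}\le\rho^{-1/2}$, after which $\rho=\sigma^2/(B^2T)$ cancels. The only cosmetic difference is how that operator-norm bound is obtained: the paper derives it from $M+\rho I_k\succeq\rho I_k$ via $\langle f,(M+\rho I_k)^{-1}f\rangle_k\le\rho^{-1}\norm{f}_k^2$, whereas you read it off the block decomposition on $E\oplus E^{\perp}$; both are valid.
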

\begin{proof}
    Fix $x \in \mathcal{X}$. Since $\tilde{\mathcal{X}}$ satisfies \cref{eq:finite_approximation}, there exists $x'\in \tilde{\mathcal{X}}$ such that $\norm{\phi(x) - \phi(x')}_k \leq \frac{\sigma}{4B\sqrt{T}}$.
    For an operator $V$ on $\mathcal{H}_k$, define the operator norm $\norm{V}_{\mathrm{op}} := \sup \{\norm{Vf}_k: f \in \mathcal{H}_k, \norm{f}_k = 1 \} $.
    Observe that $M + \rho I_k \succeq \rho I_k$.
    Then, for every $f \in \mathcal{H}_k$ with $\norm{f}_k = 1$,
    \begin{align}
        \norm{(M + \rho I_k)^{-1/2} f}_k^2
        = \langle f, (M + \rho I_k)^{-1} f \rangle_k
        \leq \rho^{-1} \norm{f}^2_k,
    \end{align}
    which implies $\norm{(M + \rho I_k)^{-1/2}}_{\mathrm{op}} \leq \rho^{-1/2} $.
    Consequently,
    \begin{align}
        \norm{u_x}_k - \norm{u_{x'}}_k
        & = \norm{d_T^{-1/2} (M + \rho I_k)^{-1/2} \phi(x)}_k - \norm{d_T^{-1/2} (M + \rho I_k)^{-1/2} \phi(x')}_k \\
        & \leq d_T^{-1/2} \norm{(M + \rho I_k)^{-1/2} (\phi(x) - \phi(x'))}_k  \\
        & \leq d_T^{-1/2} \norm{(M + \rho I_k)^{-1/2}}_{\mathrm{op}} \cdot \norm{(\phi(x) - \phi(x'))}_k  \\
        & \leq d_T^{-1/2} \rho^{-1/2} \cdot \frac{\sigma}{4B\sqrt{T}}  \\
        & = \frac{1}{4\sqrt{d_T}},
    \end{align}
    Since $x' \in \tilde{\mathcal{X}}$, \cref{eq:fact_1} gives $\norm{u_{x'}}_k \leq 1$.
    Rearranging proves the result.
\end{proof}

Next, we provide the lower bound on regret for $d_T \geq 4$, generalizing \Cref{lem:regret_high_ds}.

\begin{lemma}
    \label{lem:regret_high_dT_compact}
    Under the setup in \Cref{sec:setup_compact},
    if $d_T(\tilde{\mathcal{X}}) \geq 4$,
    then
    \begin{align}
        R_T^{*}
        \geq \frac{\sigma}{32} \sqrt{T d_T(\tilde{\mathcal{X}})}.
    \end{align}
\end{lemma}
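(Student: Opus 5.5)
We follow the proof of \Cref{lem:regret_high_ds} almost verbatim, working on the full domain $\mathcal{X}$ but with the design $\nu_T$, operator $M$, and $d_T = d_T(\tilde{\mathcal{X}})$ all built from the finite net $\tilde{\mathcal{X}}$. The hard functions are
\[
f_z := \frac{B}{4}\sqrt{\frac{\rho}{d_T}}\,(M+\rho I_k)^{-1}\phi(z)
\quad\text{for } z \in \mathrm{supp}(\nu_T) \subseteq \tilde{\mathcal{X}}.
\]
The norm bound $\norm{f_z}_k \le B/4$ goes through exactly as before. Here we use $(M+\rho I_k)^{-2} \preceq \rho^{-1}(M+\rho I_k)^{-1}$ on $\mathcal{H}_k$, together with \cref{eq:fact_2} for $z \in \mathrm{supp}(\nu_T)$, which is still valid since $u_x$ agrees with \cref{eq:ux} on $\tilde{\mathcal{X}}$. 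The reproducing property again gives
\[
f_z(x) = \frac{B}{4}\sqrt{\rho d_T}\,\langle u_z,u_x\rangle_k \quad\text{for all } x \in \mathcal{X}.
\]

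The only change is that for off-net $x$ we no longer have $\norm{u_x}_k\le 1$. We use \Cref{lem:off_net_bound} instead, which gives $\norm{u_x}_k \le 9/8$ when $d_T \ge 4$. Consequently $f_z(z) = B\sqrt{\rho d_T}/4$ and
\[
\abs{f_z(x)} \le \frac{9}{8}\cdot\frac{B\sqrt{\rho d_T}}{4} \quad\text{for all } x \in \mathcal{X}.
\]
Hence $\max_x f_z(x) \ge B\sqrt{\rho d_T}/4$, and the instantaneous regret lies in $[0, \tfrac{17}{8}\cdot\tfrac{B\sqrt{\rho d_T}}{4}]$. This gives a range $M_0 \le \tfrac{17}{32}BT\sqrt{\rho d_T}$, which we round up to $\tfrac{3}{4}BT\sqrt{\rho d_T}$ if convenient.

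For the KL bound, the chain-rule computation \cref{eq:kl_decompose}--\cref{eq:KL_gaussian} is unchanged, since it never used finiteness of $\mathcal{X}$ beyond measurability. For any $x\in\mathcal{X}$ we then have
\[
\mathbb{E}_Z[f_Z(x)^2] = \frac{B^2\rho d_T}{16}\langle u_x,\mathbb{E}[u_Z\otimes u_Z]u_x\rangle_k \le \frac{B^2\rho}{16}\norm{u_x}_k^2 \le \frac{81}{64}\cdot\frac{B^2\rho}{16},
\]
using \cref{eq:fact_3_new}. This yields $D_{\mathrm{KL}}(Q\Vert P)\le \frac{81}{64}\cdot\frac{1}{32} < \frac{1}{16}$, and hence $\sqrt{D_{\mathrm{KL}}/2} \le 1/5.6 < 0.18$. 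Similarly, \cref{eq:fact_4} with Cauchy--Schwarz gives $\mathbb{E}_Z[f_Z(x)] \le \frac{9}{8}\cdot\frac{B\sqrt{\rho}}{4}$, so that
\[
\mathbb{E}_Q[R_T] \ge BT\sqrt{\rho}\Bigl(\frac{\sqrt{d_T}}{4} - \frac{9}{32}\Bigr) \ge BT\sqrt{\rho d_T}\Bigl(\frac14-\frac{9}{64}\Bigr) = \frac{7}{64}BT\sqrt{\rho d_T},
\]
where the last step uses $\sqrt{d_T}\ge 2$.

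Finally, we apply \Cref{lem:change_of_measure}. With $M_0 = \frac{17}{32}BT\sqrt{\rho d_T}$ and the KL bound, the subtracted term is at most $\frac{17}{32}\cdot 0.18 \approx 0.0956$ times $BT\sqrt{\rho d_T}$. The remaining bound is then about $(0.109-0.096)\,BT\sqrt{\rho d_T}$, which is positive but smaller than the required $\frac{1}{32}$. The main obstacle is therefore purely the constant bookkeeping, and the fix I would try first is to shrink the amplitude of $f_z$, say replacing $B/4$ by $cB$ with $c$ tuned. The signal $\mathbb{E}_Q[R_T]$ scales like $c$, while the subtracted term scales like $c\cdot c$ (range $\propto c$, $\sqrt{\mathrm{KL}}\propto c$). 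So a smaller $c$ improves the ratio, at the cost of weakening the mean term. With $B\sqrt{\rho d_T} = \sigma\sqrt{d_T/T}$, the target $\frac{\sigma}{32}\sqrt{Td_T}$ equals $\frac{1}{32}BT\sqrt{\rho d_T}$, and I expect a choice such as $c=1/4$ to work with slightly tighter accounting. In particular, bounding the regret range using $\max_x f_z \le \frac{9}{8}\cdot\frac{B\sqrt{\rho d_T}}{4}$ and $\min_x f_z \ge -\frac{9}{8}\cdot\frac{B\sqrt{\rho d_T}}{4}$ gives range $\frac{9}{16}$, and I would also write the mean-regret lower bound as $\mathbb{E}_Q[\max_x f_Z(x)] - \mathbb{E}_Q[f_Z(X_t)]$ rather than using only the value at $z$. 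Otherwise I would pick $c$ to reach exactly $\sigma\sqrt{Td_T}/32$. Averaging over $Z\sim\nu_T$ and taking $\pi$ arbitrary then concludes the proof.
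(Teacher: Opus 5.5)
Your strategy matches the paper's proof: the same hard functions $f_z \propto (M_T+\rho I_E)^{-1}\phi(z)$ for $z\in\mathrm{supp}(\nu_T)$, \Cref{lem:off_net_bound} for off-net points, \cref{eq:fact_3_new} and \cref{eq:fact_4} for the KL and mean terms, and \Cref{lem:change_of_measure}. The gap is that you never fix the amplitude, and the value you expect to work does not. Take amplitude $cB\sqrt{\rho/d_T}$ and do the sharpest version of your bookkeeping:
\begin{itemize}
\item The range is $M=\frac{9c}{4}BT\sqrt{\rho d_T}$. Your $\frac{17}{32}$ at $c=\frac14$ is not a valid range, because $\max_x f_z(x)$ may be attained off the net and can only be bounded by $\frac98 f_z(z)$. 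You later correct this to $\frac{9}{16}$.
\item $D_{\mathrm{KL}}(Q\Vert P)\le \frac{81c^2}{128}$, so $\sqrt{D_{\mathrm{KL}}/2}\le \frac{9c}{16}$.
\item $\mathbb{E}_Q[R_T]\ge \frac{7c}{16}BT\sqrt{\rho d_T}$.
\end{itemize}
Using $BT\sqrt{\rho d_T}=\sigma\sqrt{Td_T}$, this gives
\[
\mathbb{E}_P[R_T(\pi,f_Z)] \ge \Bigl(\frac{7c}{16}-\frac{81c^2}{64}\Bigr)\sigma\sqrt{Td_T}.
\]
At $c=\frac14$ the bracket is $\frac{31}{1024}$, strictly below the required $\frac{1}{32}=\frac{32}{1024}$. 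So $c=\frac14$ with tighter accounting does not prove the lemma. Writing the mean regret via $\mathbb{E}_Q[\max_x f_Z(x)]$ does not help either, since you have no lower bound on the maximum better than $f_Z(Z)$.

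The fix is to commit to a smaller amplitude. The bracket is at least $\frac1{32}$ exactly when $81c^2-28c+2\le 0$, i.e. for $c\in\bigl[\frac{28-\sqrt{136}}{162},\frac{28+\sqrt{136}}{162}\bigr]\approx[0.101,0.245]$. The paper takes $c=\frac16$, which gives:
\begin{itemize}
\item range $\frac38 BT\sqrt{\rho d_T}$;
\item $D_{\mathrm{KL}}(Q\Vert P)\le\frac{9}{512}$, so $\sqrt{D_{\mathrm{KL}}/2}\le\frac{3}{32}$;
\item $\mathbb{E}_Q[R_T]\ge\frac{7}{96}BT\sqrt{\rho d_T}$.
\end{itemize}
Hence $\mathbb{E}_P[R_T]\ge\bigl(\frac{7}{96}-\frac38\cdot\frac{3}{32}\bigr)\sigma\sqrt{Td_T}=\frac{29}{768}\sigma\sqrt{Td_T}\ge\frac{\sigma}{32}\sqrt{Td_T}$. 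With that choice your argument is complete and coincides with the paper's.
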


\begin{proof}    
    The proof is similar to \Cref{lem:regret_high_ds} in structure, and the main differences are in numerical constants, so we omit details where appropriate.
    
    \textbf{Step 1: Construct hard functions.}
    For every $z \in \mathrm{supp}(\nu_T)$, define
    \begin{align}
        f_z := \frac{B}{6} \sqrt{\frac{\rho}{d_T}} (M_T + \rho I_E)^{-1} \phi(z)
        \in \mathcal{H}_k.
    \end{align}
    It follows that $\norm{f_z}_k^2 \leq \frac{B^2}{36} \leq B^2$, and hence $f_z \in \mathcal{F}_k(B)$.
    In addition, observe that
    \begin{align}
        f_z(x) = \frac{B}{6}\sqrt{\rho d_T} \langle u_z, u_x \rangle_k. \label{eq:fz_at_x}
    \end{align}
    Combining \Cref{lem:off_net_bound} and \cref{eq:fact_1,eq:fact_2} gives
    \begin{align}
        f_z(z) = \frac{B}{6} \sqrt{\rho d_T},
        \qquad
        \abs{f_z(x)} \leq \frac{3B}{16} \sqrt{\rho d_T}
        \qquad
        \forall x \in \mathcal{X}. \label{eq:f_values}
    \end{align}

    \textbf{Step 2: Change of measure.} 
    To use \Cref{lem:change_of_measure}, we first bound the KL divergence.
    Let $H_{t} := (X_1,Y_1,\ldots, X_{t},Y_{t}) $ be the history until time $t$. Fix a policy $\pi$. Let $P$ be the joint law on $(Z,H_T)$ obtained by first drawing $Z \sim \nu_T$, then running $\pi$ against $f_Z$. Let $Q$ be the joint law on $(Z,H_T)$ when the policy is instead run on the zero function, and $Z \sim \nu_T$ is drawn independently from the interaction.
    Then
    \begin{align}
        D_{\mathrm{KL}} (Q \Vert P)
        & = \frac{1}{2\sigma^2} \sum_{t=1}^T \mathbb{E}_Q [f_Z(X_t)^2] \label{eq:kl_chain_rule} \\
        & = \frac{1}{2\sigma^2} \sum_{t=1}^T \mathbb{E}_Q [\mathbb{E}_{Z}[f_Z(X_t)^2]] \label{eq:conditional_independence} \\
        & \leq \frac{T}{2\sigma^2} \cdot \frac{9B^2\rho}{256} \label{eq:bound_eqzfzxt} \\
        & = \frac{9}{512}, \label{eq:kl_bound_2}
    \end{align}
    where \cref{eq:kl_chain_rule} follows from the chain rule for KL divergence,
    \cref{eq:conditional_independence} follows because the random variables $Z$ and $X_t$ are independent under $Q$,
    \cref{eq:bound_eqzfzxt} is explained below, and \cref{eq:kl_bound_2} uses $\rho = \sigma^2/(B^2T)$.  To verify \cref{eq:bound_eqzfzxt}, observe that for every $x \in \mathcal{X}$, $\mathbb{E}_Z[f_Z(x)^2] = \frac{B^2\rho d_T}{36} \langle u_x, \mathbb{E}_Z[u_Z\otimes u_Z]u_x \rangle_k \leq \frac{B^2\rho}{36} \langle u_x, u_x \rangle_k \leq \frac{9B^2\rho}{256}$,
    where the first inequality follows from \cref{eq:fact_3_new}, and the second inequality follows from \Cref{lem:off_net_bound}.
   
    Next, we proceed to bound $\mathbb{E}_Q[R_T(\pi, f_Z)]$.
    For any $x \in \mathcal{X}$, we have
    \begin{align}
        \mathbb{E}_Z[f_Z(x)] =
        \frac{B\sqrt{\rho d_T}}{6} \langle \mathbb{E}_Z[u_Z], u_x \rangle_k
        \leq \frac{B\sqrt{\rho}}{6} \norm{u_x}_k
        \leq \frac{3B\sqrt{\rho}}{16}, \label{eq:EZ_fZ}
    \end{align}
    where the equality follows from \cref{eq:fz_at_x}, the first inequality follows from \cref{eq:fact_4} and Cauchy-Schwarz, and the second inequality follows from \Cref{lem:off_net_bound}.
    The independence of $Z$ and $X_t$ under $Q$ implies $\mathbb{E}_Q [ f_Z(X_t)] = \mathbb{E}_Q [\mathbb{E}_{Z}[f_Z(X_t)]]  $, which combined with \cref{eq:f_values,eq:EZ_fZ} and $d_T \geq 4$ gives
    \begin{align}
        \mathbb{E}_Q[R_T(\pi, f_Z)] 
        = \sum_{t=1}^T \mathbb{E}_Q \bigl[\max_{x}f_Z(x) - f_Z(X_t) \bigr] 
        \geq BT \biggl( \frac{\sqrt{\rho d_T}}{6} - \frac{3\sqrt{\rho}}{16} \biggr)
        \geq \frac{7TB\sqrt{\rho d_T}}{96}. \label{eq:EQ_RT}
    \end{align}
    Finally, \cref{eq:f_values} implies $0 \leq R_T(\pi, f_Z) \leq \frac{3BT\sqrt{\rho d_T}}{8} $.
    Applying \Cref{lem:change_of_measure} with $H = R_T(\pi, f_Z)$, $M = \frac{3BT\sqrt{\rho d_T}}{8}$, and \cref{eq:EQ_RT,eq:kl_bound_2} gives
    \begin{align}
        \mathbb{E}_P[R_T(\pi, f_Z)]
        \geq \frac{7TB\sqrt{\rho d_T}}{96} -  \frac{3BT\sqrt{\rho d_T}}{8} \cdot \frac{3}{32}
        = \frac{29\sigma}{32\cdot24} \sqrt{Td_T}
        \geq \frac{\sigma}{32} \sqrt{T d_T},
    \end{align}
    which proves \Cref{lem:regret_high_dT_compact}
    since $\sup_{f\in\mathcal{F}_k(B)}\mathbb{E}[R_T(\pi, f)] \geq \mathbb{E}_{Z\sim \nu_T}\mathbb{E}[R_T(\pi, f_Z)]$ and the policy $\pi$ is arbitrary.
\end{proof}

\begin{lemma}
    \label{lem:d_T_bound_compact}
    Under the setup of \Cref{sec:setup_compact}, 
    \begin{align}
        \frac{2\gamma_T(\mathcal{X})}{1+\log(1+\rho^{-1})}
        \leq d_T(\tilde{\mathcal{X}})
        \leq \rho^{-1},
    \end{align}
    where $\rho = \sigma^2 / (B^2 T)$.
\end{lemma}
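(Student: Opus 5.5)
Lemma~\ref{lem:d_T_bound_compact} follows by reducing to the finite-domain result, Lemma~\ref{lem:d_T_bounds}, applied on $\tilde{\mathcal{X}}$. The quantities $d_T(\tilde{\mathcal{X}})$, $M_T$, $\nu_T$ and $E$ are all defined with $\mathcal{X}$ replaced by $\tilde{\mathcal{X}}$. The set $\tilde{\mathcal{X}}$ is nonempty and finite, and the restriction of $k$ to $\tilde{\mathcal{X}}\times\tilde{\mathcal{X}}$ is a continuous positive semidefinite kernel with $\sup_{x\in\tilde{\mathcal{X}}}k(x,x)\le 1$. Lemma~\ref{lem:d_T_bounds} explicitly allows the kernel to be constant, so we never need non-constancy on $\tilde{\mathcal{X}}$. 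Its hypotheses therefore hold for $(\tilde{\mathcal{X}}, k|_{\tilde{\mathcal{X}}})$, and it yields
\begin{align*}
\frac{2\gamma_T(\tilde{\mathcal{X}})}{1+\log(1+\rho^{-1})} \le d_T(\tilde{\mathcal{X}}) \le \rho^{-1},
\end{align*}
with the same $\rho=\sigma^2/(B^2T)$, since $\rho$ does not depend on the domain.

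It remains to replace $\gamma_T(\tilde{\mathcal{X}})$ by $\gamma_T(\mathcal{X})$. We use the equality $\gamma_T(\tilde{\mathcal{X}})=\gamma_T(\mathcal{X})$ guaranteed by Proposition~\ref{prop:finite_approximation}, which was arranged by including a maximizing tuple in $\tilde{\mathcal{X}}$.

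One consistency check is needed: the RKHS used in the finite setup on $\tilde{\mathcal{X}}$ should be identified with the subspace $E=\mathrm{span}\{\phi(x):x\in\tilde{\mathcal{X}}\}\subseteq\mathcal{H}_k$. The restriction kernel's RKHS is isometric to $E$ via $k(x,\cdot)|_{\tilde{\mathcal{X}}}\leftrightarrow\phi(x)$. This works because inner products of feature vectors are given by kernel values in both spaces, so $M_\nu$, the determinants and the traces coincide. Alternatively, the proof of Lemma~\ref{lem:d_T_bounds} only ever works inside $E$ (through Lemma~\ref{lem:same_eigenvalues}, Lemma~\ref{lem:trace_of_M_T}, and the definition of $M_T$ as a $D$-optimal design over $\Delta(\tilde{\mathcal{X}})$), so it can be rerun verbatim with $E\subseteq\mathcal{H}_k$. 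This identification is the only mildly delicate point, and it is bookkeeping rather than a real obstacle.
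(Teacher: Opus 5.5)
Your proof is correct and matches the paper's: it applies \Cref{lem:d_T_bounds} to the finite set $\tilde{\mathcal{X}}$, which is permitted even if $k$ is constant there, and then substitutes $\gamma_T(\tilde{\mathcal{X}})=\gamma_T(\mathcal{X})$ from \Cref{prop:finite_approximation}. Your remark identifying the restricted kernel's RKHS with $E\subseteq\mathcal{H}_k$ is harmless bookkeeping that the paper leaves implicit.
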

\begin{proof}
    Applying \Cref{lem:d_T_bounds} to $\tilde{\mathcal{X}}$ ($k$ may be constant on $\tilde{\mathcal{X}} \times \tilde{\mathcal{X}}$, but that is allowed in \Cref{lem:d_T_bounds}) proves $\frac{2\gamma_T(\tilde{\mathcal{X}})}{1+\log(1+\rho^{-1})} \leq d_T \leq \rho^{-1}$, which combined with $\gamma_T(\tilde{\mathcal{X}}) = \gamma_T(\mathcal{X})$ from \Cref{prop:finite_approximation} proves the result.
\end{proof}

\paragraph{Completion of the Proof of \Cref{thm:lower_bound_compact_X_d_T} and \Cref{cor:lower_bound_copmact_X_gamma_T}}

\begin{proof}[Proof of \Cref{thm:lower_bound_compact_X_d_T}]
    We again split into two cases for $d_T$.

    \textbf{Case 1:} $d_T \geq 4$.
    In this case, \Cref{lem:regret_high_dT_compact} directly shows that $R^*_T \geq \frac{\sigma}{32} \sqrt{Td_T}$.

    \textbf{Case 2:} $d_T < 4$. Under the setup in \Cref{sec:setup_compact},
    $\Delta_k:= \max_{x,x'\in\mathcal{X}} \norm{\phi(x) - \phi(x')}_k $ and the $x,x'$ attaining $\Delta_k$ are justified (see \Cref{lem:Delta_k_make_sense}), such that \Cref{lem:regret_2} holds for compact $\mathcal{X}$.
    Then $d_T < 4$ implies $\sigma\sqrt{T} \geq \frac{\sigma}{2} \sqrt{Td_T}$, and the upper bound in \Cref{lem:d_T_bound_compact} implies $BT \geq \sigma\sqrt{Td_T}$, such that $2BT \geq \frac{1}{2}\sigma\sqrt{Td_T}$. Then, applying \Cref{lem:regret_2} gives $R^*_T \geq \frac{\Delta_k\sigma}{16}\sqrt{Td_T}$.
\end{proof}

\begin{proof}[Proof of \Cref{cor:lower_bound_copmact_X_gamma_T}]
    Combining \Cref{thm:lower_bound_compact_X_d_T} and \Cref{lem:d_T_bound_compact} proves \Cref{cor:lower_bound_copmact_X_gamma_T}.
\end{proof}

\subsection{High-Probability Lower Bound}
\label{app:high_prob_lower_bound}

While we have focused on the average regret, we can readily adapt our arguments to obtain a high-probability lower bound, which is stated as follows. 

\begin{corollary}
    \label{cor:high_prob_lower_bound}
    Consider the setup of \Cref{sec:setup_compact} where $\mathcal{X}$ is compact. For every randomized policy $\pi$, there exists a fixed $f \in \mathcal{F}_k(B)$, such that with probability at least $1/4$,
    \begin{align}
        R_T(\pi, f) 
        \geq \min \{1, 2\Delta_k \} \cdot \frac{\sqrt{2}\sigma}{32} \sqrt{\frac{T\gamma_T(\mathcal{X})}{1+\log(1+B^2T/\sigma^2)}},
    \end{align}
    where $\Delta_k := \max_{x,x'\in \mathcal{X}} \norm{\phi(x) - \phi(x')}_{k} $.
\end{corollary}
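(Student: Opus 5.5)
We derive the high-probability statement from the in-expectation arguments already in hand, namely \Cref{lem:regret_high_dT_compact}, \Cref{lem:regret_2} and \Cref{lem:d_T_bound_compact}, by converting them with a reverse-Markov argument. The tool is elementary. If $0 \le H \le M$ almost surely and $\mathbb{E}[H] \ge \mu$, then for any $c \in (0,1)$ we have $\mathbb{E}[H] \le c\mu + M\,\Pr(H \ge c\mu)$. Hence
\[
\Pr(H \ge c\mu) \ge (1-c)\mu/M .
\]
The constants in the prior-averaged lower bounds must therefore be tracked, not only the final regret value.

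\textbf{Case $d_T \ge 4$.} Fix $\pi$ and run the construction of \Cref{lem:regret_high_dT_compact}, drawing $Z\sim\nu_T$ and running $\pi$ on $f_Z$.
\begin{itemize}
\item We have $\mathbb{E}_P[R_T] \ge \frac{29}{768}\,TB\sqrt{\rho d_T}$ and $0 \le R_T \le M = \frac{3}{8}TB\sqrt{\rho d_T}$, so $\mu/M = 29/288 \approx 0.1$. This ratio is too small to reach probability $1/4$ directly.
\item The first fix is to bound the event probability via total variation rather than bounding expectations. Let $A$ be the event $\{R_T(\pi,f_Z) \ge \tau\}$ with $\tau = \frac{\sigma}{32}\sqrt{Td_T}$ up to the needed factor $\sqrt2$ slack. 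Then $P(A) \ge Q(A) - \sqrt{D_{\mathrm{KL}}(Q\Vert P)/2} \ge Q(A) - 3/32$.
\item Under $Q$, the instantaneous regret has conditional mean at least $\frac{7}{96}B\sqrt{\rho d_T}$ and is bounded by $\frac{3}{8}B\sqrt{\rho d_T}$. Applying reverse Markov to $R_T$ under $Q$ still gives only a ratio of order $0.2$.
\item We would therefore instead rescale the construction: shrink the amplitude $B/6$ to make the KL smaller, and use the slack from $d_T$ large. We try replacing $\mathbb{E}_Q$ by the pointwise statement that under $Q$, $\sum_t(\max f_Z - f_Z(X_t)) \ge T(\frac{B}{6}\sqrt{\rho d_T} - \max_x|{\textstyle\sum}\ldots|)$. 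Because $X_t$ is independent of $Z$, Chebyshev over $Z$ with the second-moment bound \cref{eq:fact_3_new} shows that $\frac1T\sum_t f_Z(X_t)$ is small with high $Q$-probability. Here $\mathbb{E}_Z[(\frac1T\sum_t f_Z(x_t))^2] \le \frac{9B^2\rho}{256}$, which is of order $\rho$, while the target is of order $\rho d_T$.
\item This gives $Q(A)$ close to $1 - O(1/d_T)$ once $d_T$ exceeds a constant. Choosing the threshold as half of $\frac{B}{6}T\sqrt{\rho d_T}$ yields $Q(A) \ge 1 - O(1/d_T)$, and hence $P(A) \ge 1/4$ after subtracting $3/32$. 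If needed, the case threshold $4$ is raised and the gap absorbed into the two-point case.
\item Since $P(A) = \mathbb{E}_Z[\Pr(A\mid Z)]$, some fixed $z$ attains $\Pr(A \mid Z=z) \ge 1/4$, so $f = f_z$ works.
\end{itemize}

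\textbf{Case $d_T < 4$.} We use the two-point family of \Cref{lem:regret_2}. Under $Q$, for each $t$, $V$ is independent of $X_t$ and the two signs give regrets summing to $a\Delta_k$. Hence $R_T(f_{+}) + R_T(f_{-}) = aT\Delta_k$ pathwise under $Q$, and so $Q(R_T(f_V) \ge aT\Delta_k/2) \ge 1/2$. Pinsker with $D_{\mathrm{KL}} \le 1/8$ gives $P(\cdot) \ge 1/2 - 1/4 = 1/4$. Again a fixed $v$ achieves at least $1/4$. The threshold $aT\Delta_k/2$ dominates $\frac{\Delta_k\sigma}{16}\sqrt{Td_T}$ by the computation in the proof of \Cref{thm:lower_bound_compact_X_d_T}.

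\textbf{Combining and main obstacle.} Finally, we combine both cases with \Cref{lem:d_T_bound_compact} to pass from $d_T$ to $\gamma_T(\mathcal{X})/(1+\log(1+\rho^{-1}))$, which gives the stated constant. The main obstacle is the first case: reaching probability $1/4$ with the stated constant $\sigma/32$. Plain reverse Markov loses too much, so the concentration-over-$Z$ step is essential, and the numerical constants of the hard functions may need retuning.
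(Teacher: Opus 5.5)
Your final argument is correct and is essentially the paper's proof. For $d_T\ge 4$ it bounds the second moment of $\sum_t f_Z(X_t)$ over $Z$ conditional on $X_{1:T}$ under $Q$ (via \cref{eq:fact_3_new} and \Cref{lem:off_net_bound}), applies Markov to the square, transfers the indicator event with the $3/32$ Pinsker loss and fixes a good $z$; the small-$d_T$ case uses the pathwise identity $R_+ + R_- = aT\Delta_k$, and the reverse-Markov and amplitude-rescaling detours are unnecessary. The one difference is the threshold: the paper requires $\sum_t f_Z(X_t)\le 13S/96$ with $S=\sigma\sqrt{Td_T}$ (leaving regret $S/32$), which gives $Q(A)\ge 88/169$ already at $d_T\ge 4$, whereas your ``half'' threshold gives $1-\frac{81}{16d_T}$ and so needs the split moved to about $d_T\ge 8$, which the two-point case absorbs since it yields $\frac{\sigma\Delta_k}{16}\sqrt{Td_T}$ for all $d_T<16$.
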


\begin{proof}
    We will first prove a high-probability lower bound expressed using $d_T = d_T(\tilde{\mathcal{X}})$, then transfer to $\gamma_T(\mathcal{X})$ using \Cref{lem:d_T_bound_compact}.
    We consider two cases.

    \textbf{Case 1: $d_T \geq 4$.}
    We follow the setup in \Cref{lem:regret_high_dT_compact}: we consider the hard functions $\{f_z\}$, where $z \in \mathrm{supp}(\nu_T)$. We fix some policy $\pi$, and consider probability measures $P$, $Q$, which are joint laws on $(Z,H_T)$ where $Z \sim \nu_T$, and $H_T$ is the history until time $T$ obtained by running $\pi$ against $f_Z$. 
    Specifically, $P$ is obtained by drawing $Z\sim \nu_T$ and running $\pi$ against $f_Z$, and $Q$ is obtained by drawing $Z$ independently and running $\pi$ against the zero function.
    For convenience we define $S:= \sqrt{\sigma^2 T d_T}$.

    By \cref{eq:f_values}, $f_Z(Z) = \frac{B}{6} \sqrt{\rho d_T}$, which yields
    \begin{align}
        R_T(\pi, f_Z)
        \geq \frac{BT}{6} \sqrt{\rho d_T} - \sum_{t=1}^T f_Z(X_t)
        = \frac{S}{6} - \sum_{t=1}^T f_Z(X_t). \label{eq:R_T_in_S}
    \end{align}
    Moreover, by \cref{eq:fz_at_x}, $f_z(x) = \frac{B}{6}\sqrt{\rho d_T} \langle u_z, u_x \rangle_k$, which gives $\sum_{t=1}^T f_Z(X_t) = \frac{B}{6}\sqrt{\rho d_T} \langle u_Z, \sum_{t=1}^T u_{X_t} \rangle_k $.
    We then observe that
    \begin{align}
        \mathbb{E}_Q \biggl[ \biggl(\sum_{t=1}^T f_Z(X_t)\biggr)^2 \,\bigg\vert\, X_{1:T} \biggr]
        & = \frac{B^2\rho d_T}{36} \mathbb{E}_Q \biggl[  \biggl\langle \sum_{t=1}^T u_{X_t}, u_Z\otimes u_Z \sum_{t=1}^T u_{X_t}\biggr \rangle_k \,\bigg\vert\, X_{1:T} \biggr] \\
        & = \frac{B^2\rho d_T}{36} \biggl\langle \sum_{t=1}^T u_{X_t}, \mathbb{E}[u_Z\otimes u_Z] \sum_{t=1}^T u_{X_t}\biggr \rangle_k, \label{eq:use_conditional_independence} \\
        & \leq \frac{B^2\rho}{36} \biggl\lVert \sum_{t=1}^T u_{X_t} \biggr \rVert_k^2, \label{eq:use_fact_new} \\
        & \leq \frac{B^2\rho}{36}\cdot T \sum_{t=1}^T \norm{u_{X_t}}_k^2 \label{eq:use_cauchy} \\
        & \leq \frac{9\sigma^2 T}{256} \label{eq:use_off_net_bound} \\
        & \leq \frac{9S^2}{1024}, \label{eq:use_def_S}
    \end{align}
    where \cref{eq:use_conditional_independence} follows because $Z$ is independent of the zero-function interaction under $Q$, \cref{eq:use_fact_new} follows from \cref{eq:fact_3_new}, 
    \cref{eq:use_cauchy} applies Cauchy-Schwarz,
    \cref{eq:use_off_net_bound} follows from combining \Cref{lem:off_net_bound} and $\rho = \sigma^2/(B^2T)$, and \cref{eq:use_def_S} applies the definition of $S$ and $d_T \geq 4$.

    We now combine the above findings to lower bound the regret.  
    First observe that if $\sum_{t=1}^T f_Z(X_t) \leq \frac{13S}{96}$, then \cref{eq:R_T_in_S} gives $R_T(\pi, f_Z) \geq \frac{S}{32}$. 
    Moreover, applying Markov's inequality to $\bigl(\sum_{t=1}^T f_Z(X_t)\bigr)^2$ gives
    \begin{align}
        \mathbb{P}_Q(R_T(\pi, f_Z) \geq S / 32 )
        \geq \mathbb{P}_Q\biggl(\sum_{t=1}^T f_Z(X_t) \leq \frac{13S}{96}\biggr)
        \geq 1 - \mathbb{P}_Q\biggl(\biggl(\sum_{t=1}^T f_Z(X_t)\biggr)^2 \geq \biggl(\frac{13S}{96} \biggr)^2 \biggr)
        \geq 1 - \frac{81}{169}
        = \frac{88}{169}.
    \end{align}
    Recall that \cref{eq:kl_bound_2} gives $D_{\mathrm{KL}}(Q \Vert P) \leq \frac{9}{512}$.
    Then, applying \Cref{lem:change_of_measure} with $H = \mathbb{I}_{\{R_T(\pi, f_Z) \geq S/32\}}$ and $M = 1$ gives
    \begin{align}
        \mathbb{P}_P(R_T(\pi, f_Z) \geq S/32)
        \geq \mathbb{P}_Q(R_T(\pi, f_Z) \geq S/32) - \sqrt{\frac{1}{2}D_{\mathrm{KL}}(Q \Vert P)}
        \geq \frac{88}{169} - \frac{3}{32}
        = \frac{2309}{5408}
        \geq \frac{1}{4}.
    \end{align}
    Then, since $S/32 = \sqrt{\sigma^2 T d_T} / 32 \geq \min \{1, 2\Delta_k \} \cdot \frac{\sigma}{32} \sqrt{Td_T}$, we have
    \begin{align}
        P\biggl(R_T(\pi, f_Z) \geq \min \{1, 2\Delta_k \} \cdot \frac{\sigma}{32} \sqrt{Td_T}\biggr)
        \geq \frac{1}{4}.
    \end{align}
    This holds for $Z\sim \nu_T$, and it follows that there exists some fixed $z \in \mathrm{supp}(\nu_T)$ such that $R_T(\pi, f_z) \geq \min \{1, 2\Delta_k \} \cdot \frac{\sigma}{32} \sqrt{Td_T}$ with probability at least $1/4$.

    \textbf{Case 2: $0 < d_T < 4$.}
    Here we follow the setup in \Cref{lem:regret_2}. We define $a := \min\{B, \sigma/(2\sqrt{T}) \} $, and consider the hard functions $f_v := va \cdot g$, where $v \in \{-1,+1\}$ and $g$ is defined in \cref{eq:function_g}. We again fix some policy $\pi$, and consider probability measures $P$, $Q$, which are joint laws on $(V,H_T)$ where $V$ is uniform on $\{-1,+1\}$, and $H_T$ is the history until time $T$ obtained by running $\pi$ against $f_V$. Specifically, $P$ is obtained by drawing $V$ and running $\pi$ against $f_V$, and $Q$ is obtained by drawing $V$ independently and running $\pi$ against the zero function.
    We define $R_{+} := \sum_{t=1}^T (\max_{x\in\mathcal{X}} f_{+1}(x) - f_{+1}(x_t) ) $ and $R_{-} := \sum_{t=1}^T (\max_{x\in\mathcal{X}} f_{-1}(x) - f_{-1}(x_t) ) $, and consider any fixed realization $x_1,\ldots, x_T \in \mathcal{X}$. 
    
    Recall that $\Delta_k := \max_{x,x' \in \mathcal{X}} \norm{\phi(x) - \phi(x')}_k $ and the proof of \Cref{lem:regret_2} gives $\max_{x\in\mathcal{X}} g(x) - \min_{x\in\mathcal{X}} g(x) = \Delta_k $. We observe that
    \begin{align}
        R_{+} + R_{-}
        = a \sum_{t=1}^T \biggl( \max_{x\in\mathcal{X}}g(x) - g(x_t) + g(x_t) - \min_{x\in\mathcal{X}}g(x) \biggr)
        = a T \Delta_k,
    \end{align}
    which implies that at least one of $R_{+}$, $R_{-}$ is at least $aT\Delta_k/2$.  Hence,
    \begin{align}
        \mathbb{P}_Q(R_T(\pi, f_V) \geq aT\Delta_k/2 \vert X_{1:T} )
        \geq \frac{1}{2}.
    \end{align}
    Recall that \cref{eq:kl_one_eigths} gives $D_{\mathrm{KL}}(Q \Vert P) \leq \frac{1}{8}$.
    Then, applying \Cref{lem:change_of_measure} with $H = \mathbb{I}_{\{R_T(\pi, f_V) \geq aT\Delta_k/2\}}$ and $M = 1$ gives
    \begin{align}
        \mathbb{P}_P(R_T(\pi, f_V) \geq aT\Delta_k/2 )
        \geq \frac{1}{2} - \sqrt{\frac{1}{2}\cdot\frac{1}{8}}
        = \frac{1}{4}.
    \end{align}
    It remains to show that $aT\Delta_k/2 \geq \min \{1, 2\Delta_k \} \cdot \frac{\sigma}{32} \sqrt{Td_T }$.
    Since $d_T < 4$, we have $\sigma\sqrt{T}/2 \geq \frac{\sigma}{4}\sqrt{Td_T}$. Also, \Cref{lem:d_T_bounds} gives $d_T \leq \rho^{-1} = \frac{B^2T}{\sigma^2}$, such that $BT \geq \sigma\sqrt{Td_T}$. Combining gives $aT \geq \frac{\sigma}{4}\sqrt{Td_T}$, such that
    \begin{align}
        aT\Delta_k/2 
        \geq \frac{\sigma\Delta_k}{8}\sqrt{Td_T}
        \geq \min \{1, 2\Delta_k \} \cdot \frac{\sigma}{32} \sqrt{Td_T}.
    \end{align}
    Since $V$ is uniform on $\{-1,+1\}$, there exists some fixed $v \in \{-1,+1\} $ such that $R_T(\pi, f_v) \geq \min \{1, 2\Delta_k \} \cdot \frac{\sigma}{32} \sqrt{Td_T}$ with probability at least $1/4$.

    Finally, combining the two cases proves that for every policy $\pi$, there exists $f\in\mathcal{F}_k(B)$, such that with probability at least $1/4$,
    \begin{align}
        R_T(\pi, f)
        \geq \min \{1, 2\Delta_k \} \cdot \frac{\sigma}{32} \sqrt{Td_T},
    \end{align}
    which combined with \Cref{lem:d_T_bound_compact} proves the result.
\end{proof}

\section{Omitted Proofs from Section \ref{sec:discussion} }
\label{app:proofs_discussion}

\subsection{Deriving Sharp Rates for $\gamma_T$}
\label{app:derive_gamma_T_rates}

We recall that $\mathcal{X} = [0,1]^d$.
For kernels from three spectral regimes,
\citep{Janz26} presents sharp rates for the maximum information gain, which they define as
\begin{align}
    \Gamma^k_T(\mathcal{X}; \eta)
    := \sup_{x_{1:T} \in \mathcal{X}^T} \log \det (I_T + \eta^{-1} K_T(x_{1:T})), \label{eq:janz_maximum_information_gain}
\end{align}
where $1 \leq \eta \leq T$ is fixed. 
Recall from \cref{eq:gamma_T,eq:gamma_T_extended} that our maximum information gain is defined as $\gamma_T= \gamma_T(\mathcal{X}) =  \frac{1}{2} \max_{x_{1:T} \in \mathcal{X}^T} \log\det (I_T + \lambda^{-1} K_T(x_{1:T}) )$.
It follows that $\gamma_T = \frac{1}{2} \Gamma^k_T(\mathcal{X}; \lambda) $. It is possible that $0 < \lambda < 1$. In this case, we observe that $\log ( 1 + u) \leq \log (1 + u/\lambda) \leq \lambda^{-1} \log(1 + u) $ for every $u \geq 0$. 
Applying these inequalities to \cref{eq:janz_maximum_information_gain} gives $\Gamma^k_T(\mathcal{X}; 1) \leq \Gamma^k_T(\mathcal{X}; \lambda) \leq \lambda^{-1} \Gamma^k_T(\mathcal{X}; 1)$. Consequently, we can directly apply the asymptotic bounds of $\Gamma^k_T(\mathcal{X}; \eta)$ in \citep{Janz26}.

We now show how to get each entry in the second column of \Cref{tab:gamma_T_rates}.
For the Mat\'ern and squared exponential kernels, their rates directly come from \citep[Table 2]{Janz26}. 
For the $\gamma$-exponential and piecewise-polynomial kernels, \citep[Proposition 1]{Lee25} shows their Fourier transforms have polynomial decay, which combined with the second row of \citep[Table 1]{Janz26} gives their respective rates.
For the Dirichlet kernel, the upper bound is from \citep[Table 2]{Lee25}, and \Cref{lem:gamma_T_log_T_lower_bound} gives $\Omega(\log T)$.

For the rational quadratic kernel, some extra effort is needed, and we provide the following proposition. 

\begin{proposition}
    \label{prop:rq_gamma_T}
    Consider the setup of \Cref{sec:setup_compact} with $\mathcal{X} = [0,1]^d$. For the rational quadratic kernel, we have
    \begin{align}
        \gamma_T = \Theta((\log T)^{d+1}).
    \end{align}
\end{proposition}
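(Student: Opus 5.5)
The plan is to derive both directions from the spectral-regime results of \citep{Janz26}, after reducing $\gamma_T$ to $\Gamma^k_T(\mathcal{X};1)$ as described above; that reduction costs only constant factors. The rational quadratic kernel is a scale mixture of squared exponential kernels: writing $s = 1/\ell'^2$,
\begin{align*}
    k_{\mathrm{RQ},a}(r) = \int_0^\infty \exp\bigl(-s r^2/2\bigr)\, p_a(s)\, ds,
\end{align*}
where $p_a$ is a Gamma density with shape $a$ and rate $a\ell^2$. The Fourier transform is then a Gamma-mixture of Gaussians. A Laplace-type evaluation of this integral should give a spectral density that behaves like $\exp(-c\norm{\omega}_2)$ up to polynomial factors in $\norm{\omega}_2$. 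The key point is that the decay is exponential in $\norm{\omega}_2$, not Gaussian: the large-$\norm{\omega}_2$ tail is dominated by mixture components with large lengthscale, near $s \asymp \ell^{-2}\cdot$const. Concretely, $\int s^{-d/2} e^{-\norm{\omega}^2/(2s)} s^{a-1} e^{-a\ell^2 s}\,ds$ is a Bessel-$K$ function, which equals $\norm{\omega}^{\beta} K_\beta(c\norm{\omega})$ with asymptotics $\asymp \norm{\omega}^{\beta-1/2}e^{-c\norm{\omega}}$.

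\textbf{Upper bound.} With the exponential-decay spectral density in hand, I would invoke the row of \citep[Table 1]{Janz26} for kernels with (sub)exponential Fourier decay $\exp(-c\norm{\omega}^{\beta})$ with $\beta = 1$. I expect that row to give $\Gamma_T = O((\log T)^{d+1})$. The heuristic is that the effective number of features at scale $1/T$ counts frequencies with $\norm{\omega} \lesssim \log T$, which is about $(\log T)^d$ of them, each contributing a log factor. If the table is stated only for a one-sided decay bound, the polynomial prefactor can be absorbed by slightly decreasing $c$, which does not change the rate.

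\textbf{Lower bound.} I would use the matching lower-bound direction of the same \citep{Janz26} result, which requires a lower bound on the spectral density of the form $\gtrsim \exp(-C\norm{\omega})$ on all of $\mathbb{R}^d$. The Bessel-$K$ asymptotics give this for large $\norm{\omega}$. For bounded $\norm{\omega}$, positivity and continuity of the density suffice.

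The main obstacle is checking the exact hypotheses of \citep{Janz26}: whether it needs two-sided bounds on the spectral density, isotropy, or a specific form such as $\exp(-c\norm{\omega}^{\beta})$ without polynomial factors. Both the $\Theta$ claim and the absence of any $\log\log T$ correction (unlike the SE kernel) hinge on placing RQ in the $\beta = 1$ exponential regime rather than the Gaussian one. I would first carry out the Bessel asymptotics carefully to obtain the two-sided bound $c_1 \exp(-c\norm{\omega}) \leq \hat{k}(\omega) \leq c_2 \exp(-c'\norm{\omega})$. If the cited theorem needs matching exponents, I would then try a comparison argument: sandwich the RQ kernel's eigenvalues between those of two exponential-spectrum kernels using a Loewner ordering of the corresponding integral operators.
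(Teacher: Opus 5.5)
Your proposal is correct and follows essentially the same route as the paper. Both write $k_{\mathrm{RQ},a}$ as a Gamma mixture of Gaussians and evaluate the spectral density as a Bessel-$K$ function to get $s(\xi)=\Theta(\|\xi\|_2^{a-(d+1)/2}e^{-b\|\xi\|_2})$. Both then apply the exponential-regime ($\gamma=1$) results of \citep{Janz26}, and only one-sided bounds are needed. For the upper bound, the paper checks the moment condition $\int e^{b\|\xi\|_2/4}s(\xi)\,d\xi<\infty$, which feeds Proposition~12 and Corollary~3(2). For the lower bound, it proves a pointwise bound $s(\xi)\ge c_0e^{-2b\|\xi\|_2}$ and applies Theorem~20. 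It handles small $\|\xi\|_2$ by restricting the $t$-integral to $[1,2]$. So matching exponents and your Loewner-comparison fallback are never needed.

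One minor correction to your heuristic, which plays no role in the argument. The saddle point of the mixture integral is at $s\asymp\|\omega\|_2/\ell$, so the high-frequency tail comes from increasingly \emph{short} lengthscales, not long ones.
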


\begin{proof}
    For $x,x'\in \mathcal{X}$, define $h := x-x'$. 
    Following \citep{Janz26}, we define $s(\xi)$ to be the spectral density related to the rational quadratic kernel, such that
    \begin{align}
        k_{\mathrm{RQ}, a}(h) = \int_{\mathbb{R}^d} \exp(2\pi i \xi\cdot h) s(\xi) \ d\xi. \label{eq:spectral_density}
    \end{align}
    
    \textbf{Step 1: Deriving the rate of $s(\xi)$.} This step is similar to \citep[pp.~21--22]{Lee25}, but extends to $d$ dimensions.
    For $u > 0$, the Gamma function satisfies $\int_{0}^{\infty} t^{a-1} \exp(-ut) \ dt = u^{-a} \Gamma(a)$. Then
    \begin{align}
        k_{\mathrm{RQ},a}(h) 
        = \biggl( 1 + \frac{\norm{h}_2^2}{2a\ell^2} \biggr)^{-a} 
        = \frac{1}{\Gamma(a)} \int_{0}^{\infty} t^{a-1}\exp(-t) \exp \biggl( - \frac{t \norm{h}_2^2}{2a\ell^2} \biggr) \ dt. \label{eq:rq_to_gaussian}
    \end{align}
    In $d$ dimensions, the Fourier transform of a Gaussian is again a Gaussian (e.g., see \cite[eq. (4.6) and discussion after eq. (4.9)]{Rasmussen05}):
    \begin{align}
        \int_{\mathbb{R}^d} \exp \biggl( - \frac{t \norm{h}_2^2}{2a\ell^2} \biggr) \exp(-2\pi i \xi \cdot h) \ dh
        = \biggl(\frac{2\pi a\ell^2}{t}\biggr)^{d/2} \exp\biggl(- \frac{2 \pi^2 a\ell^2 \norm{\xi}^2_2}{t} \biggr),
    \end{align}
    where the right-hand side is the spectral density of the Gaussian in \cref{eq:rq_to_gaussian}, which we denote as $q_t(\xi)$. Then
    \begin{align}
        \exp \biggl( - \frac{t \norm{h}_2^2}{2a\ell^2} \biggr) 
        = \int_{\mathbb{R}^d} \exp(2\pi i \xi \cdot h) q_t(\xi) \ d\xi. \label{eq:gaussian_spectral_density}
    \end{align}
    In particular, setting $h = 0$ gives $\int_{\mathbb{R}^d} q_t(\xi) \ d\xi = 1 $ and $q_t(\xi) \geq 0$.
    Substituting \cref{eq:gaussian_spectral_density} into \cref{eq:rq_to_gaussian} gives
    \begin{align}
        k_{\mathrm{RQ},a}(h) 
        = \frac{1}{\Gamma(a)} \int_{0}^{\infty} t^{a-1}\exp(-t) \biggl[ \int_{\mathbb{R}^d} \exp(2\pi i \xi \cdot h) q_t(\xi) \ d\xi \biggr] \ dt.
    \end{align}
    Since $\int_{\mathbb{R}^d} q_t(\xi) \ d\xi = 1 $, the absolute integral is finite:
    \begin{align}
         \frac{1}{\Gamma(a)} \int_{0}^{\infty} \int_{\mathbb{R}^d}  t^{a-1}\exp(-t) \abs{\exp(2\pi i \xi \cdot h)} q_t(\xi) \ d\xi \ dt
         = \frac{1}{\Gamma(a)} \int_{0}^{\infty} t^{a-1}\exp(-t) \ dt
         = 1.
    \end{align}
    Then we can use Fubini's theorem to exchange the integrals and get
    \begin{align}
        k_{\mathrm{RQ},a}(h) 
        =  \int_{\mathbb{R}^d} \exp(2\pi i \xi \cdot h) \biggl[\frac{1}{\Gamma(a)} \int_{0}^{\infty} t^{a-1}\exp(-t)  q_t(\xi) \ dt \biggr]  \ d\xi,
    \end{align}
    such that the spectral density of the rational quadratic kernel is
    \begin{align}
        s(\xi) = \frac{1}{\Gamma(a)} \int_{0}^{\infty} t^{a-1}\exp(-t)  q_t(\xi) \ dt.
    \end{align}
    Substituting $q_t(\xi)$ to $s(\xi)$ gives
    \begin{align}
        s(\xi) 
        & = \frac{(2\pi a \ell^2)^{d/2}}{\Gamma(a)} \int_{0}^{\infty} t^{a-1-d/2} \exp\biggl(- t- \frac{2 \pi^2 a\ell^2 \norm{\xi}^2_2}{t} \biggr) \ dt \label{eq:s_xi} \\
        & = 2 \cdot \frac{(2\pi a \ell^2)^{d/2}}{\Gamma(a)} (z/2)^{a-d/2} K_{a-d/2}(z),
    \end{align}
    where the second equality changes variable by $z := 2\pi \sqrt{2a} \ell \norm{\xi}_2 $ and applies \cite[\href{https://dlmf.nist.gov/10.32.E10}{(10.32.10)}]{DLMF} and \cite[\href{https://dlmf.nist.gov/10.27.E3}{(10.27.3)}]{DLMF}. Applying \cite[\href{https://dlmf.nist.gov/10.40.E2}{(10.40.2)}]{DLMF} and expanding only the first term with $k = 0$ (which further applies \cite[\href{https://dlmf.nist.gov/2.1.E13}{(2.1.13)}]{DLMF}, \cite[\href{https://dlmf.nist.gov/2.1.E14}{(2.1.14)}]{DLMF}, and \cite[\href{https://dlmf.nist.gov/10.17.i}{(10.17(i))}]{DLMF}) gives
    \begin{align}
        s(\xi) 
        & = 2 \cdot \frac{(2\pi a \ell^2)^{d/2}}{\Gamma(a)} (z/2)^{a-d/2} \sqrt{\pi/2z} \exp(-z) (1 + O(z^{-1})) \\
        & = C\cdot  \norm{\xi}_2^{a-(d+1)/2} \exp(-2\pi \sqrt{2a}\ell \norm{\xi}_2) (1 + O(\norm{\xi}_2^{-1})),
    \end{align}
    such that $s(\xi)  = \Theta \bigl( \norm{\xi}_2^{a-(d+1)/2} \exp(-2\pi \sqrt{2a}\ell \norm{\xi}_2) \bigr) $ as $\norm{\xi}_2 \to \infty$. We define $b := 2\pi \sqrt{2a}\ell$. Then
    \begin{align}
        s(\xi) = \Theta\bigl( \norm{\xi}_2^{a-(d+1)/2} \exp(-b \norm{\xi}_2)  \bigr). \label{eq:rq_spectral_density_rate}
    \end{align}
    
    \textbf{Step 2: Upper bound on $\gamma_T$.} 
    For sufficiently large $\norm{\xi}_2$, the polynomial factor in \cref{eq:rq_spectral_density_rate} can be absorbed into $\exp(b\norm{\xi}_2/2)$. 
    Then there exist $R, C_2 > 0$, such that $s(\xi) \leq C_2 \exp\bigl( -\frac{b}{2} \norm{\xi}_2 \bigr)$ when $\norm{\xi}_2 \geq R$. Hence,
    \begin{align}
        \int_{\mathbb{R}^d} \exp(b\norm{\xi}_2/4) s(\xi) \ d\xi 
        & = \int_{\norm{\xi}_2 < R} \exp(b\norm{\xi}_2/4) s(\xi) \ d\xi  + \int_{\norm{\xi}_2 \geq R} \exp(b\norm{\xi}_2/4) s(\xi) \ d\xi \\
        & \leq \exp(bR/4)  + C_2 \int_{\norm{\xi}_2 \geq R} \exp(-b\norm{\xi}_2/4) \ d\xi \\
        & \leq \exp(bR/4)  + C_2 \abs{\mathbb{S}^{d-1}} \int_{0}^{\infty} r^{d-1}\exp(-br/4) \ dr \\
        & = \exp(bR/4)  + C_2 \abs{\mathbb{S}^{d-1}} (4/b)^d \Gamma(d) \\
        & < \infty,
    \end{align}
    where the first inequality follows since $\int_{\mathbb{R}^d}s(\xi) \ d\xi = 1$, and the second inequality uses polar coordinates with $\abs{\mathbb{S}^{d-1}}$ being the unit sphere's surface area.
    Then, the conditions of \citep[Proposition 12]{Janz26} are satisfied with $\gamma = 1$ and $\tau_0 = b/4$, which gives $\epsilon_M = O \bigl( \exp(-cM^{\gamma}) \bigr)$. Then  the conditions of \citep[Corollary 3(2)]{Janz26} are satisfied with $\sigma = 1$, which gives $\gamma_T = O((\log T)^{d+1})$.

    \textbf{Step 3: Lower bound on $\gamma_T$.} 
    For sufficiently large $\norm{\xi}_2$, the polynomial factor in \cref{eq:rq_spectral_density_rate} can be lower bounded by $\exp(-b\norm{\xi}_2)$. 
    Then, there exist $R_2, C_3 > 0$ such that $s(\xi) \geq C_3 \exp\bigl( -2b \norm{\xi}_2 \bigr)$ when $\norm{\xi}_2 \geq R_2$. For $\norm{\xi}_2 < R_2$, restricting the integral in \cref{eq:s_xi} to $[1,2]$ gives
    \begin{align}
        s(\xi) 
        & \geq \frac{(2\pi a \ell^2)^{d/2}}{\Gamma(a)} \int_{1}^{2} t^{a-1-d/2} \exp\biggl(- t- \frac{2 \pi^2 a\ell^2 \norm{\xi}^2_2}{t} \biggr) \ dt \\
        & \geq \frac{(2\pi a \ell^2)^{d/2}}{\Gamma(a)} \exp\bigl(- 2- 2 \pi^2 a\ell^2 R_2^2 \bigr) \int_{1}^{2} t^{a-1-d/2} \ dt \\
        & \geq \biggl[\frac{(2\pi a \ell^2)^{d/2}}{\Gamma(a)} \exp\bigl(- 2- 2 \pi^2 a\ell^2 R_2^2 \bigr) \int_{1}^{2} t^{a-1-d/2} \ dt\biggr]  \cdot \exp(-2b\norm{\xi}_2),
    \end{align}
    where the last step follows because $\exp(-2b\norm{\xi}_2) \leq 1$, and the term in the brackets is positive.
    Then, there exists a constant $c_0$ such that $s(\xi) \geq c_0 \exp(-2b \norm{\xi}_2)$ for almost every $\xi \in \mathbb{R}^d$. Consequently, the conditions of \citep[Theorem 20]{Janz26} are satisfied with $\gamma = 1$ and $\tau = 2b$, which gives $\gamma_T = \Omega((\log T)^{d+1})$.
\end{proof}

\subsection{Proof of Lemma \ref{lem:lower_bound_diff_gamma} (Lower Bound Based on Information Difference)}

\begin{proof}
    Let $x_1,\ldots, x_T$ be a maximizing tuple of $\gamma_T(\mathcal{X})$, and let $\tilde{\mathcal{X}}$ be a finite approximation of $\mathcal{X}$ that contains this maximizing tuple (see \Cref{prop:finite_approximation}). Denote $\hat{\nu} := \frac{1}{T} \sum_{t=1}^T \delta_{x_t} \in \Delta(\tilde{\mathcal{X}}) $ as the empirical design.
    Recall that \Cref{thm:lower_bound_compact_X_d_T} gives
    \begin{align}
        R^*_T \geq c_1 \cdot \sqrt{Td_T(\tilde{\mathcal{X}})}, \label{eq:use_thm_2}
    \end{align}
    so it remains to lower bound $d_T(\tilde{\mathcal{X}})$ by $\gamma_T - \gamma_m$.
     
    To show this, we repeatedly delete entries from the maximizing tuple until $m$ points remain.
    For $t \in [T]$, we define $S_t := \sum_{s=1}^t \phi(x_s)\otimes \phi(x_s)$.
    When $t$ points remain, we delete the point that minimizes $\langle \phi(x), (S_t + \lambda I_E)^{-1} \phi(x) \rangle_k $, and denote the minimum as $\ell_t$.
    For convenience, we relabel the tuple such that $x_t$ is deleted when $t$ points remain.    
    Observe that
    \begin{align}
        0 \leq 
        \ell_t
        \leq \langle \phi(x_t), (\phi(x_t)\otimes \phi(x_t) + \lambda I_E)^{-1} \phi(x_t) \rangle_k
        = \frac{\norm{\phi(x_t)}_k^2}{\norm{\phi(x_t)}_k^2 + \lambda}
        \leq \frac{1}{1 + \lambda}
        < 1, \label{eq:ell_t_bound}
    \end{align}
    where the first inequality holds since $S_t + \lambda I_E \succeq 0$, the second inequality holds since $S_t \succeq \phi(x_t)\otimes \phi(x_t)$,
    the equality holds since $(\phi(x_t)\otimes \phi(x_t) + \lambda I_E)\phi(x_t) = (\norm{\phi(x_t)}_k^2 + \lambda)\phi(x_t)$, and the third inequality uses $\norm{\phi(x)}_k^2 = k(x,x) \leq 1$.

    We define $K_m(x_{1:m}) := [k(x_s, x_{s'})]_{s,s'\in[m]} $.
    \Cref{lem:same_eigenvalues} shows that $K_T(x_{1:T})$ and $S_T$ have the same positive eigenvalues (including multiplicity), and likewise $K_m(x_{1:m})$ and $S_m $, such that 
    \begin{align}
        \log \det_E (I_E + \lambda^{-1} S_T )
        = 2 \gamma_T(\mathcal{X}),
        \qquad
        \log\det_E (I_E + \lambda^{-1} S_{m} )  
        \leq 2 \gamma_{m}.
    \end{align}
    Then, we have
    \begin{align}
        2(\gamma_T - \gamma_{m})
        & \leq \log\det_E (I_E + \lambda^{-1} S_{T} ) - \log\det_E (I_E + \lambda^{-1} S_{m} ) \\
        & = \sum_{t = m+1}^T \biggl( \log\det_E (I_E + \lambda^{-1} S_{t} ) - \log\det_E (I_E + \lambda^{-1} S_{t-1} ) \biggr). \label{eq:telescope_sum}
    \end{align}
    Note that by definition, $S_{t} - \phi(x_t) \otimes \phi(x_t) = S_{t-1} $. 
    We now apply \Cref{lem:det_ratio} to $S_t + \lambda I_E$, which gives
    \begin{align}
        \frac{\det_E(S_{t-1} + \lambda I_E)}{\det_E(S_{t} + \lambda I_E)}
        = 1 - \langle \phi(x_t), (S_t+\lambda I_E)^{-1}\phi(x_t) \rangle_k
        = 1 - \ell_t, \label{eq:det_steps}
    \end{align}
    and hence
    \begin{align}
        \log \det_E(\lambda^{-1}  S_{t} + I_E) - \log \det_E(\lambda^{-1}  S_{t-1} + I_E)
        = - \log(1- \ell_t)
        \leq \frac{\ell_t}{1 - \ell_t}
        \leq (1 + \lambda^{-1}) \ell_t, \label{eq:log_det_diff}
    \end{align}
    where the first inequality follows since $-\log(1-\ell_t) = \int_{0}^{\ell_t} \frac{du}{1- u} \leq \frac{\ell_t}{1-\ell_t} $, and the last inequality uses \cref{eq:ell_t_bound}.
    Substituting \cref{eq:log_det_diff} back into \cref{eq:telescope_sum} gives
    \begin{align}
        2(\gamma_T - \gamma_{m})
        & \leq \sum_{t = m+1}^T (1 + \lambda^{-1}) \ell_t. \label{eq:gamma_diff_by_ell_t}
    \end{align}
    To upper bound $\ell_t$, we observe that
    \begin{align}
        \ell_t 
        & \leq \frac{1}{t} \sum_{s=1}^t \langle \phi(x_s), (S_t + \lambda I_E)^{-1} \phi(x_s) \rangle_k \label{eq:min_by_avg} \\
        & = \frac{1}{t} \tr_E(S_t (S_t + \lambda I_E)^{-1} ) \label{eq:use_lem_13} \\
        & \leq \frac{1}{t} \tr_E(S_T (S_T + \lambda I_E)^{-1} ) \label{eq:S_t_by_S_T} \\
        & = \frac{1}{t} \tr_E(M_{\hat{\nu}} (M_{\hat{\nu}} + \rho I_E)^{-1} ) \label{eq:S_T_to_m_nu} \\
        & \leq \frac{1}{t} 2d_T(\tilde{\mathcal{X}}), \label{eq:ell_t_to_d_T}
    \end{align}
    where \cref{eq:min_by_avg} follows since the minimum is upper bounded by the average, \cref{eq:use_lem_13}
    follows from $S_t \succeq 0$ and \Cref{lem:trace_to_inner_product}, \cref{eq:S_t_by_S_T} follows from $S_t \preceq S_T$, \cref{eq:S_T_to_m_nu} follows from $S_T = TM_{\hat{\nu}} $, and \cref{eq:ell_t_to_d_T} follows from \Cref{lem:bound_by_two_d_T}.
    Substituting \cref{eq:ell_t_to_d_T} into \cref{eq:gamma_diff_by_ell_t} gives $(\gamma_T - \gamma_{m}) \leq (1 + \lambda^{-1}) d_T(\tilde{\mathcal{X}}) \sum_{t = m+1}^T t^{-1} $, which combined with \cref{eq:use_thm_2} proves the result.
\end{proof}

\subsection{Proof of Proposition \ref{prop:sufficient_condition_to_remove_log} (Sufficient Condition for Log Factor Removal)}
\begin{proof}
    Choose $T$ sufficiently large that satisfies \cref{eq:growth_condition} and $T \geq 2L$. Then $\lfloor T/L \rfloor \geq T/(2L) $, such that $\sum_{t = \lfloor T/L \rfloor+1}^T \frac{1}{t} \leq \int_{\lfloor T/L \rfloor}^T u^{-1} \ du = \log(T/\lfloor T/L \rfloor) \leq \log (2L)  $. 
    Combining this with \cref{eq:growth_condition}, and then applying \Cref{lem:lower_bound_diff_gamma}, we obtain
    \begin{align}
        R^*_T 
        \geq  c_1 \cdot \sqrt{ \frac{cT\gamma_T }{(1+\lambda^{-1}) \log(2L) }}
        = \Omega(\sqrt{T\gamma_T}),
    \end{align}
    where $c_1 = \min \bigl\{ \frac{\sigma}{32}, \frac{\sigma \Delta_k}{16}  \bigr\}$, which can be absorbed into $\Omega(\cdot)$ because $\mathcal{X}$ is assumed to be fixed.
\end{proof}

\subsection{Proof of Lemma \ref{lem:connect_fV_to_gamma_T} (Upper Bound on $h(\mathcal{V},\rho)$)}

\begin{proof}
    Fix a nonempty set $\mathcal{V} \subseteq \mathcal{X}$. We consider greedily choosing $T$ points in $\mathcal{V}$ as follows: Define $S_0 := 0$, and at time $t \in [T]$, choose $x_t \in \mathcal{V}$ maximizing $\langle \phi(x), (S_{t-1} + \lambda I_E)^{-1} \phi(x) \rangle_k$. Denote this maximum by $a_t$, and set $S_{t} := S_{t-1} + \phi(x_t)\otimes \phi(x_t) $.
    We also define $a_{T+1}:= \max_{x\in\mathcal{V}} \langle \phi(x), (S_{T} + \lambda I_E)^{-1} \phi(x) \rangle_k $.
    For every $t \in [T]$, $S_t \succeq S_{t-1} $, such that $a_1 \geq \cdots \geq a_{T+1}$.
    Define $\hat{\nu} := \frac{1}{T}\sum_{t=1}^T \delta_{x_t} $ as the empirical design. Then $TM_{\hat{\nu}} = S_T$, such that
    \begin{align}
        h(\mathcal{V}, \rho)
        \leq \max_{x\in\mathcal{V}} \langle \phi(x), (M_{{\hat{\nu}}} + \rho I_E)^{-1} \phi(x) \rangle_k
        = T\max_{x\in\mathcal{V}} \langle \phi(x), (S_T + \lambda I_E)^{-1} \phi(x) \rangle_k
        = T a_{T+1}
        \leq \sum_{t=1}^Ta_t. \label{eq:f_by_sum_a_t}
    \end{align}
    To connect $h(\mathcal{V}, \rho)$ to $\gamma_T$,  we notice that
    \begin{align}
        2\gamma_T
        & \geq \log \det(I_T + \lambda^{-1} K_T(x_{1:T}) ) \\
        & = \log \det_E(I_E + \lambda^{-1} S_T ) \label{eq:use_lem_19} \\
        & = \sum_{t=1}^T \log \frac{\det_E(\lambda I_E + S_{t} )}{\det_E( \lambda I_E + S_{t-1} )} \label{eq:telescope}  \\
        & = \sum_{t=1}^T \log \bigl( 1 + \langle \phi(x_t), (\lambda I_E + S_{t-1})^{-1} \phi(x_t) \rangle_k \bigr) \label{eq:apply_det_ratio_2}  \\
        & = \sum_{t=1}^T \log (1+a_t), \label{eq:gamma_T_by_a_t}
    \end{align}
    where \cref{eq:use_lem_19} follows from \Cref{lem:same_eigenvalues}, \cref{eq:telescope} follows by telescoping, \cref{eq:apply_det_ratio_2} follows by applying \Cref{lem:det_ratio_2} with $A = \lambda I_E + S_{t-1}$, and \cref{eq:gamma_T_by_a_t} follows from the definition of $a_t$.
    For every $t \in [T]$, $(S_{t-1} + \lambda I_E )^{-1} \preceq \lambda^{-1} I_E $, 
    such that $a_t \leq \langle \phi(x_t),  \lambda^{-1} \phi(x_t) \rangle_k = \lambda^{-1}k(x_t,x_t) \leq \lambda^{-1} $, which further implies $\log(1+a_t) = \int_0^{a_t}\frac{du}{1+u} \geq \frac{a_t}{1 + \lambda^{-1}} $. Combining this result with \cref{eq:f_by_sum_a_t,eq:gamma_T_by_a_t} gives
    \begin{align}
        2\gamma_T
        \geq \sum_{t=1}^T \frac{a_t}{1 + \lambda^{-1}}
        \geq \frac{h(\mathcal{V}, \rho)}{1 + \lambda^{-1}},
    \end{align}
    and maximizing over $\mathcal{V}$ proves the result.
\end{proof}

\subsection{Proof of Theorem \ref{thm:regret_upper_bound_on_cube} (General Upper Bound)}
\begin{proof}
    We construct $\tilde{\mathcal{X}} := \{ 0, 1/m, \ldots, 1 \}^d $, where $m := \lceil \sqrt{d} (CT)^{1/\alpha} \rceil $.  By a ``rounding'' argument to the closest grid point, we see that for every $x \in \mathcal{X}$, there exists $a \in \tilde{\mathcal{X}}$ such that $\norm{x - a}_2 \leq \sqrt{d}/(2m) $.
    Then for every $f \in \mathcal{F}_{k}(B)$, we have
    \begin{align}
        \abs{f(x) - f(a)}
        \leq B \norm{\phi(x) - \phi(a)}_k
        \leq BC (\sqrt{d}/(2m))^{\alpha}
        \leq B / (T 2^{\alpha})
        \leq B/T,
    \end{align}
    where the first inequality follows from the reproducing property and $\norm{f}_k \leq B$, the second follows from \cref{eq:holder_condition}, the third follows since $m \geq \sqrt{d} (CT)^{1/\alpha} $, and the last follows because $\alpha > 0$. Hence, we have $\epsilon(\tilde{\mathcal{X}}) \leq B/T$ for the quantity $\epsilon(\tilde{\mathcal{X}})$ defined in \eqref{eq:epsilon_x}.
    
    We then apply the RIPS algorithm to $\tilde{\mathcal{X}}$, such that the minimax regret on $\mathcal{X}$ satisfies
    \begin{align}
        R^*_T 
        & = O \biggl( \log(\abs{\mathcal{\tilde{X}}}T) + \sqrt{T\gamma_T(\tilde{\mathcal{X}})\log(\abs{\tilde{\mathcal{X}}} T)} \biggr) + T\epsilon(\mathcal{\tilde{\mathcal{X}}}) \\
        & = O \bigl( \log T + \sqrt{T\gamma_T(\tilde{\mathcal{X}})\log T} + B \bigr) \\
        & = O \bigl(\sqrt{T\gamma_T(\tilde{\mathcal{X}})\log T} \bigr) \\
        & = O \bigl(\sqrt{T\gamma_T\log T} \bigr),
    \end{align}
    where the first step follows from \Cref{lem:cam_thm_3,lem:connect_fV_to_gamma_T} along with \cref{eq:epsilon_x},
    the second step follows because $\abs{\tilde{\mathcal{X}}} = (m+1)^d \leq ( \sqrt{d} (CT)^{1/\alpha} + 2)^d$ and $\epsilon(\tilde{\mathcal{X}}) \leq B/T$,
    the third step follows since $\gamma_T(\tilde{\mathcal{X}}) = \Omega(\log T)$ by \Cref{lem:gamma_T_log_T_lower_bound},
    and the last step follows since $\gamma_T(\tilde{\mathcal{X}}) \leq \gamma_T(\mathcal{X}) = \gamma_T$.
\end{proof}

\subsection{Proof that All Six Kernels Satisfy the H\"{o}lder Condition}
\label{app:kernel_satisfy_holder_condition}

Recall from Appendix \ref{app:kernel} that $k(\norm{x-x'}_2) = k(x,x')$.
For $x,y \in \mathcal{X}$, we define $r:= \norm{x-y}_2$, such that $\norm{\phi(x) - \phi(y)}_k^2 = k(x,x) + k(y,y) - 2k(x,y) = 2(k(0) - k(r)) $. Then, it suffices to prove that $k(0) - k(r) \leq \frac{C^2}{2}r^{2\alpha}$ for some fixed $C,\alpha > 0$.
We recall that $1 - \exp(-x) \leq x$ for every $x \in \mathbb{R}$. 

\textbf{Squared exponential kernel}. $k(0) - k(r) = 1 - \exp\bigl( -\frac{r^2}{2\ell^2} \bigr) \leq \frac{r^2}{2\ell^2}$. Then, we can choose $\alpha = 1$.

\textbf{Rational quadratic kernel}. Since $\log(1 + x) \leq x$ for every $x \geq 0$, we have
\begin{align}
    k(0) - k(r) 
    = 1 - \biggl( 1 + \frac{r^2}{2a\ell^2} \biggr)^{-a}
    = 1 - \exp \biggl( - a\log \Bigl(1 + \frac{r^2}{2a\ell^2} \Bigr) \biggr)
    \leq a\log \biggl(1 + \frac{r^2}{2a\ell^2} \biggr)
    \leq \frac{r^2}{2\ell^2},
\end{align}
such that we can choose $\alpha = 1$.

\textbf{$\gamma$-exponential kernel}. $k(0) - k(r) = 1 - \exp(-(r/\ell)^\gamma) \leq (r/\ell)^{\gamma}$. Hence, we can choose $\alpha = \gamma/2$.

\textbf{Mat\'ern kernel}. 
For $z \geq 0$, $K_{\nu}(z)$ can be represented by \cite[\href{https://dlmf.nist.gov/10.32.E10}{(10.32.10)}]{DLMF} as follows:
\begin{align}
    K_\nu(z) 
    = \frac{1}{2} \biggl(\frac{z}{2}\biggr)^{\nu} \int_0^{\infty} \exp\biggl(-t-\frac{z^2}{4t}\biggr)\frac{dt}{t^{\nu+1}}.
\end{align}
We change variable by $t = z^2/(4s)$, which gives
\begin{align}
    K_\nu(z) 
    = z^{-\nu} 2^{\nu-1} \int_0^{\infty} s^{\nu-1} \exp\biggl(- \frac{z^2}{4s} -s\biggr) \ ds,
\end{align}
such that
\begin{align}
    \frac{2^{1-\nu}}{\Gamma(\nu)} z^{\nu} K_{\nu}(z) = \frac{1}{\Gamma(\nu)} \int_0^{\infty} s^{\nu-1} \exp\biggl(- \frac{z^2}{4s} -s\biggr) \ ds.
\end{align}
Taking $z = \sqrt{2\nu} r / \ell$ gives
\begin{align}
    k_\nu(r)
    = \frac{1}{\Gamma(\nu)} \int_0^{\infty} s^{\nu-1} \exp (-s) \exp\biggl(- \frac{\nu r^2}{2\ell^2s}\biggr) \ ds. \label{eq:mat_integral}
\end{align}
The integrand is upper bounded by $s^{\nu-1} \exp(-s)$, whose integral is $\Gamma(\nu) < \infty$. 
For $u \geq 0$ and $0 < \alpha \leq 1$, we have $1 - \exp(-u)\leq \min\{u, 1\} \leq u^{\alpha} $.
Then,
\begin{align}
    k(0) - k(r) 
    & = \frac{1}{\Gamma(\nu)} \int_0^{\infty} s^{\nu-1} \exp(-s) \biggl[1 - \exp\biggl(- \frac{\nu r^2}{2\ell^2s}\biggr)\biggr] \ ds \\
    & \leq \frac{1}{\Gamma(\nu)} \biggl(\frac{\nu r^2}{2\ell^2}\biggr)^{\alpha} \int_0^{\infty} s^{\nu-\alpha-1} \exp(-s) \ ds \\
    & = \frac{\Gamma(\nu-\alpha)}{\Gamma(\nu)} \biggl(\frac{\nu}{2\ell^2}\biggr)^{\alpha} r^{2\alpha},
\end{align}
where the ratio of Gamma functions is finite provided $\nu - \alpha > 0$. 
Then we can choose any $0 < \alpha \leq 1$ with $\alpha < \nu$.

\textbf{Piecewise-polynomial kernel}.
If $0 \leq r \leq 1$, then 
\begin{align}
    k(0) - k(r)
    = - \sum_{j=1}^{\lfloor d/2 \rfloor + 3q + 1} c_{j,q} r^j
    \leq \sum_{j=1}^{\lfloor d/2 \rfloor + 3q + 1}\abs{c_{j,q}} r^j
    \leq \sum_{j=1}^{\lfloor d/2 \rfloor + 3q + 1}\abs{c_{j,q}} r.
\end{align}
If $r > 1$, then $k(0) - k(r) = c_{0,q} \leq \abs{c_{0,q}} r $. Then $k(0) - k(r) \leq \max\bigl\{\sum_{j=1}^{\lfloor d/2 \rfloor + 3q + 1}\abs{c_{j,q}}, \abs{c_{0,q}} \bigr \} \cdot r  $, and we can choose $\alpha = 1/2$.

For the one-dimensional Dirichlet kernel, pairing terms with index $j$ and $-j$ gives
\begin{align}
    k_{\mathrm{Dir},n}(r)
    = \frac{1}{2n+1} \sum_{k=-n}^n \exp(-ikr)
    = \frac{1}{2n+1} \biggl(1 + 2 \sum_{j=1}^n \cos(jr) \biggr)
\end{align}
Specifically, $k(0) = 1$. 
Then
\begin{align}
    k(0) - k(r)
    = 1 - \frac{1}{2n+1} \biggl(1 + 2 \sum_{j=1}^n \cos(jr) \biggr)
    = \frac{2}{2n+1} \sum_{j=1}^n (1 - \cos(jr))
    \leq \frac{r^2}{2n+1} \sum_{j=1}^n j^2
    = \frac{n(n+1)}{6} r^2,
\end{align}
where the inequality follows because $1 - \cos u = 2\sin^2 (u/2) \leq u^2/2 $, and the last equality follows because $\sum_{j=1}^n j^2 = n(n+1)(2n+1)/6 $. Then we can choose $\alpha = 1$.

\subsection{Proof of Lemma \ref{lem:f_V_sharper} (Sharper Upper Bounds; First Result)}

\begin{proof}
    Fix a nonempty $\mathcal{V} \subseteq \mathcal{X}$, and denote $m := \abs{\mathcal{V}} $. 
    Following \cref{eq:d_optimal_design}, we define $\nu$ as a regularized $D$-optimal design over $\Delta(\mathcal{V})$. 
    We also define $E_1 :=\mathrm{span} \{\phi(v): v \in \mathcal{V} \} $.
    Define $D := \diag(\nu(v), v \in \mathcal{V} ) $ and the weighted Gram matrix $H_k := D^{1/2} [k(v,w)]_{v,w\in\mathcal{V}} D^{1/2} $. 
    By an argument similar to that in \Cref{lem:same_eigenvalues}, we can show that $H_k$ and $M_{\nu}$ have the same nonzero eigenvalues (including multiplicities) as follows: Define $\Phi: \mathbb{R}^m \to E_1 $ by $\Phi b_i = \sqrt{\nu(v_i)} \phi(v_i) $, and define $\Phi^*$ to be its adjoint. Then $H_k = \Phi^*\Phi $ and $M_{\nu} = \Phi \Phi^* $, and the result follows.
    
    Then, \citep[Lemma 3]{Camilleri21} gives $h(\mathcal{V}, \rho) \leq \tr_{E_1}(M_{\nu} (M_{\nu} + \rho I_{E_1})^{-1})$, which combined with the above fact on $H_k, M_{\nu}$ implies
    \begin{align}
        h(\mathcal{V}, \rho) \leq \tr(H_k (H_k + \rho I)^{-1}). \label{eq:f_V_by_H_k}
    \end{align}
    To proceed, we apply \citep[Theorem 1]{Janz26}.
    Suppose that for every $M \in \mathbb{Z}_{+}$, there exist
    positive semidefinite kernels $k_{M,1}, k_{M,2}: \mathcal{X}\times \mathcal{X} \to \mathbb{R}$ and $\epsilon_M > 0$ such that
    \begin{align}
        k \preceq k_{M,1} + k_{M,2},
        \qquad
        \mathrm{rank}[k_{M,1}(x,y)]_{x,y\in\mathcal{X}} =  O(M^d),
        \qquad
        \max_{x \in\mathcal{X}} k_{M,2}(x,x) \leq \epsilon_M^2, \label{eq:kernel_condition}
    \end{align}
    where $k \preceq k_{M,1} + k_{M,2}$ denotes the corresponding Gram-matrix inequality. Following $H_k$, we define $H_{k_{M,1}}$ and $H_{k_{M,2}}$ as the weighted Gram matrices for $k_{M,1}$ and $k_{M,2}$, respectively. Then
    \begin{align}
        H_k \preceq H_{k_{M,1}} + H_{k_{M,2}},
        \qquad
        \mathrm{rank} \ H_{k_{M,1}} =  O(M^d),
        \qquad
        \tr H_{k_{M,2}} = \sum_{v\in\mathcal{V}} \nu(v)k_{M,2}(v,v) \leq \epsilon_M^2. \label{eq:H_facts}
    \end{align}
    Then, we follow the proof of \citep[Theorem 1]{Janz26} to upper bound $\tr(H_k (H_k + \rho I)^{-1})$.
    For convenience, we define $H := H_{k_{M,1}} + H_{k_{M,2}} $.
    Since $H_k \preceq H$, it follows that $\tr(H_k (H_k + \rho I)^{-1}) \leq \tr (H (H + \rho I)^{-1}) $. Let $P$ be the orthogonal projection onto the range of $H_{k_{M,1}}$, and define $Q := I - P$. Then
    \begin{align}
        \tr(H_k (H_k + \rho I)^{-1}) 
        & \leq \tr (H (H + \rho I)^{-1}) \\
        & = \tr (PH (H + \rho I)^{-1}P) + \tr (QH (H + \rho I)^{-1}Q) \\
        & \leq \mathrm{ rank } \ H_{k_{M,1}} + \tr (QH (H + \rho I)^{-1}Q) \\
        & \leq \mathrm{ rank } \ H_{k_{M,1}} + \rho^{-1} \tr (QH_{k_{M,2}} Q), \label{eq:trace_h_k}
    \end{align}
    where the second inequality uses $H (H + \rho I)^{-1} \preceq I$ and $\tr P = \mathrm{ rank } \ H_{k_{M,1}} $,
    the third inequality follows since $H(H+\rho I)^{-1} \preceq \rho^{-1}H $ and $Q H_{k_{M,1}} Q  = 0$. Applying \cref{eq:f_V_by_H_k,eq:H_facts} gives
    \begin{align}
        h(\mathcal{V}, \rho)
        = O ( M^d + \epsilon_M^2 / \rho )
        = O ( M^d + T\epsilon_M^2 ). \label{eq:f_V_by_r_eta}
    \end{align}

    We now specialize \cref{eq:f_V_by_r_eta} to each kernel.

    \textbf{Squared exponential kernel.}
    \citep[Remark 2]{Janz26} shows the squared exponential kernel has spectral density $s(\xi) = \Theta \bigl( \exp(-\tau \norm{\xi}^2_2) \bigr) $ for some $\tau > 0$. Denote $\mu$ as its spectral measure. Then
    \begin{align}
        \int_{\mathbb{R}^d} \exp\biggl( \frac{\tau}{2} \norm{\xi}_2^2 \biggr) \mu(d\xi)
        = \int_{\mathbb{R}^d} \exp\biggl(  \frac{\tau}{2} \norm{\xi}_2^2 \biggr) s(\xi) \ d\xi
        \leq C \int_{\mathbb{R}^d} \exp\biggl( - \frac{\tau}{2} \norm{\xi}_2^2 \biggr) \ d\xi
        = C (2\pi / \tau)^{d/2} < \infty,
    \end{align}
    where the second equality follows by applying $\int_{\mathbb{R}} \exp(-\tau x^2/2) \ dx = \sqrt{2\pi/\tau} $ to each dimension.
    Then the conditions of \citep[Proposition 15]{Janz26} are satisfied with $\tau_0 = \tau / 2$ and $\gamma = 2$, which implies $k_{\mathrm{SE}}$ (defined on $[-1,1]^d\times [-1,1]^d $) is entire of finite order at most $p = 2$.
    Then \citep[Proposition 14]{Janz26} gives $\epsilon_M = O\bigl(\exp (-cM\log(e+M))\bigr) $ for some $c > 0$, and restricting to $\mathcal{X}\times \mathcal{X}$ preserves this bound.
    We choose $M = \Bigl \lceil \frac{2}{c} \frac{\log T}{\log\log T} \Bigr \rceil $. For sufficiently large $T$, $\log(e + M) \geq \frac{1}{2}\log\log T$, such that $\epsilon_M^2 = O (T^{-2})$, and
    \begin{align}
        h(\mathcal{V}, \rho)
        = O \bigl( (\log T / \log\log T)^d + T^{-1} \bigr)
        = O \bigl( (\log T / \log\log T)^d \bigr).
    \end{align}
    
    \textbf{Rational quadratic kernel}. We define $\tilde{k}_{\mathrm{RQ}}$ as the extension of $k_{\mathrm{RQ}}$ to $\mathbb{R}^d \times \mathbb{R}^d$. Then $\tilde{k}_{\mathrm{RQ}}$ is positive semidefinite, because it can be represented as a positive mixture of squared exponential kernels \citep[(4.19)--(4.20)]{Rasmussen05}. In addition, $\tilde{k}_{\mathrm{RQ}}$ is real analytic on $\mathbb{R}^d \times \mathbb{R}^d$, such that $\tilde{k}_{\mathrm{RQ}}$ belongs to the Gevrey class $G^1(\mathbb{R}^d \times \mathbb{R}^d)$, which is the class of real-analytic functions on $\mathbb{R}^d \times \mathbb{R}^d$ \citep{Janz26}.  Then \citep[Corollary 11]{Janz26} is satisfied with $U = \mathbb{R}^d$, $\sigma = 1$, $\tilde{k} =  \tilde{k}_{\mathrm{RQ}}$, such that $\tilde{k}_{\mathrm{RQ}} $ (restricted to $[0,1]^d\times [0,1]^d$) has $\epsilon_M = O (\exp(-cM)) $, and further restricting to $\mathcal{X}\times \mathcal{X}$ preserves this bound.
    Then, choosing $M = \lceil c^{-1} \log T \rceil$ gives
    \begin{align}
        h(\mathcal{V}, \rho)
        = O \bigl( (\log T)^d + T^{-1} \bigr)
        = O \bigl( (\log T)^d \bigr),
    \end{align}
    and hence $\max_{\mathcal{V} \subseteq \mathcal{X}}h(\mathcal{V}, \rho) = O((\log T)^d)$.
    
    \textbf{One-dimensional Dirichlet kernel.}
    Observe that
    \begin{align}
        k_{\mathrm{Dir},n}(x,x')
        & = \frac{1}{2n+1} \sum_{k=-n}^n \exp(-ik(x-x')) \\
        & = \frac{1}{2n+1} \biggl(1 +  2\sum_{j=1}^n \cos(j(x-x')) \biggr) \\
        & = \frac{1}{2n+1} \biggl(1 +  2\sum_{j=1}^n [\cos(jx)\cos(jx') + \sin(jx)\sin(jx')] \biggr), \label{eq:kernel_decompose}
    \end{align}
    where each of the $2n+1$ terms in \cref{eq:kernel_decompose} is a positive semidefinite kernel with rank at most $1$. 
    Then, $k_{\mathrm{Dir},n}(r)$ has rank at most $2n+1$. Choosing $k_{M,1} = k$ and $k_{M,2} \equiv 0$ gives $\mathrm{rank} \ H_{k_{M,1}} \leq 2n + 1$ and $\tr H_{k_{M,2}} = 0$, which combined with
    \cref{eq:f_V_by_H_k,eq:trace_h_k} gives
    \begin{align}
        h(\mathcal{V}, \rho)
        \leq 2n + 1,
    \end{align}
    and hence $\max_{\mathcal{V} \subseteq \mathcal{X}}h(\mathcal{V}, \rho) \le 2n+1$.
\end{proof}

\subsection{Proof of Lemma \ref{lem:remove_log_MOSS} (Sharper Upper Bounds; Second Result)}

For $\mathcal{X} = [0,1]^d$, consider the finite approximation $\tilde{\mathcal{X}} = \{0,1/m, \ldots, 1\}^d $, where $m$ will be chosen later. 
We overload notation and write $k(x,y) = k(x-y)$.
For each of the kernels, we consider its spectral representation
\begin{align}
    k(z) = \int_{\mathbb{R}^d} \exp(iw^T z) p(w) \ dw, \label{eq:spectral_representation}
\end{align}
with Fourier normalization constants absorbed into $p$. Specifically, $p(w) = (2\pi) ^{-d} s(w/2\pi) $, where $s(w)$ is the spectral density in \citep{Janz26}, and $p(w) = (2\pi)^{-d} \hat{k}(w) $, where $\hat{k}(w)$ is the Fourier transform of the kernel $k$ used in \citep{Lee25}.
Our proof relies on bounding $p(w)$, as shown below.

\begin{lemma}
    \label{lem:nice_discretization}
    Consider the setup of \Cref{sec:setup_compact} with $\mathcal{X} = [0,1]^d$.
    Suppose that there exist constants $A > 0$ and $0 < s < 2$, such that
    \begin{align}
        0 \leq p(w) \leq A (1 + \norm{w}_2^2)^{-(s + d/2)}
        \qquad
        \forall w \in \mathbb{R}^d. \label{eq:spectral_assumption}
    \end{align}
    Then there exists a constant $C > 0$ (which may depend on $A,d,s$) such that
    \begin{align}
        \epsilon(\tilde{\mathcal{X}}) \leq C B m^{-s}.
    \end{align}
\end{lemma}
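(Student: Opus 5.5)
The plan is to control, for each $x\in\mathcal{X}$, the gap $f(x)-\max_{a\in\tilde{\mathcal{X}}} f(a)$ uniformly over $f\in\mathcal{F}_k(B)$. This will be done by comparing $\phi(x)$ with a \emph{convex combination} of grid features rather than with a single nearest grid point. A single-point comparison only yields a bound of order $\norm{x-a}_2^{\min\{s,1\}}$, because
\[
\norm{\phi(x)-\phi(a)}_k^2 = 2\int (1-\cos(w^T z))\,p(w)\,dw
\]
is only first-order small. That suffices for $s<1$ but not for $1\le s<2$. The convex combination cancels the first-order term.

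Fix $x\in[0,1]^d$ and let $a_1,\ldots,a_{2^d}\in\tilde{\mathcal{X}}$ be the vertices of a grid cell of side $1/m$ containing $x$. All these vertices lie in $[0,1]^d$. Write $x=\sum_i \lambda_i a_i$ with barycentric (multilinear) weights $\lambda_i\ge 0$ and $\sum_i\lambda_i=1$, and set $h:=\sqrt{d}/m$, so that $\norm{a_i-x}_2\le h$. For $f\in\mathcal{F}_k(B)$, the reproducing property and Cauchy--Schwarz give
\[
f(x)-\max_{a\in\tilde{\mathcal{X}}}f(a)\le f(x)-\sum_i\lambda_i f(a_i) = \Bigl\langle f,\ \phi(x)-\sum_i\lambda_i\phi(a_i)\Bigr\rangle_k \le B\,\Bigl\lVert \phi(x)-\sum_i\lambda_i\phi(a_i)\Bigr\rVert_k .
\]
It therefore suffices to show that this norm is at most $C h^{s}$.

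Expanding the squared norm with $k(x,y)=k(x-y)$ and the spectral representation \cref{eq:spectral_representation}, and writing $z_i:=a_i-x$, gives
\[
\Bigl\lVert \phi(x)-\sum_i\lambda_i\phi(a_i)\Bigr\rVert_k^2=\int_{\mathbb{R}^d}\Bigl|1-\sum_i\lambda_i e^{i w^T z_i}\Bigr|^2 p(w)\,dw .
\]
The integrand is bounded in two ways.
\begin{itemize}
\item Trivially, $|1-\sum_i\lambda_i e^{iw^Tz_i}|\le 2$.
\item Since $\sum_i\lambda_i z_i=0$, Taylor's theorem gives $1-\sum_i\lambda_i e^{iw^Tz_i}=-\sum_i\lambda_i\,(e^{iw^Tz_i}-1-iw^Tz_i)$, whose modulus is at most $\tfrac12\sum_i\lambda_i (w^Tz_i)^2\le \tfrac12\norm{w}_2^2h^2$.
\end{itemize}
Hence the squared integrand is at most $\min\{\tfrac14\norm{w}_2^4h^4,\,4\}$.

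Next, split the integral at $\norm{w}_2=1/h$ and use \cref{eq:spectral_assumption}, i.e.\ $p(w)\le A(1+\norm{w}_2^2)^{-(s+d/2)}\le A\norm{w}_2^{-2s-d}$, together with polar coordinates.
\begin{itemize}
\item On $\norm{w}_2\le 1/h$, the contribution is at most a constant times $h^4\int_0^{1/h} r^{3-2s}\,dr$. This is finite and of order $h^{2s}$ precisely because $s<2$. The region near the origin is harmless, since there $p\le A$ and the integrand is $O(r^{d+3})$.
\item On $\norm{w}_2>1/h$, the contribution is at most $4A|\mathbb{S}^{d-1}|\int_{1/h}^\infty r^{-2s-1}\,dr$, which is of order $h^{2s}$ since $s>0$.
\end{itemize}
Combining, the norm is at most $C' h^{s}=C' d^{s/2} m^{-s}$. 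Taking the supremum over $f$ and $x$ then gives $\epsilon(\tilde{\mathcal{X}})\le CBm^{-s}$.

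The main obstacle is the cancellation step: a naive nearest-point argument loses the regime $s\ge1$. The key realization is that the barycentric weights make the linear term of the Taylor expansion vanish. The condition $s<2$ enters exactly where the low-frequency integral $\int^{1/h} r^{3-2s}\,dr$ must scale like $h^{2s-4}$.
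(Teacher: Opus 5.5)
Your proposal is correct and follows essentially the same route as the paper. Both arguments use multilinear interpolation over the grid cell containing $x$, so that $\max_{a\in\tilde{\mathcal{X}}} f(a)$ dominates the interpolant. Both then apply the spectral identity for $\bigl\lVert \phi(x)-\sum_b w_b(x)\phi(v_b)\bigr\rVert_k^2$, use the cancellation of the first-order Taylor term to get $|D_x(w)|\le \min\{2, \frac{d}{2m^2}\|w\|_2^2\}$, and finish with a radial split of the integral. The only difference is cosmetic: you split once at $\|w\|_2=m/\sqrt{d}$ using $p(w)\le A\|w\|_2^{-2s-d}$ throughout (integrable at the origin since $s<2$), whereas the paper splits at $r=1$ and $r=m$; both give $O(m^{-2s})$.
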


\begin{proof}
    Fix $x \in \mathcal{X}$, and consider a grid cell in $\tilde{\mathcal{X}}$ containing $x$.
    Then there exists $j\in \{0, \ldots, m-1 \}^d $, such that the vertices of the cell can be written as 
    \begin{align}
        v_b = m^{-1} (j + b) \in \tilde{\mathcal{X}}
        \qquad
        \forall b \in \{0, 1\}^d.
    \end{align}
    Likewise, there exists $\theta \in [0,1]^d$, such that $x = m^{-1}(j + \theta) $.
    For $\ell \in [d]$, we define $b_\ell$ to be the $\ell$-th coordinate of $b$, and similarly $\theta_{\ell}$.
    Then, we define the multilinear interpolation weights
    \begin{align}
        w_b(x) := \prod_{\ell:b_{\ell} = 1} \theta_{\ell} \prod_{\ell: b_{\ell} = 0} (1 - \theta_{\ell}). 
    \end{align}
    It follows that
    \begin{align}
        w_b(x) \geq 0,
        \qquad
        \sum_{b \in \{0,1\}^d } w_b(x) = 1,
        \qquad
        \sum_{b \in \{0,1\}^d } w_b(x) v_b = x,
        \qquad
        \norm{v_b - x}_2 \leq \sqrt{d}/m. \label{eq:weight_facts}
    \end{align}
    Define $I_m f(x) := \sum_{b \in \{0,1\}^d } w_b(x) f(v_b) $. 
    Because $w_b(x)$ are weights that sum to one, 
    we have $I_m f(x) \leq \max_{a\in\tilde{\mathcal{X}}} f(a) $.
    Consequently, $ f(x) - \max_{a \in \tilde{\mathcal{X}}}f(a) \leq f(x) - I_m f(x) \leq \abs{f(x) - I_m f(x)}  $, and taking the maximum over $x \in \mathcal{X}$ gives
    \begin{align}
        \max_{x \in \mathcal{X}} f(x) - \max_{a \in \tilde{\mathcal{X}}} f(a)
        \leq \sup_{x\in \mathcal{X} } \abs{f(x) - I_m f(x)}. \label{eq:approximation_error}
    \end{align}
    We proceed to upper bound ${f(x) - I_m f(x)}$.
    By the reproducing property,
    \begin{align}
        f(x) - I_m f(x)
        = \biggl \langle f, \phi(x) - \sum_{b \in \{0,1\}^d } w_b(x) \phi(v_b) \biggr \rangle_k.
    \end{align}
    Applying the Cauchy-Schwarz inequality and $\norm{f}_k \leq B$ gives
    \begin{align}
        \abs{f(x) - I_mf(x)}^2
        \leq B^2  \biggl\lVert \phi(x) - \sum_{b \in \{0,1\}^d } w_b(x) \phi(v_b) \biggr\rVert_k^2. \label{eq:Imf_bound}
    \end{align}
    For the squared norm term in \cref{eq:Imf_bound}, we have
    \begin{align}
        & \biggl\lVert \phi(x) - \sum_{b \in \{0,1\}^d } w_b(x) \phi(v_b) \biggr\rVert_k^2 \notag \\
        &= k(x,x) + \sum_{b,c} w_b(x) w_c(x) k(v_b, v_c) - \sum_b w_b(x) (k(x,v_b) +k(v_b, x)) \\
        &=\int_{\mathbb{R}^d} \biggl[ 1 + \sum_{b,c} w_b(x)w_c(x) \exp(iw^T(v_b - v_c)) - \sum_{b} w_b(x) \exp(iw^T(v_b - x)) - \sum_{b} w_b(x)\exp(iw^T(x-v_b))  \biggr] p(w) \ dw \\
        &= \int_{\mathbb{R}^d} \biggl( 1 - \sum_b w_b(x) \exp(iw^T (v_b - x)) \biggr) \biggl( 1 - \sum_c w_c(x) \exp(-iw^T (v_c - x)) \biggr) p(w) \ dw \\
        &= \int_{\mathbb{R}^d} \biggl\lvert 1 - \sum_b w_b(x) \exp(iw^T (v_b - x)) \biggr\rvert^2 p(w) \ dw, \label{eq:norm_bound}
    \end{align}
    where the second equality applies \cref{eq:spectral_representation},
    and the last equality follows because the weights are real and $\overline{e^{it}} = e^{-it}$.
    Now define $D_x(w) := 1 - \sum_{b} w_b (x) \exp(i w^{T}(v_b -x))$. Since $\abs{\exp(ui)} = \abs{\cos u + i \sin u} =  1$ for real $u$, we have $\abs{D_x(w)} \leq 1 + \sum_b \abs{w_b(x)} \abs{\exp(iw^T(v_b - x))} = 1 + \sum_b w_b(x) = 2 $. For a sharper bound, we first show
    \begin{align}
        \abs{\exp(iu) - 1 - ui} \leq u^2 / 2. \label{eq:integral_bound}
    \end{align}
    Indeed, $\exp(ui) - 1 - ui = ui \int_0^1 \exp(iut) - 1 \ dt = -u^2 \int_{0}^1\int_{0}^t \exp(isu) \ dsdt $. Since $\abs{\exp(isu)} = 1$, we have $\abs{\exp(ui) - 1 - ui} \leq u^2 \int_{0}^1\int_{0}^t 1 \ dsdt = u^2/2 $.
    Then
    \begin{align}
        \abs{D_x(w)}
        & = \biggl\lvert \sum_b w_b(x)(\exp(iw^T(v_b -x) - 1 - iw^T(v_b-x))  \biggr\rvert \\
        & \leq \frac{1}{2} \sum_{b} w_b(x) (w^T (v_b - x))^2 \\
        & \leq \frac{1}{2} \sum_{b} w_b(x) \norm{w}_2^2 \norm{v_b - x}_2^2 \\
        & \leq \frac{d}{2m^2} \norm{w}_2^2,
    \end{align}
    where the first equality uses $\sum_{b}w_b(x)w^T(v_b - x) = 0 $ from \cref{eq:weight_facts},
    the first inequality applies \cref{eq:integral_bound},
    the second inequality applies Cauchy-Schwarz,
    and the last inequality uses $\norm{v_b - x}_2 \leq \sqrt{d}/m$ from \cref{eq:weight_facts}.
    Then
    \begin{align}
        \abs{D_x(w)}^2 \leq C_d\cdot \min \bigl\{1, m^{-4}  \norm{w}^4_2  \bigr\}, \label{eq:dxw_bound}
    \end{align}
    where $C_d > 0 $ is a constant that depends on $d$.
    Combining
    \cref{eq:norm_bound,eq:Imf_bound} gives
    \begin{align}
        \abs{f(x) - I_mf(x)}^2
        & \leq B^2  \int_{\mathbb{R}^d} \abs{D_x(w)}^2 p(w) \ dw \\
        & \leq AB^2C_d  \int_{\mathbb{R}^d} \min \{1, m^{-4} \norm{w}^4_2  \} (1 + \norm{w}_2^2)^{-(s + d/2)} \ dw \\
        & \leq AB^2C_d \abs{\mathbb{S}^{d-1}} \int_0^{\infty} \min \{1, m^{-4} r^4  \} (1 + r^2)^{-(s + d/2)}r^{d-1} \ dr,
    \end{align}
    where the second inequality uses \cref{eq:spectral_assumption} and \cref{eq:dxw_bound}, and the third inequality uses the polar-coordinate with $\abs{\mathbb{S}^{d-1}}$ being the unit sphere's surface area. Now we decompose the integral according to the value of $r$.
    When $0 \leq r \leq 1$, $m^{-4} r^4 \leq 1$ and $(1+r^2)^{-(s+d/2)} \leq 1 $. Then the integral on $(0,1)$ is upper bounded by $\int_{0}^1 m^{-4} r^{d+3} dr = \frac{m^{-4}}{d+4} \leq m^{-4} $. For $1\leq r \leq m$, $(1+r^2)^{-(s+d/2)} \leq r^{2\cdot-(s+d/2)} $, and the integrand is at most $m^{-4} r^{3-2s} $. For $r \geq m$, the integral is at most $(1+r^2)^{-(s+d/2)} r^{d-1}\leq r^{-2s-d+d-1} = r^{-2s-1} $. Thus
    \begin{align}
        \abs{f(x) - I_mf(x)}^2
        & \leq AB^2C_d\abs{\mathbb{S}^{d-1}} \cdot \biggl[ m^{-4} + m^{-4}\int_{1}^{m} r^{3-2s} \ dr + \int_{m}^{\infty} r^{-2s-1} \biggr]. \label{eq:sum_integral}
    \end{align}

    For $0 < s < 2$, $m^{-4} \int_{1}^{m} r^{3 - 2s} \ dr = \frac{m^{-2s} - m^{-4}}{4 - 2s} \leq \frac{m^{-2s}}{4 - 2s}  $, and $\int_{m}^{\infty} r^{-2s-1} = \frac{m^{-2s}}{2s} $. Also $m^{-4} \leq m^{-2s}$. Hence,
    \begin{align}
        \abs{f(x) - I_mf(x)}^2
        \leq CB^2 m^{-2s}.
    \end{align}
    Taking the square root and combining with \cref{eq:approximation_error} proves the result.
\end{proof}

For a kernel satisfying \cref{eq:spectral_assumption}, applying MOSS on $\tilde{\mathcal{X}}$ gives
\begin{align}
    R^*_T 
    & \leq 39 \sigma\sqrt{\abs{\tilde{\mathcal{X}}}T} + 2B\abs{\tilde{\mathcal{X}}} + T \epsilon(\tilde{\mathcal{X}}) \\
    & \leq 39 \sigma\sqrt{T(m+1)^d} + 2B(m+1)^d + T CBm^{-s}.
\end{align}
Choosing $m = \lfloor T^{1/(2s+d)} \rfloor $ gives $R^*_T = O(T^{(s+d)/(2s+d)} )$.
We now show that each kernel satisfies \cref{eq:spectral_assumption}.

\textbf{Mat\'ern kernel.} \citep[Corollary 7]{Janz26} gives $p_\nu(w) = \Theta \bigl( (1 + \norm{w}^2_2)^{-(\nu + d/2)} \bigr) $, which satisfies \cref{eq:spectral_assumption} with $s = \nu$. $s \in (0,2)$ implies $\nu \in (0,2)$, in which case combining $R^*_T = O(T^{(s+d)/(2s+d)} )$ and $\gamma_T = \Theta(T^{d/(2\nu+d)})$ in \Cref{tab:gamma_T_rates} gives $R^*_T = O(T^{(\nu+d)/(2\nu+d)})= O(\sqrt{T\gamma_T})$.

\textbf{$\gamma$-exponential kernel.}
\citep[Proposition 1]{Lee25} gives $p(w) = \Theta \bigl( (1 + \norm{w}_2)^{-(\gamma + d)}\bigr) $. This implies $p(w) = O \bigl((1 + \norm{w}_2^2)^{-(\gamma/2 + d/2)}\bigr) $, such that \cref{eq:spectral_assumption} is satisfied with $s = \gamma/2$.\footnote{\citep[Proposition 1]{Lee25} gives asymptotic bounds. Since $p(\cdot)$ is bounded, we can enlarge $A$ in \cref{eq:spectral_assumption} such that \cref{eq:spectral_assumption} holds for every $w \in \mathbb{R}^d$. This procedure is similarly applied to the piecewise-polynomial kernel.}
Specifically, $\gamma \in (0,2)$ is covered by $s\in (0,2)$. Then, combining $R^*_T = O(T^{(s+d)/(2s+d)} )$ and $\gamma_T = \Theta(T^{d/(\gamma+d)})$ in \Cref{tab:gamma_T_rates} gives $R^*_T = O\bigl(T^{(\gamma/2+d)/(\gamma+d)}\bigr) = O(\sqrt{T\gamma_T}) $.

\textbf{Piecewise-polynomial kernel.}
\citep[Proposition 1]{Lee25} gives $p(w) = \Theta \bigl( (1 + \norm{w}_2)^{-(2q + 1 +d)} \bigr) $, such that $p(w) = O\bigl( (1 + \norm{w}_2^2)^{-(q + 1/2 + d/2)} \bigr) $. Then \cref{eq:spectral_assumption} is satisfied with
$s = q + \frac{1}{2}$, and $s < 2$ implies $q = 0, 1$, which combined with Appendix \ref{app:kernel} means $q= 0, d\geq 3$ or $q = 1$. In this case, combining $R^*_T = O(T^{(s+d)/(2s+d)} )$ and $\gamma_T = \Theta(T^{d/(2q + 1+d)})$ in \Cref{tab:gamma_T_rates} gives $R^*_T =  O\bigl(T^{(q + \frac{1}{2}+d)/(2q + 1 +d)}\bigr) = O(\sqrt{T\gamma_T})$.

\section{Upper Bound on the Maximum Information Gain}
\label{app:upper_bound_on_gamma_T}

Another consequence of \Cref{cor:lower_bound_copmact_X_gamma_T} is that an upper bound on $R^*_T$ implies an upper bound on $\gamma_T$. 
Here we use $R^*_t$ to denote minimax regret at horizon $t \in \mathbb{Z}_{+}$, and the following result considers $t \leq T$.

\begin{corollary}
    \label{cor:gamma_T_upper_bound}
    Consider the setup of \Cref{sec:setup_compact} where $\mathcal{X}$ is compact. 
    Suppose there exists a sequence $(U_t)_{t\in \mathbb{Z}_{+}}$ such that $R^*_t \leq U_t$ for every $t \in \mathbb{Z}_{+}$.
    Then,
    \begin{align}
        \gamma_T
        \leq \gamma_1 + \frac{1 + \lambda^{-1}}{c_1^2} \sum_{t=2}^T \frac{U_t^2}{t^2},
    \end{align}
    where $c_1 = \min \bigl\{ \frac{\sigma}{32}, \frac{\sigma \Delta_k}{16}  \bigr\}$. 
\end{corollary}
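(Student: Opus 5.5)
The approach is to telescope $\gamma_T - \gamma_1 = \sum_{t=2}^T (\gamma_t - \gamma_{t-1})$ and bound each increment with \Cref{lem:lower_bound_diff_gamma}, applied at horizon $t$ with $m = t-1$. For every $t \geq 2$ this choice is admissible, since $1 \leq t-1 < t$. The harmonic sum in the lemma then has the single term $\sum_{s=t}^{t} s^{-1} = 1/t$, so the lemma reads
\begin{align*}
    R^*_t \geq c_1 \sqrt{\frac{t(\gamma_t - \gamma_{t-1})}{(1+\lambda^{-1})/t}} = c_1\, t \sqrt{\frac{\gamma_t - \gamma_{t-1}}{1+\lambda^{-1}}}.
\end{align*}

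Next, combine this with the hypothesis $R^*_t \leq U_t$, square, and rearrange. This gives
\begin{align*}
    \gamma_t - \gamma_{t-1} \leq \frac{(1+\lambda^{-1}) U_t^2}{c_1^2 t^2},
\end{align*}
which holds for every $2 \leq t \leq T$. The constant $c_1 = \min\{\sigma/32, \sigma\Delta_k/16\}$ is the same at every horizon, because it depends only on $\sigma$ and $\Delta_k$, neither of which depends on $t$. Summing these inequalities over $t = 2, \dots, T$ and adding $\gamma_1$ gives exactly the claimed bound.

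The main point to check is that squaring is legitimate. If $\gamma_t - \gamma_{t-1} \geq 0$, both sides of the lemma's inequality are nonnegative and squaring is fine. Monotonicity $\gamma_t \geq \gamma_{t-1}$ does hold: appending a point to $x_{1:t-1}$ does not decrease the log-determinant, by \Cref{lem:det_ratio_2} together with \Cref{lem:same_eigenvalues}. Even without this, a negative increment satisfies the bound trivially. One should also note that the lemma requires $t \geq 2$, which is why the $\gamma_1$ term is kept separately. Apart from this, the argument is a direct substitution.
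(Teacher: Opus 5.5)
Your proposal is correct and follows the paper's proof essentially verbatim: apply \Cref{lem:lower_bound_diff_gamma} at horizon $t$ with $m=t-1$ so that the harmonic sum collapses to $1/t$, combine with $R^*_t \le U_t$, square and rearrange, then sum over $t=2,\dots,T$ (the case $T=1$ being trivial). Your extra check that $\gamma_t \ge \gamma_{t-1}$, which makes the square root in the lemma meaningful and justifies the squaring, is a detail the paper leaves implicit.
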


\begin{proof}
    If $T =1$, then the result holds with equality. Then, we assume $T \geq 2$.
    Applying \Cref{lem:lower_bound_diff_gamma} with $T = t$ and $m = t-1$ gives
    \begin{align}
        R^*_t \geq  c_1 \cdot t\sqrt{ \frac{(\gamma_t - \gamma_{t-1}) }{(1+\lambda^{-1})}},
    \end{align}
    which implies
    \begin{align}
        \gamma_t - \gamma_{t-1}
        \leq \frac{1 + \lambda^{-1}}{c_1^2 } \cdot \frac{(R^*_t)^2}{t^2}
        \leq \frac{1 + \lambda^{-1}}{c_1^2 } \cdot \frac{U_t^2}{t^2}.
    \end{align}
    Summing from $t = 2$ to $T$ proves the result.
\end{proof}

\Cref{cor:gamma_T_upper_bound} does not improve \Cref{tab:gamma_T_rates}, where the upper bounds and lower bounds of $\gamma_T$ already match. 
On the other hand, \Cref{cor:gamma_T_upper_bound} provides an alternative approach to obtain upper bounds on $\gamma_T$. 
As an example, we consider the $\gamma$-exponential kernel and $\mathcal{X} = [0,1]^d$. 
Then, the proof of \Cref{lem:remove_log_MOSS} gives $R^*_T = O(T^{(\gamma+2d)/(2\gamma + 2d)})$, where the proof is independent of $\gamma_T$. 
Observe that
\begin{align}
    \sum_{t=2}^T t^{2(\gamma+2d)/(2\gamma + 2d)}/t^2
    = \sum_{t=2}^T t^{-\gamma/(\gamma+d)}
    \leq \int_{1}^T t^{-\gamma/(\gamma+d)} \ dt
    = \frac{\gamma + d}{d}  (T^{d/(\gamma+d)} - 1).
\end{align}
Define $U_t := C t^{(\gamma+2d)/(2\gamma + 2d)} $, where $C > 0$ is some constant chosen large enough such that $R^*_t \leq U_t$ for all $t \in \mathbb{Z}_{+}$.
Applying \Cref{cor:gamma_T_upper_bound} with the sequence $(U_t)_{t\in\mathbb{Z}_{+}}$ gives 
$\gamma_T = O(T^{d/(\gamma + d)})$, which matches the $\Theta(T^{d/(\gamma+d)})$ rate of $\gamma_T$ in \Cref{tab:gamma_T_rates}.
More generally, \Cref{cor:gamma_T_upper_bound} could be useful for kernels where the upper bounds of $\gamma_T$ are not established or hard to derive, but upper bounds on $R^*_T$ are available.
On the other hand, the immediate upper bound $\gamma_T = O(U_T^2 \log (T) / T)$ only gives $\gamma_T = O(T^{d/(\gamma+d)}\log T)$, which has an extra $\log T$ factor.

\end{document}